\newcommand{\sdcutlr}{LR-SDCut\xspace}
\newcommand{\bd}{\mathbf d}
\newcommand{\bg}{\mathbf g}
\newcommand{\bu}{\mathbf u}
\newcommand{\bx}{\mathbf x}
\newcommand{\bh}{\mathbf h}
\newcommand{\by}{\mathbf y}
\newcommand{\bc}{\mathbf c}
\newcommand{\bb}{\mathbf b}
\newcommand{\bp}{\mathbf p}
\newcommand{\bbf}{\mathbf f}
\newcommand{\bq}{\mathbf q}
\newcommand{\bX}{\mathbf X}
\newcommand{\bP}{\mathbf P}
\newcommand{\bW}{\mathbf W}
\newcommand{\bA}{\mathbf A}
\newcommand{\bI}{\mathbf I}
\newcommand{\bK}{\mathbf K}
\newcommand{\bH}{\mathbf H}
\newcommand{\bC}{\mathbf C}
\newcommand{\bB}{\mathbf B}
\newcommand{\bZ}{\mathbf Z}
\newcommand{\bY}{\mathbf Y}
\newcommand{\bU}{\mathbf U}
\newcommand{\bQ}{\mathbf Q}
\newcommand{\fE}{\mathrm{E}}
\newcommand{\fP}{\mathrm{P}}
\newcommand{\fk}{\mathrm{k}}
\newcommand{\frank}{\mathrm{rank}}
\newcommand{\fdiag}{\mathrm{diag}}
\newcommand{\setN}{\mathcal{N}}
\newcommand{\setL}{\mathcal{L}}
\newcommand{\setS}{\mathcal{S}}
\newcommand{\real}{\mathbb{R}}
\newcommand{\blambda}{\boldsymbol \lambda}
\newcommand{\bGamma}{\boldsymbol \Gamma}
\newcommand{\bPhi}{\boldsymbol \Phi}
\newcommand{\bPsi}{\boldsymbol \Psi}
\newcommand{\bOmega}{\boldsymbol \Omega}
\newcommand{\sst}{\mathrm{s.t.}}
\def\T{{\!\top}}
\def\psd{p.s.d\onedot}
\newtheorem{theorem}{Theorem}
\newenvironment{theorem*}{\par\noindent{\bf Theorem\ }}{\hfill\\[2mm]}
\newtheorem{proposition}[theorem]{Proposition}
\newenvironment{corollary*}{\par\noindent{\bf Corollary\ }}{\hfill\\[2mm]}
\def\NYS{Nystr{\"o}m\xspace}
\DeclareRobustCommand\onedot{\futurelet\@let@token\@onedot}
\def\@onedot{\ifx\@let@token.\else.\null\fi\xspace}
\def\eg{\emph{e.g}\onedot} 
\def\ie{\emph{i.e}\onedot}
\def\wrt{w.r.t\onedot} 
\def\etal{\emph{et al}\onedot}
\begin{document}

\title{Efficient SDP Inference for Fully-connected CRFs Based on Low-rank Decomposition}

\author{Peng Wang, Chunhua Shen, Anton van den Hengel
\thanks{The authors are with School of Computer Science,
University of Adelaide, Australia;
and ARC Centre of Excellence for Robotic Vision.

    Correspondence should be addressed to C. Shen (chunhua.shen@adelaide.edu.au).
}
\thanks{A conference version of this work is appearing in Proc. IEEE Conference on Computer Vision and Pattern Recognition, 2015.
    }

}

\markboth{Manuscript}%
{Wang \MakeLowercase{\textit{et al.}}:
    Efficient SDP inference
}

\maketitle

\begin{abstract}

Conditional Random Fields (CRF) have been widely used in a variety of computer vision tasks.
Conventional CRFs typically define edges on neighboring image pixels, resulting in
a sparse graph such that efficient inference can be performed. However,
these CRFs fail to model long-range contextual relationships. Fully-connected CRFs have thus been
proposed. While there are efficient approximate inference methods for such CRFs, usually they are sensitive to initialization
and make strong assumptions.
In this work, we develop an efficient, yet general algorithm for inference on fully-connected CRFs.  The algorithm
is based on a scalable SDP algorithm and the low-rank approximation of the similarity/kernel matrix.
The core of the proposed algorithm is a tailored quasi-Newton method that takes advantage of the low-rank matrix approximation
when solving the specialized SDP dual problem.
Experiments  demonstrate that our method can be applied on fully-connected CRFs that
cannot be solved previously, such as pixel-level image co-segmentation.

\end{abstract}

\section{Introduction}

Semantic image segmentation or pixel labeling is a key problem in computer vision. Given
an image, the task is to label every pixel against one or multiple pre-defined object
categories. It is clear that to achieve satisfactory results, one must exploit
contextual information. Scalability and speed of the algorithm are also of concerns,
if we are to design an algorithm applicable to high-resolution images.

Conditional random fields (CRFs) have been one of the most successful approaches to semantic
pixel labeling, which solves the problem as maximum a posteriori (MAP) estimation.
Standard CRFs contain unary potentials that are typically defined on
low-level features of local texture, color, and locations.
Edge potentials, which are typically defined on 4- or 8-neighboring pixels,
consist of smoothness terms that penalize label disagreement between similar
pixels, and terms that model contextual relationships between different classes.
Although these CRF models have achieved encouraging results for segmentation, they fail
to capture long-range contextual information.

In the literature, fully-connected CRFs have been proposed for this purpose.
The main challenge for inference on fully-connected CRFs stems from the computational cost.
A fully-connected CRF over $N$ image pixels has $ N^2 $ edges.
Even for a small images with a few thousand pixels, the number of edges can be a few million.
Although there have been a variety of methods for MAP estimation~\cite{kappes2013comparative,szeliski2008comparative,kumar2009analysis,kolmogorov2006convergent,
kappes2012bundle,rother2007optimizing,boykov2001fast,kolmogorov2004energy,felzenszwalb2006efficient,wainwright2005map},
they are usually computationally infeasible for such cases.
The authors of \cite{koltun2011efficient,krahenbuhl2013parameter} have proffered
an efficient mean field approximation method for MAP inference in multi-label CRF models with fully connected pairwise terms.
A filter-based method is used to accelerate the computation.
The assumption is that the pairwise terms are in the form of a weighted mixture of Gaussian kernels
such that fast bilateral filtering can be applied.
For a special type of fully-connected CRF, in which the edge potentials are defined to
capture the spatial relationships among different objects, and only depend on their
relative positions
(that is they are
spatially stationary), an efficient inference algorithm was developed
in \cite{zhang2012efficient}.
The method proposed in~\cite{campbellfully} can be applied on generalized RBF kernels,
instead of the original Gaussian kernels.
Note that there is still a strong assumption in~\cite{campbellfully}, which limits the practical
value of this method.

In general, semidefinite programming (SDP) relaxation provides accurate solutions for MAP estimation problems, but it is ususally computationally inefficient
(see \cite{kumar2009analysis} for the comparison of different relaxation methods).
Standard interior-point methods require
$\mathcal{O}(m^3+mn^3+m^2n^2)$ flops
to solve a generic SDP problem in worst-case, where $n$ and $m$ are the semidefinite matrix dimension and the number of constraints respectively.
Recently, several scalable SDP methods have been proposed for MAP estimation.
Huang \etal~\cite{guibasscalable} proposed an alternating direction methods of multipliers method (ADMM) to solve large-scale MAP estimation problems.
Wang \etal~\cite{peng2013cpvr} presented an efficient dual approach (refer to as SDCut),
which can also be applied for MAP estimation.
However, their methods still cannot be applied directly to large-scale fully-connected CRFs.
There are two key contributions in this work:

\noindent
($ i $)
An efficient low-rank SDP approach (based on SDCut) for MAP estimation
is proposed for MAP estimation in large-scale fully-connected CRFs.
Several significant improvements over SDCut are presented, which makes SDCut much more scalable.
The proposed SDP method also overcomes a number of limitations of mean field approximation, which provide more stable and accurate solutions.

\noindent
($ ii $)
Low-rank approximation methods for SPSD kernels (whose kernel matrix is symmetric positive semidefinite)
is seamlessly integrated into the proposed SDP method, and used to
accelerate the most computational expensive part of the proposed SDP method.
The use of low-rank approximation relaxes the limitation on the pairwise term from being
(a mixture of) Gaussian kernels to all symmetric positive-semidefinite kernels.
The low-rank approximation method can be also used to replace the filter-based method in \cite{koltun2011efficient} for mean field approximation.

Thus our method is much more general and scalable, which has a much broader range of applications.
The proposed SDP approach can handle fully-connected CRFs of $\#$states $\times$ $\#$variables up to $10^6$.
In particular, we show that on an image co-segmentation application, the fast method of \cite{koltun2011efficient} is not applicable
while our method achieves superior segmentation accuracy.
To our knowledge, our method is the first pixel-level co-segmentation method.
All previous co-segmentation methods have
relied on super-pixel pre-processing in order to make the computation tractable.
Wang \etal \cite{wang2013relaxations} and Frostig \etal \cite{frostig2014simple}
also proposed efficient approaches which find near-optimal solutions to SDP relaxation to MAP problems.
The main difference is that
their methods solve (generally {\em nonconvex})
quadratically constrained quadratic programs by projected gradient descent, while
ours uses quasi-Newton methods to solve a {\em convex} semidefinite least-square problem.
Notation is listed in Table~\ref{tab:notation}.
\begin{table}[t]
  \centering
  \footnotesize
  \begin{tabular}{rl}%
  \hline
     $\bX$                            &A matrix (bold upper-case letters). \\
     $\bx$                            &A column vector (bold lower-case letters). \\
     $\mathcal{S}^n$                  &The space of $n \times n$ symmetric matrices. \\
     $\mathcal{S}^n_+$                &The cone of $n \times n$ symmetric positive semidefinite (SPSD) matrices. \\
     $\mathbb{R}^n$                   &The space of real-valued $n \times 1$ vectors. \\
     $\mathbb{R}^n_+,\mathbb{R}^n_-$  &The non-negative and non-positive orthants of $\mathbb{R}^n$. \\
     $\bI_n$                          &The $n \times n$ identity matrix. \\
     $\mathbf{0}$                     &An all-zero vector with proper dimension. \\
     $\mathbf{1}$                     &An all-one vector with proper dimension. \\
     $\leq, \geq$                     &Inequality between scalars or element-wise inequality between column vectors. \\
     $\mathrm{diag}(\bX)$             &The vector of the diagonal elements of the input matrix $\bX$. \\
     $\mathrm{Diag}(\bx)$             &The $n \times n$ diagonal matrix whose main diagonal vector is the input vector $\bx$.\\
     $\mathrm{trace}(\cdot)$          &The trace of a matrix. \\
     $\mathrm{rank}(\cdot)$           &The rank of a matrix. \\
     $\delta(cond)$                   &The indicator function which returns $1$ if $cond$ is ture and $0$ otherwise. \\
     $\lVert \cdot \rVert_F$          & Frobenius-norm of a matrix. \\
     $\langle \cdot, \cdot \rangle$   & Inner product of two matrices. \\
     $\circ$                          & Hadamard product of two matrices. \\
     $\otimes$                        & Kronecker product of two matrices. \\
     $\nabla \mathrm{f}(\cdot)$       & The first-order derivative of function $\mathrm{f}(\cdot)$. \\
     $\nabla^2 \mathrm{f}(\cdot)$& The second-order derivative of function $\mathrm{f}(\cdot)$. \\
     $n!$                             & The factorial of a non-negative integer $n$. \\
  \hline
  \end{tabular}
\vspace{0.1cm}
\caption{Notation.}
\label{tab:notation}
\end{table}

\section{Preliminaries}

\subsection{Fully-connected Pairwise CRFs with SPSD kernels}

Consider a random field over $N$ random variables $\bx = [x_1, x_2, \dots, x_N]^\T$
conditioned on the observation $\bI$.
Each variable can be assigned  a label from the set $\mathcal{L} := \{ 1,\dots, L \}$.
The
energy function of a
CRF $(\bI, \bx)$ can be expressed by the following Gibbs distribution:
\begin{align}
\fP(\bx|\bI) := \frac{1}{Z(\bI)} \exp(-\fE(\bx|\bI)),
\end{align}
where
$\fE(\bx|\bI)$ denotes the Gibbs energy function \wrt a labelling $\bx \in \setL^N$,
and
$Z(\bI) := \sum_{\bx \in \mathcal{L}^N} \exp(-\fE(\bx|\bI))$ is the partition function.
In the rest of the paper,
the conditioning \wrt $\bI$ is dropped for notational simplicity.

Assuming $\fE(\bx)$ only contains unary and pairwise terms, the MAP inference problem for the CRF $(\bI, \bx)$
is equivalent to the following energy minimization problem:
\begin{align}
\min_{\bx \in \setL^N} \fE (\bx) := \sum_{i \in \setN} \psi_i (x_i) + \sum_{i,j \in \setN, i < j} \psi_{i,j} (x_i, x_j),
\label{eq:min_energy}
\end{align}
where $\setN := \{1,\dots,N\}$, and
$\psi_i: \setL \rightarrow \mathbb{R}, \forall i \in \setN$
and $\psi_{i,j}: \setL^2 \rightarrow \mathbb{R}, \forall i,j \in \setN, i \neq j$
correspond to the unary and pairwise potentials respectively.

The pairwise potentials considered in this paper can be written as:
\begin{align}
\label{eq:pairwise_pot_gauss}
\psi_{i,j}(x_i,x_j) := \mu(x_i,x_j) \sum_{m=1}^{M} w^{(m)} \fk^{(m)} (\bbf_i, \bbf_j),
\end{align}
where $\bbf_i, \bbf_j \in \mathbb{R}^D$ indicate $D$-dimentional feature vectors corresponding to variables $x_i$ and $x_j$ respectively.
$\fk^{(m)}: \mathbb{R}^D \times \mathbb{R}^D \rightarrow \mathbb{R}$ denotes the $m$-th SPSD kernel and $w^{(m)} \in \mathbb{R}_+$ is the associated linear combination weight.
Following the term in \cite{koltun2011efficient},
$\mu: \setL^2 \rightarrow [0,1]$ is used to represent a symmetric label compatibility
function,
which has the properties that
$\mu(l,l') = \mu(l,l'), \forall l,l' \in \setL$ and $\mu(l,l) = 0, \forall l \in \setL$.
The label compatibility function penalizes similar pixels being assigned with different/incompatible labels.
A simple label compatibility function would be given by Potts model, that is $\mu(l,l') = \delta(l \neq l')$.
The form of pairwise potential in \eqref{eq:pairwise_pot_gauss} is very general,
and can be used to represent many potentials of practical interest.

Mean field approximation is used in \cite{koltun2011efficient} for solving problem \eqref{eq:min_energy},
which is considered to be state-of-the-art.
A filter-based method is used to accelerate the computation of message passing step (the product of kernel matrix and a column vector).
In the following two sections, we will briefly revisit mean field approximation and the filter-based method, especially their respective limitations.

\subsection{Mean Field Approximation}

In mean field approximation, a variational distribution ${Q}(\bx)$ is introduced to approximate the Gibbs distribution $P(\bx)$,
in which the marginals for each variable in CRF, $\{ Q_i(\cdot) \}_{i \in \setN}$, are supposed to be independent to each other such that $Q(\bx)$ can be completely factorized as
\begin{align}
\label{eq:fullfact}
{Q}(\bx) = \Pi_{i \in \setN} {Q}_i (x_i).
\end{align}

Over the distribution $Q(\bx)$, mean field approximation minimizes the Kullback–Leibler (KL)-divergence $\mathrm{D}({Q}\|P)$:
\begin{subequations}
	\begin{align}
	\mathrm{D} (Q \| P)
	&= \sum_{\bx \in \setL^N} Q(\bx) \log \frac{Q(\bx)}{P(\bx)} \\
	&= \sum_{\bx \in \setL^N} Q(\bx) \log Q(\bx) -  \sum_{\bx \in \setL^N} Q(\bx) \log \frac{1}{Z} \exp(- \mathrm{E}(\bx)) \\
	&= \sum_{\bx \in \setL^N} Q(\bx) \log Q(\bx) + \sum_{\bx \in \setL^N} Q(\bx) \mathrm{E}(\bx)  +  \log Z
	\end{align}
\end{subequations}
Recall the definition of the energy function for fully-connected pairwise  CRFs in \eqref{eq:min_energy} and the complete factorization
\eqref{eq:fullfact}, we further have that
\begin{align}
\mathrm{D} (Q \| P)
&= \sum_{i \in \setN} \sum_{x_i \in \setL} Q_i(x_i) \log Q_i(x_i)  + \sum_{i \in \setN} \sum_{x_i \in \setL} Q_i (x_i) \psi_i (x_i)  \notag \\
&\quad + \sum_{i,j \in \setN, i < j} \sum_{x_i, x_j \in \setL} Q_i(x_i) Q_j(x_j) \psi_{i,j} (x_i, x_j)
+ \log Z, \label{eq:kl_end}
\end{align}

To optimize over the $i$-th marginal, the KL-divergence $\mathrm{D} (Q \| P)$
is viewed as a function of $Q_i(\cdot)$ while keeping other marginals fixed:
\begin{align}
\mathrm{D} (Q \| P)
= \sum_{x_i \in \setL} Q_i(x_i) \log Q_i(x_i)
+ \sum_{x_i \in \setL} Q_i(x_i) \bigg(\psi_i (x_i) +   \sum_{j \in \setN, j \neq i} \sum_{x_j \in \setL} Q_j(x_j) \psi_{i,j} (x_i, x_j) \bigg)
+ \mbox{const.}
\end{align}
It is easy to find out that minimizing $\mathrm{D} (Q \| P) $ \wrt $Q_i(\cdot)$ gives
the following close-formed solutions, namely {\em mean field equations}:
\begin{align}
\label{eq:mfequ}
Q_i(x_i) = \frac{1}{Z_i} \exp \bigg(  - \psi_i (x_i) +   \sum_{j \in \setN, j \neq i} \sum_{x_j \in \setL} Q_j(x_j) \psi_{i,j} (x_i, x_j)  \bigg), \,\, \forall i \in \setN,
\end{align}
where
$Z_i$ is the local normalization factor such that $\sum_{x_i \in \setL} Q_i(x_i) = 1$.
Updating the above mean field equations iteratively results in a monotonically decreased
$\mathrm{D} (Q \| P)$.

One significant limitation of mean field approximation is that it
may converge to
one of
potentially
many local optima, because the variational problem to be optimized
may be non-convex.
A consequence of this non-convexity is that mean field is often sensitive to the initialization of ${Q}$.

\subsection{Filter-based Matrix-vector Product}

Recall the pairwise terms defined by SPSD kernels as in \eqref{eq:pairwise_pot_gauss},
the mean field equations \eqref{eq:mfequ} can be further expressed as:
\begin{align}
\mathrm{Q}_i(l)  = \frac{1}{Z_i} \exp \Big( -\psi_{i}(l) - \sum_{l' \in \setL} \mu(l,l')
\sum_{m=1}^{M} w^{(m)} \sum_{j \in \setN, j \neq i} \mathrm{k}^{(m)} (\bbf_i, \bbf_j) \mathrm{Q}_i(l')
\Big).
\end{align}

The computational bottleneck in updating the above equation can be expressed as
the matrix-vector products $\bK^{(m)} \bq, \, m \!=\! 1, \cdots, M$, where
$\bK^{(m)} \!\in\! \mathcal{S}^N_+$ denotes the kernel matrix corresponding to $\mathrm{k}^{(m)}$, that is $K^{(m)}_{i,j} \!=\! \mathrm{k}^{(m)} (\bbf_i, \bbf_j)$, and
$\bq \!\in\! \mathbb{R}^N$ denotes a column vector made up by $\mathrm{Q}$, that is $\bq \!=\! [\mathrm{Q}_1(l), \cdots, \mathrm{Q}_N(l)]^\T$, $\forall l \!\in\! \setL$.
The naive implementation of the matrix-vector product needs $\mathcal{O}(N^2)$ time.
Kr{\"a}henb{\"u}hl and Koltun~\cite{koltun2011efficient} proposed to use a filter-based approach to compute the matrix-vector product in $\mathcal{O}(N)$ time,
which will be discussed in the next section.

Filter-based methods~\cite{adams2010fast}
have been used in \cite{koltun2011efficient} to speed up the
above
matrix-vector product.
The method in \cite{koltun2011efficient} is based on the assumption that pairwise potentials are Gaussian kernels:
\begin{align}
\mathrm{k}^{(m)}(\bbf_i,\bbf_j) = \exp \left( -\frac{1}{2} (\bbf_i - \bbf_j)^{\T} {\boldsymbol \Lambda}^{(m)} (\bbf_i - \bbf_j) \right),
\end{align}
where
${\boldsymbol \Lambda}^{(m)} \in \setS^D_+$, $m = 1, 2, \cdots, M$.
The product of a Gaussian kernel matrix and an arbitrary column vector
can be expressed
as a Gaussian convolution \wrt ${\boldsymbol \Lambda}^{(m)}$ in feature space (see \cite{adams2010fast,koltun2011efficient} for more details).
From the viewpoint of signal processing, the Gaussian convolution can be
seen
 as
a low-pass filter over the feature space.
Then the convolution result
can be recovered
from a set of samples whose spacing is proportional to the standard deviation
of the filter.
A number of filtering methods~\cite{paris2006fast,adams2010fast}
can be used to compute the convolution efficiently,
in which the computational complexity and memory requirement
are both linear in $N$.

Filter-based approaches have a number of limitations, however:

\noindent
($ i $)
In general, the pairwise potentials are limited to Gaussian kernels over a Euclidean feature space.

\noindent
($ ii $)
The feature dimension cannot be very high.
   The bilateral filtering method in \cite{paris2006fast}
   has an exponential complexity in the dimension $D$.
   The time complexity of permutohedral lattice~\cite{adams2010fast}
   is quadratic in $D$, which
   works well only when the input dimension is $5 \sim 20$.
   Beacause it does not create new lattice points during the blur step,
   accuracy penalty is accumulated with the growth of feature dimension.

\subsection{Semidefinite Programming and SDCut Algorithms}

Semidefinite programming (SDP) is a class of convex optimization problems,
which minimize/maximize a linear objective function over the intersection of the cone of
positive semidefinite matrices with an affine space.
A general SDP problem can be expressed in the following form:
\begin{subequations}
	\label{eq:backgd_sdp1}
	\begin{align}
	\min_{\bY \in \mathcal{S}^{n}_+} &\quad   \mathrm{p}(\bY) := \langle \bY, \bA \rangle, \\
	\sst &\quad \langle \bY, \bB_i \rangle = b_i, \, i = 1, 2, \cdots, q, \label{eq:backgd_sdp1_cons}
	\end{align}
\end{subequations}
where $\bA, \{\bB_i\}_{i=1,\cdots,q} \in \mathcal{S}^n$, $\bb \in \mathbb{R}^n$,
$q$ and $n$ are positive integers denoting the number of linear constraints and the dimension of matrix variables respectively.

SDP relaxation is widely incorporated to develop approximation algorithms for binary quadratic program (BQP),
which optimizes a quadratic objective function over binary variables $\by \in \{ 0,1\}^n$.
SDP-based approximation algorithms typically solve the BQP in the following two steps: \\
\noindent
($i$)
Lift the binary variable $\by$ to a positive semidefinite matrix variable $\bY := \by \by^\T$ and
solve the relaxed SDP problem over $\bY$ to a certain accuracy. \\
\noindent
($ii$)
Round the SDP solution to obtain an approximated solution to the original BQP.

SDP problems can be solved by standard interior-point methods,
which can be found in a number of optimization toolboxes,
such as SeDuMi~\cite{Sturm98usingsedumi}, SDPT3~\cite{Toh99sdpt3} and MOSEK~\cite{mosek}.
Although accurate and stable, interior-point methods scale poorly to
the matrix dimension $n$ and the number of linear constraints $q$.
The computational complexity of interior-point methods for SDP problems
is $\mathcal{O}(m^3+mn^3+m^2n^2)$ at each iteration in worst-case,
and the associated memory requirement is $\mathcal{O}(m^2+n^2)$.

The method proposed in \cite{peng2013cpvr}, denoted as SDCut,
can be used to solve the SDP problem \eqref{eq:backgd_sdp1} approximately yet efficiently.
SDCut solves the following approximation of \eqref{eq:backgd_sdp1} using quasi-Newton
\begin{subequations}
\label{eq:sdcut_sdp1}
\begin{align}
\min_{\bY \in \mathcal{S}^n_+} &\quad  \mathrm{p}_{\gamma}(\bY) := \langle \bY,\bA \rangle + \frac{1}{2\gamma} ( \lVert \bY \rVert_F^2 - \eta^2), \\
\sst &\quad \langle \bY, \bB_i \rangle = b_i, \, i = 1, 2, \cdots, q, \label{eq:sdp_cons}
\end{align}
\end{subequations}
where $\gamma > 0$ is a penalty parameter.
Assuming that the constraints \eqref{eq:sdp_cons} encode $\mathrm{trace}(\bY) = \eta$, where $\eta$ is a constant defined by the problem itself,
the above approximation have the following properties:

\begin{proposition}
\label{thm:prop1}
The following results holds:
($ i $)
$\forall \ \epsilon > 0$, $\exists \ \gamma >0$
such that
$ | \mathrm{p}(\bY^\star) - \mathrm{p}(\bY_{\gamma}^\star) | \leq \epsilon$,
where $\bY^\star$ denotes the optima for \eqref{eq:backgd_sdp1} and $\bY^\star_\gamma$
denotes that for \eqref{eq:sdcut_sdp1} \wrt $\gamma$.
($ ii $) For $\gamma_2 > \gamma_1 > 0$, we have $\mathrm{p}(\bY^\star_{\gamma_1}) \geq \mathrm{p}(\bY^\star_{\gamma_2})$,
where $\bY^\star_{\gamma_1}$ and $\bY^\star_{\gamma_2}$ are the optimal solutions of \eqref{eq:sdcut_sdp1} for $\gamma_1$ and $\gamma_2$ respectively.
\end{proposition}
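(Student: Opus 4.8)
\emph{Proof plan.} Everything rests on the common feasible set $\mathcal{F} := \{\bY \in \mathcal{S}^n_+ : \langle \bY,\bB_i\rangle = b_i,\ i=1,\dots,q\}$, which by hypothesis forces $\mathrm{trace}(\bY)=\eta$ for every $\bY\in\mathcal{F}$. Two elementary observations drive the whole argument. First, if $\bY\psdd\mathbf{0}$ has eigenvalues $\lambda_1,\dots,\lambda_n\ge 0$, then $\lVert\bY\rVert_F^2 = \sum_k\lambda_k^2 \le \big(\sum_k\lambda_k\big)^2 = \mathrm{trace}(\bY)^2$; hence on $\mathcal{F}$ we have $0 \le \eta^2 - \lVert\bY\rVert_F^2 \le \eta^2$, so $\mathrm{p}_{\gamma}(\bY) \le \mathrm{p}(\bY)$ for all $\bY\in\mathcal{F}$ and all $\gamma>0$. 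Second, $\mathcal{F}$ is compact: it is closed, and $\bY\psdd\mathbf{0}$ with $\mathrm{trace}(\bY)=\eta$ confines every eigenvalue to $[0,\eta]$, so $\lVert\bY\rVert_F\le\eta$; since it is also nonempty (it contains $\bY^\star$), the continuous objective $\mathrm{p}_{\gamma}$ attains a minimiser $\bY^\star_\gamma$ on it. Crucially, \eqref{eq:backgd_sdp1} and \eqref{eq:sdcut_sdp1} carry \emph{the same} affine constraints, so $\bY^\star$ is feasible for \eqref{eq:sdcut_sdp1} and every $\bY^\star_\gamma$ is feasible for \eqref{eq:backgd_sdp1}.

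For part~($i$) I would sandwich $\mathrm{p}(\bY^\star_\gamma)$. Feasibility of $\bY^\star_\gamma$ for \eqref{eq:backgd_sdp1} gives the lower bound $\mathrm{p}(\bY^\star_\gamma)\ge\mathrm{p}(\bY^\star)$. For the upper bound, use the identity $\mathrm{p}(\bY) = \mathrm{p}_{\gamma}(\bY) + \tfrac{1}{2\gamma}\big(\eta^2 - \lVert\bY\rVert_F^2\big)$, optimality of $\bY^\star_\gamma$ for \eqref{eq:sdcut_sdp1}, and feasibility of $\bY^\star$ for that problem:
\[
\mathrm{p}(\bY^\star_\gamma) = \mathrm{p}_{\gamma}(\bY^\star_\gamma) + \tfrac{1}{2\gamma}\big(\eta^2 - \lVert\bY^\star_\gamma\rVert_F^2\big) \le \mathrm{p}_{\gamma}(\bY^\star) + \tfrac{\eta^2}{2\gamma} \le \mathrm{p}(\bY^\star) + \tfrac{\eta^2}{2\gamma}.
\]
Together this yields $0 \le \mathrm{p}(\bY^\star_\gamma) - \mathrm{p}(\bY^\star) \le \eta^2/(2\gamma)$, so any $\gamma > \eta^2/(2\epsilon)$ makes the gap at most $\epsilon$.

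For part~($ii$) I would apply the classical ``add the two optimality inequalities'' argument for penalty families. Write $h(\bY) := \lVert\bY\rVert_F^2 - \eta^2$. Optimality of $\bY^\star_{\gamma_1}$ and of $\bY^\star_{\gamma_2}$ over the common set $\mathcal{F}$ gives $\mathrm{p}_{\gamma_1}(\bY^\star_{\gamma_1}) \le \mathrm{p}_{\gamma_1}(\bY^\star_{\gamma_2})$ and $\mathrm{p}_{\gamma_2}(\bY^\star_{\gamma_2}) \le \mathrm{p}_{\gamma_2}(\bY^\star_{\gamma_1})$. Adding these and cancelling the linear parts $\langle\cdot,\bA\rangle$ leaves $\big(\tfrac{1}{2\gamma_1}-\tfrac{1}{2\gamma_2}\big)\big(h(\bY^\star_{\gamma_1}) - h(\bY^\star_{\gamma_2})\big) \le 0$, and since $\gamma_1<\gamma_2$ this forces $h(\bY^\star_{\gamma_1}) \le h(\bY^\star_{\gamma_2})$. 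Substituting back into $\mathrm{p}_{\gamma_2}(\bY^\star_{\gamma_2}) \le \mathrm{p}_{\gamma_2}(\bY^\star_{\gamma_1})$ and rearranging gives $\mathrm{p}(\bY^\star_{\gamma_2}) - \mathrm{p}(\bY^\star_{\gamma_1}) \le \tfrac{1}{2\gamma_2}\big(h(\bY^\star_{\gamma_1}) - h(\bY^\star_{\gamma_2})\big) \le 0$, which is exactly the claimed monotonicity.

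There is no genuinely hard step; the only two points needing care are (a) the inequality $\lVert\bY\rVert_F^2\le\mathrm{trace}(\bY)^2$, which is \emph{false} without positive semidefiniteness and is precisely where the $\mathcal{S}^n_+$ constraint is used, and (b) the fact that \eqref{eq:backgd_sdp1} and \eqref{eq:sdcut_sdp1} share the identical affine constraints, so that optimality can be invoked ``both ways'' between the two problems. I would also remark that the argument only uses that $h$ is continuous, nonpositive and bounded on $\mathcal{F}$, not its specific quadratic form, so the same penalty scheme works for other bounded regularisers.
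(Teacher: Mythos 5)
Your proof is correct, and it follows exactly the line the paper indicates: the paper itself only remarks that the results ``rely on the properties that $\mathrm{trace}(\bY)=\eta$'' and defers the details to the cited SDCut reference, and your argument is precisely that standard one --- using $\lVert\bY\rVert_F^2\le\mathrm{trace}(\bY)^2=\eta^2$ for feasible $\bY\in\mathcal{S}^n_+$ to get the $\eta^2/(2\gamma)$ gap for part~($i$), and the usual two-way optimality (penalty monotonicity) inequalities for part~($ii$). Nothing is missing; your write-up is simply a self-contained version of the proof the paper outsources.
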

\begin{proof}
These results rely on the properties that $\mathrm{trace}(\bY) = \eta$. See \cite{peng2013cpvr} for details.
\end{proof}
The above results show that \eqref{eq:sdcut_sdp1} is an accurate approximation to the problem \eqref{eq:backgd_sdp1},
as the solution to \eqref{eq:sdcut_sdp1} can be sufficiently close to that to \eqref{eq:backgd_sdp1} given a large enough
$\gamma$. The advantage of \eqref{eq:sdcut_sdp1} is that it has a much simpler Lagrangian dual:

\begin{proposition}
The Lagrangian dual problem of~\eqref{eq:sdcut_sdp1} can be simplified to
\begin{align}
\label{eq:fastsdp_dual}
\max_{\bu \in \mathbb{R}^q}
&\,\,\,\, \mathrm{d}_\gamma(\bu) := - \frac{\gamma}{2} \lVert (\bC(\bu))_+ \rVert_F^2 \!-\! \bu^{\T} \bb \!-\! \frac{\eta^2}{2\gamma}
\end{align}
where $\bC(\cdot): \mathbb{R}^q \rightarrow \mathcal{S}^n$ is defined as $\bC(\bu) := - \bA - \sum_{i=1}^{q} u_i \bB_i$,
and $(\cdot)_+: \mathcal{S}^n \rightarrow \mathcal{S}^n_+$ is defined as
$(\bY)_+ = \bGamma \mathrm{Diag}( \max(\mathbf{0},\blambda) ) \bGamma^{\T}$.
$\blambda:= [\lambda_1, \dots, \lambda_n]^\T$ and ${\bf \bGamma}$ stand for the respective eigenvalues and eigenvectors of $\bY$,
that is $\bY = \bGamma \mathrm{Diag}(\blambda) \bGamma^{\T}$.
The relationship between the optimal solution to the primal \eqref{eq:sdcut_sdp1} $\bY^\star$
and the solution to the dual \eqref{eq:fastsdp_dual} $\bu^\star$ is:
$
\bY^{\star} = \gamma (\bC(\bu^\star))_+$.
\end{proposition}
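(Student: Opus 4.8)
The plan is to obtain \eqref{eq:fastsdp_dual} by Lagrangian duality, dualizing only the affine constraints \eqref{eq:sdp_cons} and keeping the conic constraint $\bY \in \mathcal{S}^n_+$ implicit. Attaching multipliers $\bu \in \mathbb{R}^q$ to \eqref{eq:sdp_cons}, the Lagrangian is $\langle \bY,\bA \rangle + \frac{1}{2\gamma}(\lVert \bY \rVert_F^2 - \eta^2) + \sum_{i} u_i(\langle \bY,\bB_i\rangle - b_i)$, which, using $\bC(\bu) = -\bA - \sum_i u_i\bB_i$, rearranges to
\[
L(\bY,\bu) = \frac{1}{2\gamma}\lVert \bY \rVert_F^2 - \langle \bY, \bC(\bu)\rangle - \bu^\T\bb - \frac{\eta^2}{2\gamma}.
\]
The dual function is $\mathrm{d}_\gamma(\bu) = \min_{\bY \in \mathcal{S}^n_+} L(\bY,\bu)$, so the real work is the inner minimization. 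First I would complete the square, $\frac{1}{2\gamma}\lVert \bY \rVert_F^2 - \langle \bY, \bC(\bu)\rangle = \frac{1}{2\gamma}\lVert \bY - \gamma\bC(\bu)\rVert_F^2 - \frac{\gamma}{2}\lVert \bC(\bu)\rVert_F^2$, so that minimizing $L(\cdot,\bu)$ over $\mathcal{S}^n_+$ becomes exactly the Euclidean projection of $\gamma\bC(\bu)$ onto the PSD cone.

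Next I would invoke the standard fact that the projection of a symmetric matrix onto $\mathcal{S}^n_+$ is its positive part: diagonalizing $\bC(\bu) = \bGamma\,\mathrm{Diag}(\blambda)\,\bGamma^\T$, the quadratic $\lVert \bY - \gamma\bC(\bu)\rVert_F^2$ decouples over eigenvalues and is minimized at $\bY(\bu) = \gamma(\bC(\bu))_+$, with residual $\lVert \gamma\bC(\bu) - \gamma(\bC(\bu))_+\rVert_F^2 = \gamma^2\lVert(\bC(\bu))_-\rVert_F^2$, where $(\cdot)_-$ denotes the negative part. Substituting back and using the orthogonal split $\lVert\bC(\bu)\rVert_F^2 = \lVert(\bC(\bu))_+\rVert_F^2 + \lVert(\bC(\bu))_-\rVert_F^2$, the $(\cdot)_-$ terms cancel and $\mathrm{d}_\gamma(\bu)$ collapses to $-\frac{\gamma}{2}\lVert(\bC(\bu))_+\rVert_F^2 - \bu^\T\bb - \frac{\eta^2}{2\gamma}$, which is precisely \eqref{eq:fastsdp_dual}.

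Finally, to upgrade the weak-duality bound to the stated equality and to recover the primal, I would appeal to strong duality. The primal \eqref{eq:sdcut_sdp1} has a strongly convex and coercive objective and only affine equality constraints over the cone $\mathcal{S}^n_+$; provided it is feasible --- its feasible set coincides with that of the original SDP \eqref{eq:backgd_sdp1} --- the relative-interior/Slater condition for conic programs is satisfied, so strong duality holds, a dual optimum $\bu^\star$ of \eqref{eq:fastsdp_dual} is attained, and the unique primal optimum $\bY^\star$ is the minimizer of $L(\cdot,\bu^\star)$, i.e.\ $\bY^\star = \gamma(\bC(\bu^\star))_+$. I expect the only genuinely delicate point to be this last step --- checking the constraint qualification and attainment carefully enough to justify strong duality and the primal-recovery formula --- since the completing-the-square calculation and the eigenvalue bookkeeping are routine; alternatively one may simply cite the parallel derivation in \cite{peng2013cpvr}.
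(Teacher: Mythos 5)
Your proposal is correct and follows essentially the same route as the paper, which itself only defers to \cite{peng2013cpvr}: that reference derives the dual exactly by your argument of dualizing the affine constraints, completing the square so the inner problem becomes a Euclidean projection of $\gamma\bC(\bu)$ onto $\mathcal{S}^n_+$, and cancelling the negative-part terms, with primal recovery $\bY^\star=\gamma(\bC(\bu^\star))_+$ from strong duality. The one imprecision is your suggestion that feasibility alone yields the Slater/relative-interior condition --- what is actually needed (and holds for the MAP relaxations here, e.g.\ a strictly feasible $\bY\succ 0$ satisfying the trace/diagonal constraints) is a strictly feasible point, which together with the strong convexity of $\mathrm{p}_\gamma$ gives attainment and uniqueness as you claim.
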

\begin{proof}
See \cite{peng2013cpvr} for details.
\end{proof}

It is easy to find that 
the Lagrangian dual problem~\eqref{eq:fastsdp_dual} is convex,
and the \psd matrix variable is eliminated in the dual.
It is also proved in \cite{peng2013cpvr} that the simplified dual problem has the following nice properties:

\begin{proposition}
$\forall \bu \in \mathbb{R}^q$,
$\forall \gamma \!> \!0$, $\mathrm{d}_\gamma(\bu)$
yields a lower-bound on the optimal objective function value of the problem \eqref{eq:backgd_sdp1}.
\label{Remark:4}
\end{proposition}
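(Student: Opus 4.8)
The plan is to chain two elementary inequalities: weak Lagrangian duality for the perturbed problem \eqref{eq:sdcut_sdp1}, and the observation that the quadratic penalty term in $\mathrm{p}_\gamma$ is non-positive at any feasible point of the original SDP \eqref{eq:backgd_sdp1}. No appeal to strong duality is needed.

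First I would invoke weak duality for \eqref{eq:sdcut_sdp1}. Since \eqref{eq:fastsdp_dual} was shown (in the preceding Proposition) to be the Lagrangian dual of \eqref{eq:sdcut_sdp1}, for every $\bu \in \mathbb{R}^q$ and every $\bY \in \mathcal{S}^n_+$ satisfying the constraints \eqref{eq:sdp_cons} one has $\mathrm{d}_\gamma(\bu) \le \mathrm{p}_\gamma(\bY)$. Crucially, \eqref{eq:sdcut_sdp1} and \eqref{eq:backgd_sdp1} share exactly the same feasible set, so this inequality may be applied with $\bY = \bY^\star$, the optimizer of \eqref{eq:backgd_sdp1}: $\mathrm{d}_\gamma(\bu) \le \mathrm{p}_\gamma(\bY^\star)$.

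The crux is then to show $\mathrm{p}_\gamma(\bY^\star) \le \mathrm{p}(\bY^\star)$, i.e.\ that $\tfrac{1}{2\gamma}(\lVert \bY^\star \rVert_F^2 - \eta^2) \le 0$, which reduces to $\lVert \bY^\star \rVert_F^2 \le \eta^2$. Here the standing assumption that the constraints encode $\mathrm{trace}(\bY) = \eta$ enters. Using the eigendecomposition $\bY^\star = \bGamma \mathrm{Diag}(\blambda) \bGamma^{\T}$ with $\blambda \ge \mathbf{0}$ (as $\bY^\star \succcurlyeq 0$), we get $\lVert \bY^\star \rVert_F^2 = \sum_{i=1}^n \lambda_i^2 \le \big( \sum_{i=1}^n \lambda_i \big)^2 = \mathrm{trace}(\bY^\star)^2 = \eta^2$, where the inequality is just non-negativity of the $\lambda_i$. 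Hence $\mathrm{p}_\gamma(\bY^\star) = \langle \bY^\star, \bA \rangle + \tfrac{1}{2\gamma}(\lVert \bY^\star \rVert_F^2 - \eta^2) \le \langle \bY^\star, \bA \rangle = \mathrm{p}(\bY^\star)$ for every $\gamma > 0$.

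Combining the two steps yields $\mathrm{d}_\gamma(\bu) \le \mathrm{p}_\gamma(\bY^\star) \le \mathrm{p}(\bY^\star)$ for all $\bu \in \mathbb{R}^q$ and all $\gamma > 0$, which is precisely the claim. I do not anticipate a real obstacle; the only points requiring care are (a) justifying that $\bY^\star$ is admissible in the weak-duality bound — which follows because the two problems have identical affine constraints — and (b) spelling out the ``PSD plus fixed trace implies $\lVert \cdot \rVert_F \le \mathrm{trace}$'' step. As an alternative route, one can instead write $\mathrm{d}_\gamma(\bu) \le \mathrm{d}_\gamma(\bu^\star) = \mathrm{p}_\gamma(\bY^\star_\gamma) = \min_{\bY} \mathrm{p}_\gamma(\bY) \le \mathrm{p}_\gamma(\bY^\star) \le \mathrm{p}(\bY^\star)$, using the strong duality and primal--dual relation $\bY^\star_\gamma = \gamma(\bC(\bu^\star))_+$ from the preceding Proposition; either way the final penalty-sign estimate is the same.
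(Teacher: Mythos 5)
Your argument is correct. The paper itself gives no proof of this proposition---it simply defers to \cite{peng2013cpvr}---and your chain (weak Lagrangian duality for \eqref{eq:sdcut_sdp1}, whose feasible set coincides with that of \eqref{eq:backgd_sdp1}, combined with the observation that $\lVert \bY \rVert_F^2 \le (\mathrm{trace}(\bY))^2 = \eta^2$ for any feasible $\bY \in \mathcal{S}^n_+$ with $\mathrm{trace}(\bY)=\eta$, so the penalty term is non-positive and $\mathrm{d}_\gamma(\bu) \le \mathrm{p}_\gamma(\bY^\star) \le \mathrm{p}(\bY^\star)$) is precisely the standard argument behind that cited result, so you have essentially supplied the intended proof.
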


\begin{proposition}
\label{thm:dual_objective}
$\mathrm{d}(\cdot)$ is continuously differentiable but not necessarily twice differentiable,
and its gradient is given by
\begin{align}
\nabla \mathrm{d}_\gamma(\bu) = - \gamma  \left[ 
                                                 \langle  \left(\bC(\bu)\right)_+, \bB_1 \rangle,
                                                 \cdots,  
                                                 \langle  \left(\bC(\bu)\right)_+, \bB_q \rangle
                                          \right]^\T - \bb.
\label{eq:dual_gradient}
\end{align}
\end{proposition}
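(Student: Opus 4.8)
The plan is to view $\mathrm{d}_\gamma$ as a fixed convex ``building block'' composed with an affine map, thereby reducing the statement to a single differentiability fact about the Euclidean projection onto the PSD cone. Define $\mathrm{h}:\mathcal{S}^n\to\real$ by $\mathrm{h}(\bY):=\tfrac12\lVert(\bY)_+\rVert_F^2$, and note that $\bu\mapsto\bC(\bu)=-\bA-\sum_{i=1}^{q}u_i\bB_i$ is affine, hence $C^\infty$, from $\real^q$ into $\mathcal{S}^n$. By \eqref{eq:fastsdp_dual},
\begin{align}
\mathrm{d}_\gamma(\bu)=-\gamma\,\mathrm{h}\big(\bC(\bu)\big)-\bu^\T\bb-\tfrac{\eta^2}{2\gamma}.
\end{align}
Hence it suffices to establish: (a) $\mathrm{h}$ is continuously differentiable on $\mathcal{S}^n$ with $\nabla\mathrm{h}(\bY)=(\bY)_+$; and (b) $\mathrm{h}$ is not twice differentiable in general --- concretely, it fails to be $C^2$ at any $\bY$ having $0$ in its spectrum. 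Granting (a), the chain rule applied to the display above, together with differentiability of the affine map $\bC(\cdot)$ and of $\bu\mapsto\bu^\T\bb$, shows $\mathrm{d}_\gamma\in C^1(\real^q)$ and yields the gradient formula \eqref{eq:dual_gradient}; granting (b), $\mathrm{d}_\gamma$ fails to be twice differentiable at every $\bu$ for which $\bC(\bu)$ is singular, which is exactly why \eqref{eq:fastsdp_dual} is only ``not necessarily twice differentiable''.

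For (a) I would use one of two standard routes. \emph{Route 1 (squared distance).} The cones $\mathcal{S}^n_+$ and $\mathcal{S}^n_-$ are mutually polar, so the Moreau decomposition of $\mathcal{S}^n$ reads $\bY=(\bY)_+ + \bigl(\bY-(\bY)_+\bigr)$ with the two terms Frobenius-orthogonal and $\bY-(\bY)_+=-(-\bY)_+=\mathrm{P}_{\mathcal{S}^n_-}(\bY)$; hence $\mathrm{dist}(\bY,\mathcal{S}^n_-)=\lVert\bY-\mathrm{P}_{\mathcal{S}^n_-}(\bY)\rVert_F=\lVert(\bY)_+\rVert_F$ and $\mathrm{h}(\bY)=\tfrac12\,\mathrm{dist}(\bY,\mathcal{S}^n_-)^2$. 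Invoking the classical fact that the squared distance to a nonempty closed convex set $C$ is continuously differentiable with gradient $\mathrm{Id}-\mathrm{P}_C$, we get $\nabla\mathrm{h}(\bY)=\bY-\mathrm{P}_{\mathcal{S}^n_-}(\bY)=(\bY)_+$. \emph{Route 2 (spectral function).} Write $\mathrm{h}(\bY)=g(\blambda)$, where $\blambda$ is the vector of eigenvalues of $\bY$ and $g(x_1,\dots,x_n)=\tfrac12\sum_i\max(0,x_i)^2$ is a symmetric convex function that is $C^1$ with (Lipschitz) gradient $\nabla g(x)=(\max(0,x_1),\dots,\max(0,x_n))$; by the theory of spectral functions, $\mathrm{h}$ inherits the $C^1$ smoothness of $g$ and $\nabla\mathrm{h}(\bY)=\bGamma\,\mathrm{Diag}\!\big(\nabla g(\blambda)\big)\,\bGamma^\T=(\bY)_+$ whenever $\bY=\bGamma\,\mathrm{Diag}(\blambda)\,\bGamma^\T$. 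Continuity of $\nabla\mathrm{h}$ also follows immediately from $\bY\mapsto(\bY)_+=\mathrm{P}_{\mathcal{S}^n_+}(\bY)$ being nonexpansive (it is a projection onto a closed convex set); composed with the Lipschitz affine $\bC$, this makes $\bu\mapsto(\bC(\bu))_+$ continuous, hence $\nabla\mathrm{d}_\gamma$ continuous. For (b), note that $t\mapsto\max(0,t)^2$ is $C^1$ but not twice differentiable at $t=0$ (its derivative $2\max(0,t)$ has a corner there), so $g$, and hence $\mathrm{h}$, is not twice differentiable precisely where an eigenvalue vanishes.

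A cleaner variant of (a) avoids any concern about non-uniqueness of the spectral factor $\bGamma$ at matrices with repeated or zero eigenvalues: apply Danskin's theorem to $\mathrm{d}_\gamma(\bu)=\min_{\bY\in\mathcal{S}^n_+}L(\bY,\bu)$, where $L$ is the Lagrangian of \eqref{eq:sdcut_sdp1}. The inner minimizer is \emph{unique} and equals $\gamma(\bC(\bu))_+$ (the primal--dual relation already recorded above), and $L(\bY,\cdot)$ is affine, so Danskin's theorem gives $\mathrm{d}_\gamma\in C^1$ with $\nabla_{\bu}\mathrm{d}_\gamma(\bu)=\nabla_{\bu}L\big(\gamma(\bC(\bu))_+,\bu\big)$, which upon substitution is exactly \eqref{eq:dual_gradient}, while continuity of $\bu\mapsto\gamma(\bC(\bu))_+$ yields continuity of the gradient. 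I expect the main obstacle to be step (a): a self-contained proof that $\mathrm{h}$ is differentiable with the claimed gradient \emph{at boundary matrices}, where $(\cdot)_+$ is only directionally differentiable and the eigendecomposition is not locally smooth. The squared-distance and the Danskin routes are attractive precisely because they sidestep this difficulty; a direct eigenvalue-perturbation argument would be the delicate part. The remaining pieces --- the chain rule, the adjoint of the linear part of $\bC$, and the corner of $\max(0,t)^2$ --- are routine.
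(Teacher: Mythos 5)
The paper itself contains no proof of this proposition---it simply defers to the SDCut reference \cite{peng2013cpvr}---and your argument is the standard one that fills this in: identify $\tfrac12\lVert(\bY)_+\rVert_F^2$ with half the squared Frobenius distance to $\mathcal{S}^n_-$ via the Moreau decomposition, invoke the classical $C^1$ smoothness of the squared distance to a closed convex set (gradient $\mathrm{Id}-\mathrm{P}_C$), and compose with the affine map $\bC(\cdot)$; the Danskin packaging is an equally valid route, and the corner of $t\mapsto\max(0,t)^2$ at $t=0$ correctly accounts for the failure of twice differentiability exactly when $\bC(\bu)$ has a zero eigenvalue. In substance the proof is complete and handles the delicate boundary case properly, which a naive eigendecomposition argument would not.

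One bookkeeping point you should not gloss over: you assert that the chain rule ``yields the gradient formula \eqref{eq:dual_gradient},'' but carrying out the computation with the paper's own definitions produces the opposite sign on the $\gamma$-term. Since $\partial\bC(\bu)/\partial u_i=-\bB_i$, you get
$\frac{\partial}{\partial u_i}\bigl[-\tfrac{\gamma}{2}\lVert(\bC(\bu))_+\rVert_F^2\bigr]=+\gamma\langle(\bC(\bu))_+,\bB_i\rangle$,
hence $\nabla\mathrm{d}_\gamma(\bu)=\gamma\left[\langle(\bC(\bu))_+,\bB_1\rangle,\cdots,\langle(\bC(\bu))_+,\bB_q\rangle\right]^\T-\bb$, i.e.\ the $i$-th component equals $\langle\bY(\bu),\bB_i\rangle-b_i$ with $\bY(\bu)=\gamma(\bC(\bu))_+$---the primal constraint residual, which correctly vanishes at the optimum where $\bY^\star$ is feasible (consistent with the stated primal--dual relation $\bY^\star=\gamma(\bC(\bu^\star))_+$). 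The formula as printed in \eqref{eq:dual_gradient}, namely $-\gamma[\cdots]^\T-\bb$, is inconsistent with the paper's definitions of $\bC(\bu)$ and $\mathrm{d}_\gamma$ (it would vanish only when $\langle\bY^\star,\bB_i\rangle=-b_i$); the discrepancy traces to a sign/multiplier-convention typo in the paper, not to your method. Your Danskin variant has the same issue: $\nabla_\bu L(\gamma(\bC(\bu))_+,\bu)$ is the residual vector above, not the displayed \eqref{eq:dual_gradient}. So state the gradient you actually derive and flag the sign inconsistency explicitly rather than claiming exact agreement with the printed formula; with that caveat the proof stands.
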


Such that Wang \etal~\cite{peng2013cpvr} adopted quasi-Newton methods to solve the dual problem~\eqref{eq:fastsdp_dual}.
At each iteration of quasi-Newton methods, only the objective function $\mathrm{d}_\gamma$ and its gradient~\eqref{eq:dual_gradient}
need to be computed, where the computational bottleneck is on the calculation of
$ \left(\bC(\bu)\right)_+$, which is equivalent to obtaining all the positive eigenvalues and the corresponding eigenvectors
of $\bC(\bu)$.
Note that although the SDP problem discussed in this paper contains only linear equality constraints,
the SDCut method and the proposed method can both easily extended to SDP problems with linear inequality constraints.

\section{Matrix-vector Product based on Low-rank Approximation}

{\em One key contribution of this paper is
the use of a low-rank approximation to the positive semidefinite kernel matrix, based on which  low-rank quasi-Newton methods
are developed for large-scale SDP CRF inference.}
We propose to approximate an SPSD kernel matrix $\bK \in \mathcal{S}^N_+$ by a low-rank representation: $\bK \approx \bPhi \bPhi^{\T}$,
where $\bPhi \in \real^{N\times R_K}$ and $R_K \ll N$,
such that both of the computational complexity and memory requirement for computing the aforementioned matrix-vector product are linear in $N$.
Compared to \cite{paris2006fast,adams2010fast},
the pairwise potential function is generalized to any positive semidefinite kernel function and there is no restriction on the input feature dimension.

The best quality can be achieved by a low-rank approximation depends on the spectral distribution of the kernel matrix itself, which is related
to the smoothness (differentiability and Lipschitz continuity) of the underlying
kernel function (see \cite{reade1983eigenvalues,reade1992eigenvalues,chang1999eigenvalues,buescu2007eigenvalue,buescu2007eigenvalue2,wathen2013spectral} for more details).
In general, eigenvalues of smooth kernels (\eg Gaussian kernel) decay quickly and thus can be well approximated by low­-rank matrices.

The optimal low-rank approximation in terms of both the spectral norm and Frobenius norm can be obatined by eigen-decomposition,
while it is computationally inefficient whose computational complexity is generally cubic in $N$.
There are a number of low-rank approximation methods achieving linear complexity in $N$,
including \NYS methods~\cite{williams2000effect,williams2001using,drineas2005nystrom},
incomplete Cholesky decomposition~\cite{fine2002efficient,bach2003kernel},
random Fourier features~\cite{rahimi2007random,rahimi2008weighted}, and
homogeneous kernel maps~\cite{vedaldi2012efficient}.
For detailed discussion, please refer to
the review papers~\cite{Sharpanalysis,gittens2013revisiting,yang2012nystr}.
We adopt \NYS methods~\cite{vedaldi2012efficient} in this paper for the low-rank approximation of kernel matrices.

{\bf \NYS methods} can be used to approximate a positive semidefintie matrix $\bK \in \setS_{+}^N$,
by sampling $R_0 \ll N$ columns of $\bK$ (refer to as landmarks).
Firstly $\bK$ is expressed as:
\begin{align}
\bK = \left[ \begin{array}{cc} \bW & {\bK_{2,1}}^{\T} \\ \bK_{2,1}  & \bK_{2,2} \end{array} \right],
\end{align}
where $\bW \in \mathcal{S}^{R_0}$ denotes the intersection of the sampled $R_0$ columns and rows.
The matrix $\bK_{2,2} \in \mathcal{S}^{N-R_0}$ can be approximated as:
\begin{align}
\bK_{2,2} \approx \bK_{2,1} \bGamma_R \mathbf{\Sigma}_R^{-1} \bGamma^{\T}_R {\bK_{2,1}}^{\T},
\end{align}
where $R \leq R_0$ and $\mathbf{\Sigma}_R = \mathrm{Diag}([\lambda_1, \dots, \lambda_R]^\T)$.
$\lambda_1 \geq \lambda_2 \geq \dots \geq \lambda_R > 0$ are the $R$-largest eigenvalues of $\bW$
and $\bGamma_R$ contains the corresponding (column) eigenvectors.
Note that $\bGamma_R \mathbf{\Sigma}_R \bGamma^{\T}_R$ is the best rank-$R$ approximation to $\bW$.
Then we have a rank-$R$ approximation to $\bK$:
\begin{align}
\bK \approx
     \left(\left[\!\begin{array}{c} \bW \\ \bK_{2,1} \end{array} \right] \bGamma_R \mathbf{\Sigma}_R^{-\frac{1}{2}} \right)
     \left(\left[\!\begin{array}{c} \bW \\ \bK_{2,1} \end{array} \right] \bGamma_R \mathbf{\Sigma}_R^{-\frac{1}{2}} \right)^{\T},
\end{align}
which is proved to have a bounded error to the optimal rank-$R$ approximation given by the eigen-decomposition~\cite{gittens2013revisiting}.

There are several strategies to sample representative landmarks, \ie, columns of $\bK$,
including the standard uniform sampling~\cite{williams2001using}, non-uniform sampling~\cite{drineas2005nystrom} and $k$-means clustering~\cite{zhang2008improved}.
In this paper, we adopt the $k$-means method in \cite{zhang2008improved} to select landmarks.
At each round of $k$-means, only $R$ columns of $\bK$, rather than the entire matrix $\bK$, is required to be instantiated.

Note that for \NYS methods, the positive semidefinite matrix $\bK$ to be approximated
can be any $m$-th kernel matrix $\bK^{(m)}$ or the summation $\sum_{m=1}^M w^{(m)} \bK^{(m)}$.

%

\section{SDP Relaxation to MAP Estimation Problems}
\label{sec:sdprelaxation}

In this section, we introduce SDP relaxation to the problem \eqref{eq:min_energy}.
Throughout the main body of this paper,
the label compatibility function is assumed to be given by Potts model, that is $\mu(l,l') = \delta(l \neq l')$.
The SDP relaxation corresponding to an arbitrary label compatibility function is discussed in Section~\ref{sec:arbitrarymu}.

By defining $\bX \in \{ 0,1\}^{N\!\times\!L}$, $\bH \in \mathbb{R}^{N \times L}$ and $\bK \in \mathcal{S}^N_+$
as $X_{i,l} = \delta(x_i = l)$, $H_{i,l} = \psi_{i}(l)$ and
$K_{i,j} = \sum_{m=1}^{M} w^{(m)} \fk^{(m)} (\bbf_i, \bbf_j)$,
the objective function of \eqref{eq:min_energy} can be re-written as:
\begin{subequations}
\begin{align}
\mathrm{E}(\bx) &= \sum_{i \in \setN} \psi_i (x_i)
    + \sum_{i,j \in \setN, i < j} \mu(x_i,x_j) \sum_{m=1}^{M} w^{(m)} \fk^{(m)} (\bbf_i, \bbf_j), \\
                &= \sum_{i \in \setN, l \in \setL} \psi_i(l) \delta(x_i = l)
    + \sum_{i,j \in \setN, i< j} (1 - \delta(x_i = x_j)) K_{i,j}, \\
                &= \sum_{i \in \setN, l \in \setL} \psi_i(l) \delta(x_i = l)
    - \frac{1}{2} \sum_{i,j \in \setN} \sum_{l,l' \in \setL} \delta(x_i = l) \delta(x_j = l') K_{i,j}
    + \frac{1}{2} \sum_{i,j \in \setN} K_{i,j}, \\
                &= \langle {\bH}, {\bX} \rangle
    - \frac{1}{2} \langle {\bX} {\bX}^{\T},  \bK \rangle
    + \frac{1}{2} \mathbf{1}^\T \bK \mathbf{1}.
\end{align}
\end{subequations}
Such that
the problem \eqref{eq:min_energy} can be expressed as the following binary quadratic problem (BQP):
\begin{subequations}
\label{eq:bqp2}
\begin{align}
\min_{{\bX} \in \{ 0,1\}^{N\!\times\!L}} &\quad{\tilde{\fE}}({\bX}) := \langle {\bH}, {\bX} \rangle - \frac{1}{2} \langle {\bX} {\bX}^{\T},  \bK \rangle  \\
\sst \quad\,\, &\quad \textstyle{\sum_{l=1}^{L}} X_{i,l} = 1, \,\,\forall i \in \setN, \label{eq:bqp2_cons} %
\end{align}
\end{subequations}
Note that
there is a one-to-one correspondence between the set of $\bx \in \setL^N$
and the set of $\bX \in \{ 0,1\}^{N\!\times\!L}$ satisfying \eqref{eq:bqp2_cons}, and
$\fE(\bx) = {\tilde{\fE}}({\bX}) + \frac{1}{2} \mathbf{1}^{\T} \bK \mathbf{1}$ for equivalent $\bx$ and $\bX$.
By introducing ${\bY} \!:=\! {\scriptsize \left[ \begin{array}{c} \bI_L \\ {\bX} \end{array} \right] \left[ \begin{array}{c} \bI_L \\ {\bX} \end{array}\right]^{\T}}$,
the corresponding SDP relaxation to problem~\eqref{eq:bqp2} can be expressed as:
\begin{subequations}
\label{eq:sdp2}
\begin{align}
\min_{{{\bY} \in \setS^{N\!+\!L}_+}} &\quad \langle {\bY}, \footnotesize{ \frac{1}{2}\left[ \begin{array}{cc} \mathbf{0} & {\bH}^{\T}
                                \\ {\bH} & - \bK \end{array}\right]} \rangle, \\
\sst \,\,\,\,
   &\quad Y_{l,l} = 1, \,\, l \in \setL, \label{eq:sdp2_cons_11}\\
   &\quad \frac{1}{2} (Y_{l,l'}+Y_{l',l}) = 0, \,\, l \leq l',  l,l' \in \setL, \label{eq:sdp2_cons_12} \\
     &\quad \frac{1}{2}\textstyle{\sum_{l=1}^L} (Y_{i+L,l} + Y_{l,i+L}) = 1, \,\, i \in \setN, \label{eq:sdp2_cons_2}\\
     &\quad Y_{i+L,i+L} = 1, \,\, i \in \setN. \label{eq:sdp2_cons_3}
\end{align}
\end{subequations}
Clearly we have $\mathrm{trace}({\bY}) = N+L$ which is implicitly encoded by the linear constraints,
and $\mathrm{rank}({\bY}) = L$ which is non-convex and dropped by the SDP relaxation.

In the above formulation, the objective function and all the constraints
\eqref{eq:sdp2_cons_11}, \eqref{eq:sdp2_cons_12}, \eqref{eq:sdp2_cons_2}, \eqref{eq:sdp2_cons_3}
are linear in $\bY$. Therefore the problem \eqref{eq:sdp2} can be re-written in the general form of \eqref{eq:backgd_sdp1},
in which $n = \eta = N+L$, $q = 2N+L(L+1)/2$, $\bA = \frac{1}{2}  {\scriptsize \left[ \begin{array}{cc} \mathbf{0} & {\bH}^{\T}
                                \\ {\bH} & - \bK \end{array}\right]}$, and thus solved using SDCut~\cite{peng2013cpvr}.

\section{Low-rank Quasi-Newton Methods For SDP Inference}

In this section, we follow the method in \cite{peng2013cpvr} (denoted as SDCut) which solves general BQPs.
{\em Several major improvements are proposed to make SDCut scalable to the large-scale energy minimization problem~\eqref{eq:bqp2},
which is another key contribution of this paper.}

Although it is shown in \cite{peng2013cpvr} that SDCut already runs much faster than standard interior-point methods,
there are still several issues to be addressed for the problem to be solved in this work:

\noindent
($ i $)
It is shown in \cite{peng2013cpvr} that $\frank( \left(\bC(\bu)\right)_+)$
drops significantly in the first several iterations,
and Lanczos methods~\cite{sorensen1997implicitly} can be used to efficiently compute a few leading eigenpairs.
However, because $ \left(\bC(\bu)\right)_+$ is not necessarily low-rank in the initial several iterations,
much of time may be spent on the first several eigen-decompositions.
In the CRFs considered in this paper, there are up to $681,600$ variables.
Using the original SDCut method, the time spent on the first several iterations can be prohibitive.

\noindent
($ ii $)
In general, a BFGS-like method has a superlinear convergence speed
under the condition that the objective function is twice continuously differentiable.
However, the dual objective function~\eqref{eq:fastsdp_dual} is not necessarily twice differentiable.
So {\em the convergence speed of SDCut is unknown}.
In practice, SDCut usually needs more than $100$ iterations to converge.

\begin{algorithm}[t]
\footnotesize
\setcounter{AlgoLine}{0}
\caption{\sdcutlr algorithm for MAP estimation.}
\begin{minipage}[]{1\linewidth}
   \KwIn{$\bA$, $\{\bB_i\}_{i = 1,2,\cdots,q}$, $\bb$, $\gamma$, $K_{\mathrm{max}}$, $\tau > 0$, $r \ll N$.}

   {\bf Initialization:} $\bu^{(0)} = \mathbf{0}$, ${\tilde{\fE}}^\star = +\inf$, $\bA = \bA-\nu \bI_N$ where $\nu$ is the $r\mbox{-th smallest eigenvalue of }\bA$.

   \For{$k = 0, 1, 2, \dots, K_{\mathrm{max}}$}
   {
     {\bf Step1:} $\bu^{(k+1)} = \bu^{(k)} - \rho {\bH} \nabla \mathrm{d}_\gamma (\bu^{(k)})$,
                    where ${\bH}$ is updated to approximate $(\nabla^2 \mathrm{d}_\gamma(\bu^{(k)}))^{-1}$
                    and $0 < \rho \leq 1$ is the step size. \\
     {\bf Step2:} $\bX^{(k+1)} = \mathrm{Round}(\gamma (\bC(\bu^{(k+1)}))_+)$. \\
     {\bf Step3:} If ${\tilde{\fE}}({\bX^{(k+1)}}) < {\tilde{\fE}}^\star$, $\bX^{\star} = \bX^{(k+1)}$. \\
     {\bf Step4:} Exit, if ${\big(\mathrm{d}_\gamma(\bu^{(k+1)}) - \mathrm{d}_\gamma(\bu^{(k)})\big)}/{\max\{|\mathrm{d}_\gamma(\bu^{(k+1)})|,
                       |\mathrm{d}_\gamma(\bu^{(k)})|,1\}} \leq \tau$.
   }

   \KwOut{$\bX^\star$, ${\tilde{\fE}}^\star$. }
\end{minipage}
\label{alg:lbfgsb}
\end{algorithm}

In the next two sections, we introduce two improvements to the SDCut method,
which address the above two problems and increase the scalability of SDCut significantly.
The improved method is refer to as \sdcutlr and its procedure is summarized in Algrithm~\ref{alg:lbfgsb}.

\subsection{A Low-rank Initial Point}

If the initialization of the dual variable $\bu^{(0)}$ is $\mathbf{0}$,
then we have $\bC(\bu^{(0)}) = - \bA$.
Without affecting the optimal solution to \eqref{eq:backgd_sdp1},
$\bA$ can be perturbed so as to reduce $\frank((\bC(\bu^{(0)}))_+)$ to a small integer,
based on:
\noindent
($ i $)
For $\bY \in \mathcal{S}^n_+ \cap \{\mathrm{trace}(\bY) = n \}$,
$\langle \bY, \bA + \nu \bI_n \rangle = \langle \bY, \bA \rangle + \nu n$.
So the matrix $\bA$ in the problem~\eqref{eq:backgd_sdp1}
can be equivalently replaced by $\bA + \nu \bI_n$, $\forall \nu \neq 0$.

\noindent
($ ii $)
Suppose that $\lambda \neq 0$ and $\bx \in \mathbb{R}^n$ is an eigenpair of $\bA \in \mathcal{S}^n$,
\ie, $\bA \bx = \lambda \bx$,
then $\bA + \nu \bI_n$ has an eigenpair: $\lambda + \nu$ and $\bx$, $\forall \nu \neq 0$.

To decrease the rank of  $ \left(\bC(\bu^{(0)})\right)_+$ to $r \!\ll\! n$,
we can equivalently replace $\bA$ by $\bA-\nu \bI_n$, where $\nu$ is the $r\mbox{-th smallest eigenvalue of }\bA$.

\subsection{Rounding Schemes and Early Stop}

Traditionally, a feasible solution $\bX$ to the BQP problem~\eqref{eq:bqp2}
is obtained by rounding the optimal solution $\bY^\star$ to the corresponding SDP formulation~\eqref{eq:sdcut_sdp1}.
The rounding procedure will be carried out until the quasi-Newton algorithm converges.
In contrast, we perform the rounding procedure on the non-optimal solution
$\bY^{(k)} := \gamma \big(\bC(\bu^{(k)})\big)_+$ at each iteration $k$ of the quasi-Newton algorithm (Step2 in Algorithm~\ref{alg:lbfgsb}).
In practice, we find that the dual objective value of \eqref{eq:fastsdp_dual}, \ie the lower-bound to the optimal value of $\tilde{\fE}(\bX)$,
increases dramatically in the first several iterations.
Simultaneously, the value of $\tilde{\fE}(\bX^{(k)})$ also drops significantly for the first several $k$s.
This observation inspires us to stop the quasi-Newton algorithm long before convergence,
without affecting the final solution quality.

In this work, we adopt the random rounding scheme proposed in \cite{briet2010positive}
to derive $\bX$ from $\bY^{(k)} := \gamma (\bC(\bu^{(k)}))_+$.
Note that because $\bY^{(k)}$ is positive semidefinite, it can be decomposed to $\bY^{(k)} = \bPsi \bPsi^{\T}$, where $\bPsi \in \mathbb{R}^{N\times R_Y}$ and $R_Y = \frank(\bY^{(k)})$.
The rounding scheme can be expressed in the following two steps:

\noindent
($ i $)
Random Projection: $\hat{\bX} = \bPsi \bP$,
where $\bP \in \mathbb{R}^{R_Y \times L}$ and each entry $P_{i,j}$ is
independently sampled from the standard Gaussian distribution with mean $0$ and variance $1$,
\ie, $P_{i,j} \sim N(0,1)$.

\noindent
($ ii $)
Discretization: Obtain $\bX \in \{0,1 \}^{N \times L}$ by discretizing the above $\hat{\bX}$, that is, $X_{i,l} = \delta (\hat{X}_{i,l} > \hat{X}_{i,l'}, \forall l' \in \setL, l' \neq l)$.

\subsection{Computational Complexity and Memory Requirement}

The computational bottleneck of \sdcutlr is the eigen-decomposition of $\bC(\bu)$ at each iteration,
which is performed by Lanczos methods~\cite{sorensen1997implicitly} in this paper.
Lanczos methods only require users to implement the matrix-vector product $\bC(\bu)\bd = -\bA \bd - (\sum_{i=1}^q u_i \bB_i)\bd$,
where $\bd \in \mathbb{R}^n$ denotes a so-called ``Lanczos vector'' produced by Lanczos algorithms iteratively.
In this section, we will how to accelerate the computation of this matrix-vector product
by utilizing the specific structures of $\bA$ and $\{\bB_i\}_{i=1,\cdots,q}$,
and then give the computational cost and memory requirement of \sdcutlr.

For the problem \eqref{eq:sdp2}, $\bA = \frac{1}{2}  {\scriptsize \left[ \begin{array}{cc} \mathbf{0} & {\bH}^{\T}
                                \\ {\bH} & - \bK \end{array}\right]}$,
and $\{ \bB_i \}_{i = 1,2,\cdots,q}$ have specific structures such that
\begin{align}
\sum_{i=1}^q u_i \bB_i = {\scriptsize \left[\begin{array}{cc}
	                                        \mathrm{Diag}(\bu_1) + \frac{1}{2}\mathrm{LTri}(\bu_{2})
                                           & \frac{1}{2} \bu_{3}^{\T} \otimes \mathbf{1} \\
                                           \frac{1}{2} \bu_{3} \otimes \mathbf{1}^{\T}
                                           & \mathrm{Diag}(\bu_{4})
            \end{array}\right]},
\end{align}
where
$\bu_1 \!\in\! \mathbb{R}^{L}$,
$\bu_2 \!\in\! \mathbb{R}^{{L(L-1)}/{2}},
 \bu_3, \bu_4 \!\in\! \mathbb{R}^{N}$
denote the respective dual variables \wrt constraints \eqref{eq:sdp2_cons_11}, \eqref{eq:sdp2_cons_12}, \eqref{eq:sdp2_cons_2}, \eqref{eq:sdp2_cons_3}
and such that $\bu = \left[ \bu_{1}^{\T}, \bu_{2}^{\T}, \bu_{3}^{\T}, \bu_{4}^{\T}\right]^{\T}$.
$\mathrm{LTri}(\bu) \!:\! \mathbb{R}^{{L(L-1)}/{2}} \!\rightarrow\! \mathcal{S}^L$ produces an $L \times L$ symmetric matrix
whose lower triangular part is made up of the elements of the input vector $\bu \in \mathbb{R}^{{L(L-1)}/{2}}$, that is
$\mathrm{LTri}(\bu) = {\scriptsize \left\{ \begin{array}{cc} 0 &\mbox{ if } i = j \\
                                 u_{ {(L-1)!}/{j!} + i - j} &\mbox{ if } i > j \\
                                 u_{ {(L-1)!}/{i!} + j - i} &\mbox{ if } i < j
                                \end{array} \right.}$.
Then the matrix-vector product $\bC(\bu) \bd$ can be expressed as:
\begin{align}
\label{eq:matvec-prod}
\footnotesize
\bC(\bu) \bd =
\scriptsize - \underbrace{ \frac{1}{2}\left[ \begin{array}{c} {\bH}^{\T} \bd_{2} \\
               {\bH} \bd_{1} - \bK \bd_{2} \end{array}\right]}
               _{\bA \bd:\, \mathcal{O}(NL+NR_K)} -
               \underbrace{\left[ \begin{array}{c}  \bu_1 \circ \bd_1+ \frac{1}{2} \mathrm{LTri}(\bu_{2}) \bd_{1} + \frac{1}{2} (\bu_3^{\T} \bd_{2}) \mathbf{1} \\
               \frac{1}{2} (\mathbf{1}^{\T} \bd_{1})\bu_3  + \bu_4 \circ \bd_{2}\end{array} \right]}
               _{(\sum_{i=1}^q u_i \bB_i)\bd:\, \mathcal{O}(L^2+N)},
\end{align}
where $\bd_{1} \in \mathbb{R}^{L}, \bd_{2} \in \mathbb{R}^{N}$ and such that
$\bd = \left[ \bd_{1}^{\T}, \bd_{2}^{\T}\right]^{\T}$.
Accordingly, the computational cost of solving \eqref{eq:sdp2} by \sdcutlr at each descent iteration,
that is the complexity of eigen-decomposition of $\bC(\bu)$, is:
\begin{align}
 \underbrace{\mathcal{O}\Big(\, (N+L)R_Y^2 + \underbrace{(NR_K+NL+L^2)}_{\mbox{matrix-vector product \eqref{eq:matvec-prod}}}R_Y \,\Big)}_{\mbox{Lanczos factorization}}
 \times \mbox{ \#Lanczos-Iters},
\end{align}
and the memory requirement is $\mathcal{O}(N(L+R_Y+R_K)+LR_Y)$,
where $R_K$ and $R_Y$ denotes the rank of $\bK$ and $(\bC(\bu))_+$ respectively.
Note that the computational complexity is linear in the number of CRF variables $N$,
which is the same as mean field approximation.

\section{Applications}
To show the superiority of the proposed method,
we evaluate it and other methods on two applications in this section:
image segmentation and image co-segmentation.
In the following our experiments,
the maximum number of iterations $K_{\mbox{max}}$ for \sdcutlr is set to $10$;
the initial rank $r$ is set to $20$;
and the penalty parameter $\gamma$ is set to $1000$.

\subsection{Application 1: Image Segmentation}
\label{sec:Application1}

Following the work in \cite{koltun2011efficient},
pairwise potentials for image segmentation are expressed in the following form:
\begin{align}
K^{(1)}_{i,j} &= \exp \left(- \frac{|\bp_i - \bp_j|^2}{2 \theta^2_{\alpha}} - \frac{|\bc_i - \bc_j|^2}{2 \theta^2_{\beta}} \right),
   \label{a3}
\end{align}
where $\bp_i$ and $\bc_i$ are the position and color value of pixel $i$ respectively,
and similarly for $\bp_j$ and $\bc_j$.
The matrix defined in \eqref{a3} corresponds to
 the appearance kernel which penalizes the case that two adjacent pixels with similar color and different labels.
The label compatibility function is given by the Potts model $\mu(l,l') = \delta(l \neq l')$.

%
%
%
%

%
%
%
%

The kernel matrix $\bK^{(1)}$ can be decomposed to the hadamard product of two independent kernel matrices:
$
\bK^{(1)} = \bK^{(1)}_{p} \circ \bK^{(1)}_{c}, 
$
where
$k^{(1)}_{p}(\bbf_i,\bbf_j) \!=\! \exp \left( \frac{- |\bp_i - \bp_j|^2}{ 2 \theta^2_{\alpha}} \right)$ and
$k^{(1)}_{c}(\bbf_i,\bbf_j) \!=\! \exp \left( \frac{- |\bc_i - \bc_j|^2}{ 2 \theta^2_{\beta}}  \right)$. 

\NYS methods are performed on $\bK^{(1)}_{p}$ and $\bK^{(1)}_{c}$ individually:
$\bK^{(1)}_{p} \approx \bPhi_p \bPhi_p^{\T}$ and
$\bK^{(1)}_{c} \approx \bPhi_c \bPhi_c^{\T}$, 
where $\bPhi_p \in \mathbb{R}^{N\times R_p}$ and $\bPhi_c \in \mathbb{R}^{N\times R_c}$.
Then we have:
\begin{subequations}
\label{eq:exp1_product}
\begin{align}
\bK^{(1)} \bd &= (\bK^{(1)}_{p} \circ \bK^{(1)}_{c}) \bd \\
              &= \mathrm{diag}\left(\bPhi_p \bPhi_p^{\T} \mathrm{Diag}(\bd) \bPhi_c \bPhi_c^{\T}\right) \\
              &= \left(
                    \left( \bPhi_p \bPhi_p^{\T}
                        \left( \mathrm{Diag}(\bd) \bPhi_c \right)
                    \right)
                 \circ \bPhi_c \right) \mathbf{1}.
\end{align}
\end{subequations}
This computation requires $\mathcal{O}(N R_c R_p)$ operations ($R_c$ and $R_p$ are set to $20$ and $10$ respectively).
Performing \NYS on $\bK_{p}^{(1)}$ and $\bK_{c}^{(1)}$ separately instead of on $\bK^{(1)}$ directly brings two benifits:
($i$) the memory requirement is reduced from $R_c R_p$ to $R_c + R_p$; ($ii$) For multiple images with the same resolution, 
we only need to perform \NYS on $\bK_{p}^{(1)}$ once, as the input features (positions $\bp_i, i = 1, \cdots, N$) are the same.

The improved \NYS method~\cite{zhang2008improved}
is adopted to obtain the low rank approximation of $\bK^{(1)}_{c}$ and $\bK^{(1)}_{p}$.
As in \cite{zhang2008improved}, K-means clustering is used to select representative landmarks.

\begin{figure*}[t]
\centering
\subfloat{
\includegraphics[width=0.14\textwidth]{./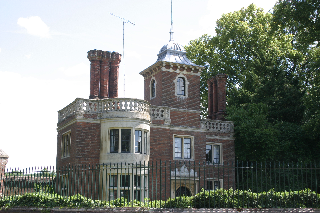}
\includegraphics[width=0.14\textwidth]{./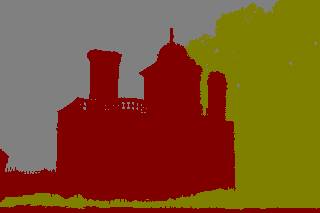}
\includegraphics[width=0.14\textwidth]{./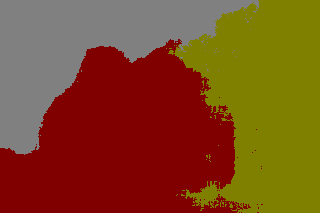}
\includegraphics[width=0.14\textwidth]{./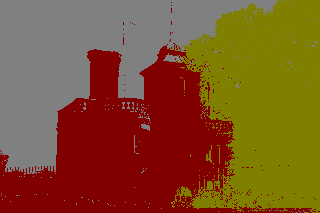}
\includegraphics[width=0.14\textwidth]{./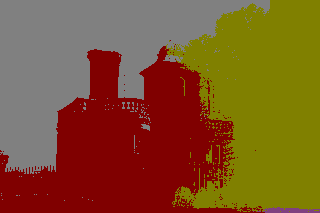}
\includegraphics[width=0.14\textwidth]{./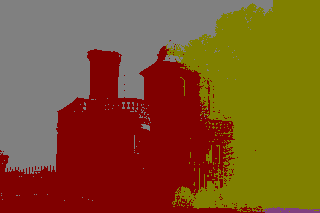}
\centering
}\\
\subfloat{
\includegraphics[width=0.14\textwidth]{./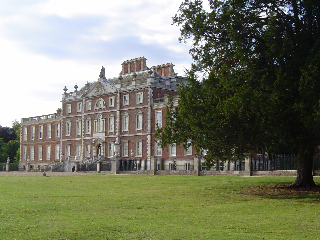}
\includegraphics[width=0.14\textwidth]{./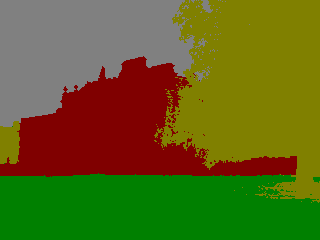}
\includegraphics[width=0.14\textwidth]{./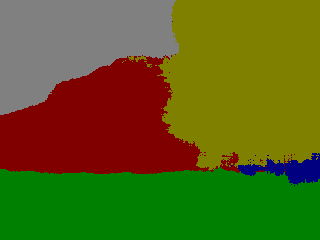}
\includegraphics[width=0.14\textwidth]{./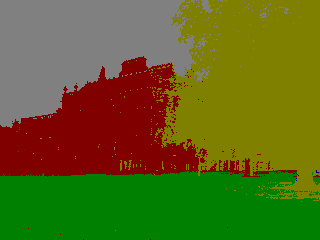}
\includegraphics[width=0.14\textwidth]{./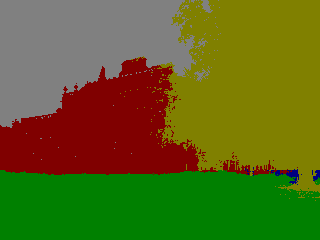}
\includegraphics[width=0.14\textwidth]{./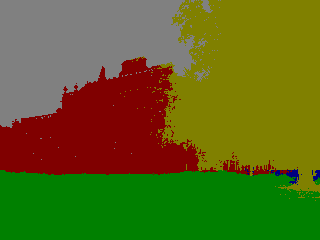}
\centering
}\\
\subfloat{
\includegraphics[width=0.14\textwidth]{./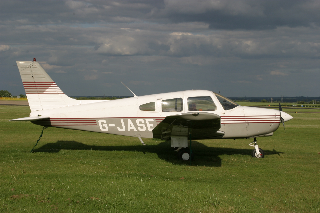}
\includegraphics[width=0.14\textwidth]{./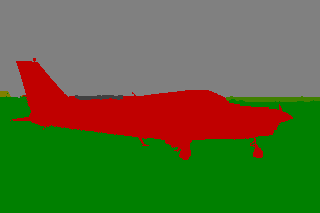}
\includegraphics[width=0.14\textwidth]{./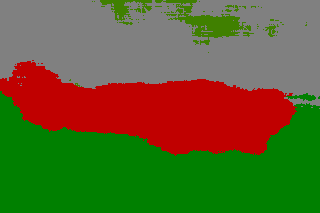}
\includegraphics[width=0.14\textwidth]{./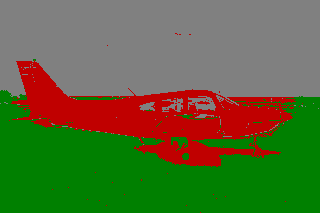}
\includegraphics[width=0.14\textwidth]{./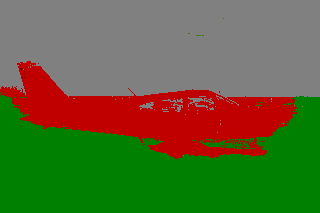}
\includegraphics[width=0.14\textwidth]{./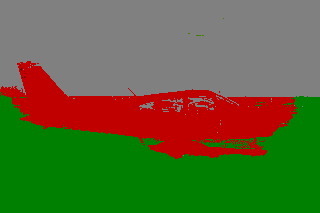}
\centering
}\\
\subfloat{
\includegraphics[width=0.14\textwidth]{./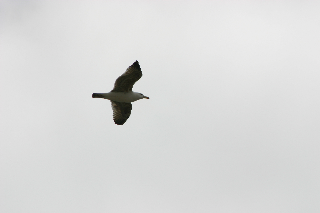}
\includegraphics[width=0.14\textwidth]{./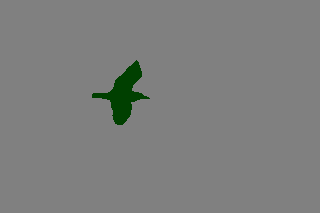}
\includegraphics[width=0.14\textwidth]{./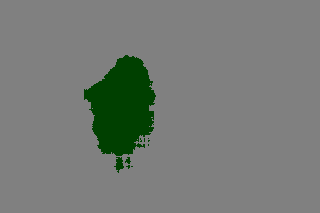}
\includegraphics[width=0.14\textwidth]{./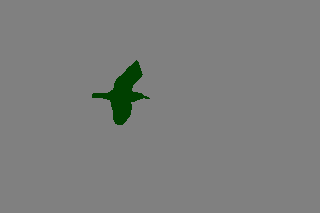}
\includegraphics[width=0.14\textwidth]{./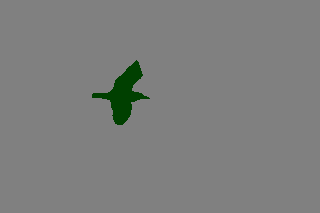}
\includegraphics[width=0.14\textwidth]{./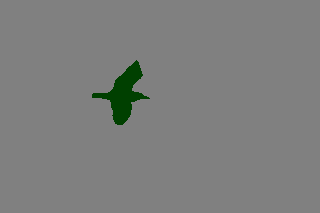}
\centering
}\\
\subfloat{
\includegraphics[width=0.14\textwidth]{./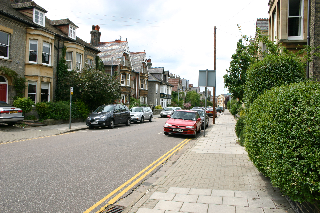}
\includegraphics[width=0.14\textwidth]{./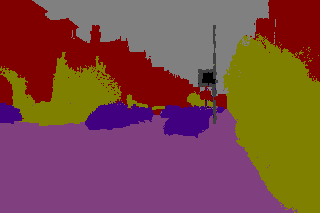}
\includegraphics[width=0.14\textwidth]{./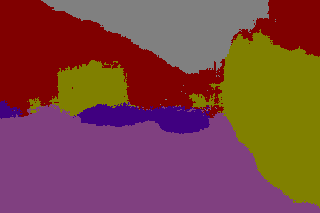}
\includegraphics[width=0.14\textwidth]{./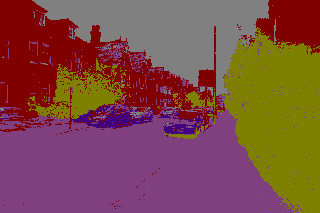}
\includegraphics[width=0.14\textwidth]{./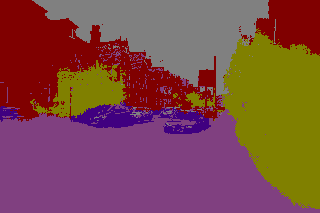}
\includegraphics[width=0.14\textwidth]{./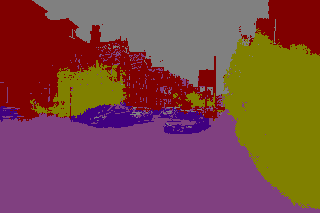}
\centering
}\\
\subfloat{
\centering
\begin{minipage}[c]{0.14\textwidth}
\centering
{\footnotesize Original images}
\end{minipage}
\begin{minipage}[c]{0.14\textwidth}
\centering
{\footnotesize Ground truth}
\end{minipage}
\begin{minipage}[c]{0.14\textwidth}
\centering
{\footnotesize Unary}
\end{minipage}
\begin{minipage}[c]{0.14\textwidth}
\centering
{\footnotesize MF+filter}
\end{minipage}
\begin{minipage}[c]{0.14\textwidth}
\centering
{\footnotesize MF+Nys.}
\end{minipage}
\begin{minipage}[c]{0.14\textwidth}
\centering
{\footnotesize \sdcutlr}
\end{minipage}
}
\vspace{-0cm}
\caption{Qualitative results of image segmentation.
Original images and the corresponding ground truth are shown in the first two columns.
The third column demonstrates the segmentation results based only on unary terms.
The results of mean field methods with different matrix-vector product approaches 
are illustrated in the fourth and fifth columns.
Our methods achieves similar visual performance with mean field methods.
}
\label{fig:imgsegm}
\end{figure*}

\begin{table}
  \centering
  \footnotesize
  \begin{tabular}{l|@{\hspace{0.2cm}}c@{\hspace{0.2cm}}c@{\hspace{0.2cm}}c@{\hspace{0.2cm}}c}
  \hline
         & Unary  & MF+filter  & MF+Nys.  & \sdcutlr \\
  \hline
  \hline
     Time(s)      & NA & $0.29$ & $6.6$ & $74$ \\
     Accu.   & $0.79$ & $0.83$ & $0.83$ & $0.83$ \\
     Energy           & $1.29\cdot10^5$ & $9.79\cdot10^4$ & $1.15\cdot10^5$ & $\mathbf{9.02\cdot10^4}$ \\
  \hline
  \end{tabular}
\vspace{0.1cm}
\caption{Quantitative results of image segmentation.
Our method runs slower than mean field methods but gives significantly lower energy.
Unfortunately, the lower energy does not lead to better segmentation accuracy.
}
\label{tab:imgsegm}
\end{table}

{\bf Experiments}
The proposed algorithm is compared with mean field 
on MSRC $21$-class database.
The test data are $93$ representative images with accurate ground truth provided by \cite{koltun2011efficient}.
The unary potentials are also obtained from \cite{koltun2011efficient}.
The parameters $\theta_{\alpha}$, $\theta_{\beta}$ and $w^{(1)}$ are set to $60$, $20$ and $10$ respectively.
The iteration number 
limit for
mean field inference is set to $20$.
All experiments are conducted using a single CPU with $10$GB memory.
As for the matrix-vector product in 
the mean field method,
both the filter-based and \NYS-based approaches are evaluated (refer to as MF+filter and MF+Nys. respectively).
The evaluated images have around $60,000$ pixels and so the number of MRF variables is also around $60,000$ for each image.

Fig.~\ref{fig:imgsegm} shows the qualitative results for image segmentation.
We can see that our method achieves similar results 
to the mean field approach.
In Table~\ref{tab:imgsegm}, quantitative results are demonstrated.
Althgouh the computational complexity of mean field and our method are both linear in $N$,
mean field is still faster than ours in this experiment.
This is partially because the code of mean field is highly optimized using C++, while ours is unoptimized.
A speed up is expected if our code is further optimized and parallelized.
Note that the filter-based method~\cite{adams2010fast} can be also incorporated into our algorithm to compute matrix-vector products, 
which is likely to be faster than \NYS methods but limited to Gaussian kernels in general.

Despite the slower speed, {\em our method achieves significantly lower energy than mean field},
which means our method is better from the viewpoint of MAP estimation.
Unfortunately, the superiority of our method in terms of optimization does not lead to better segmentation performance.
Actually, all of the evaluated methods have similar segmentation accuracy.

\subsection{Application 2: Image Co-segmentation}

\begin{figure*}[t]
\centering
\begin{centering}
\subfloat{
\includegraphics[width=0.14\textwidth]{./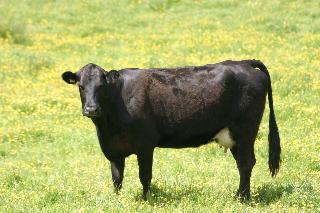}
\includegraphics[width=0.14\textwidth]{./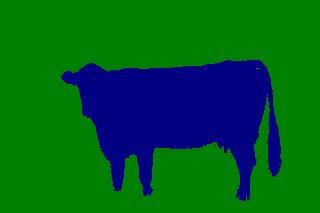}
\includegraphics[width=0.14\textwidth]{./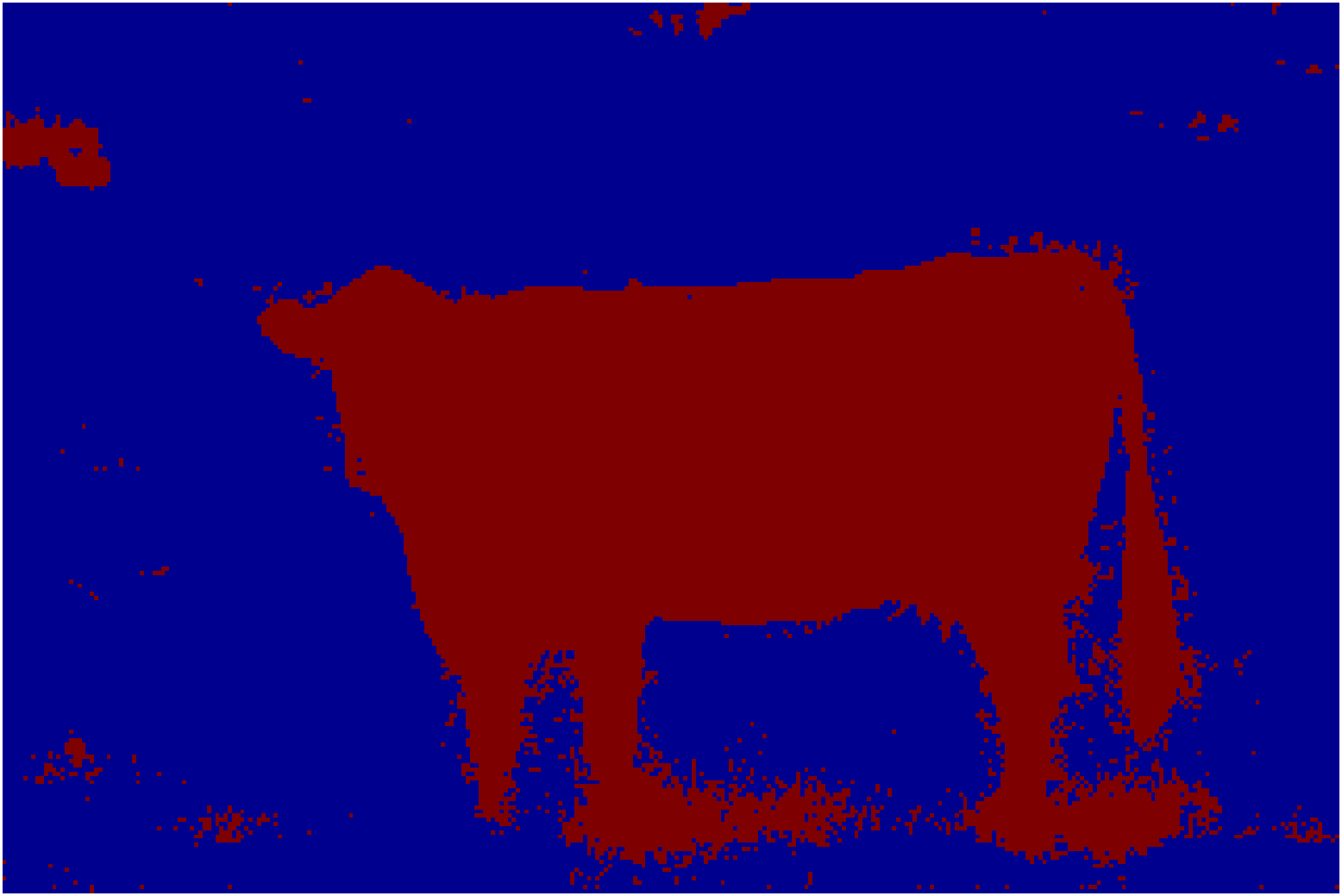}
\includegraphics[width=0.14\textwidth]{./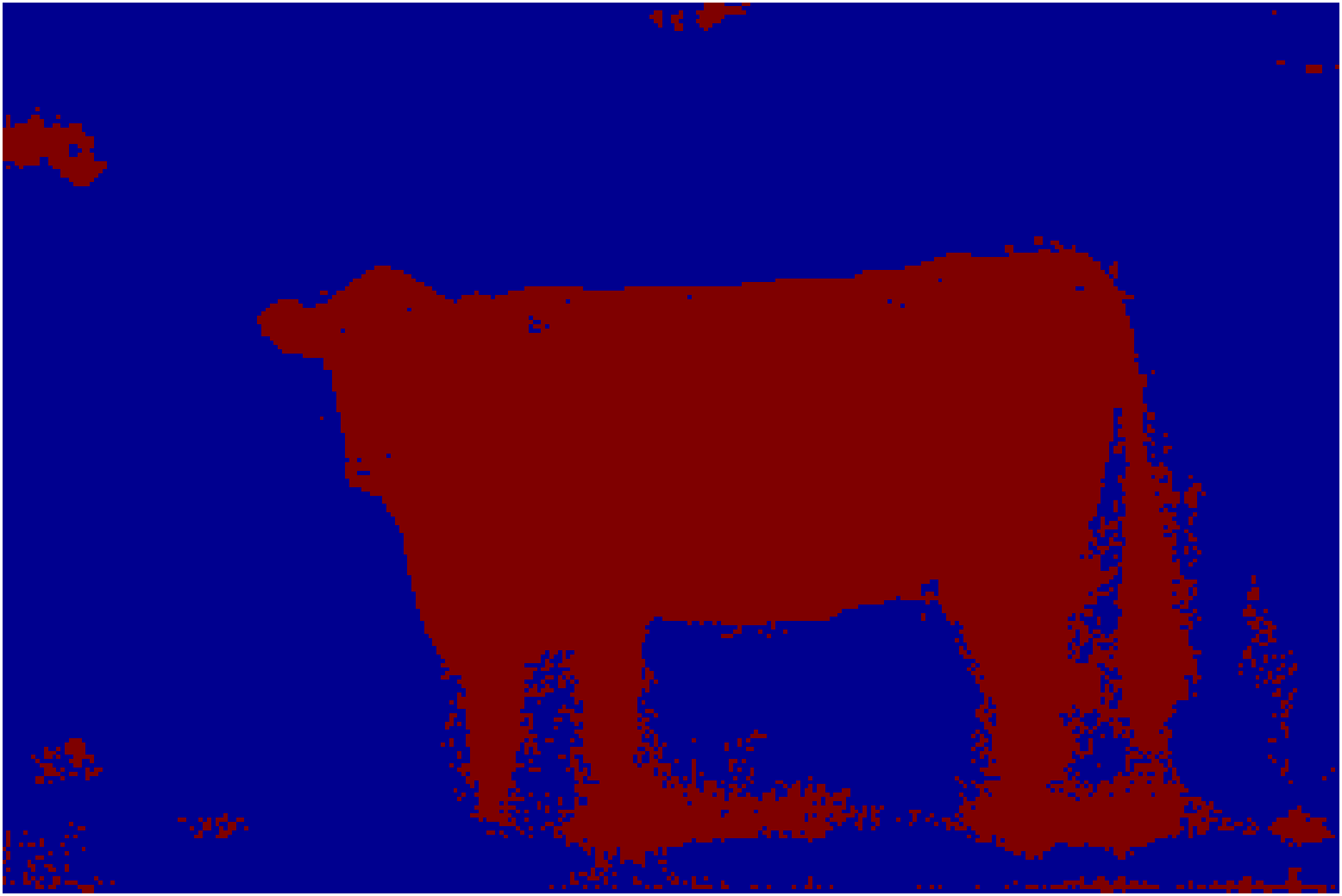}
\includegraphics[width=0.14\textwidth]{./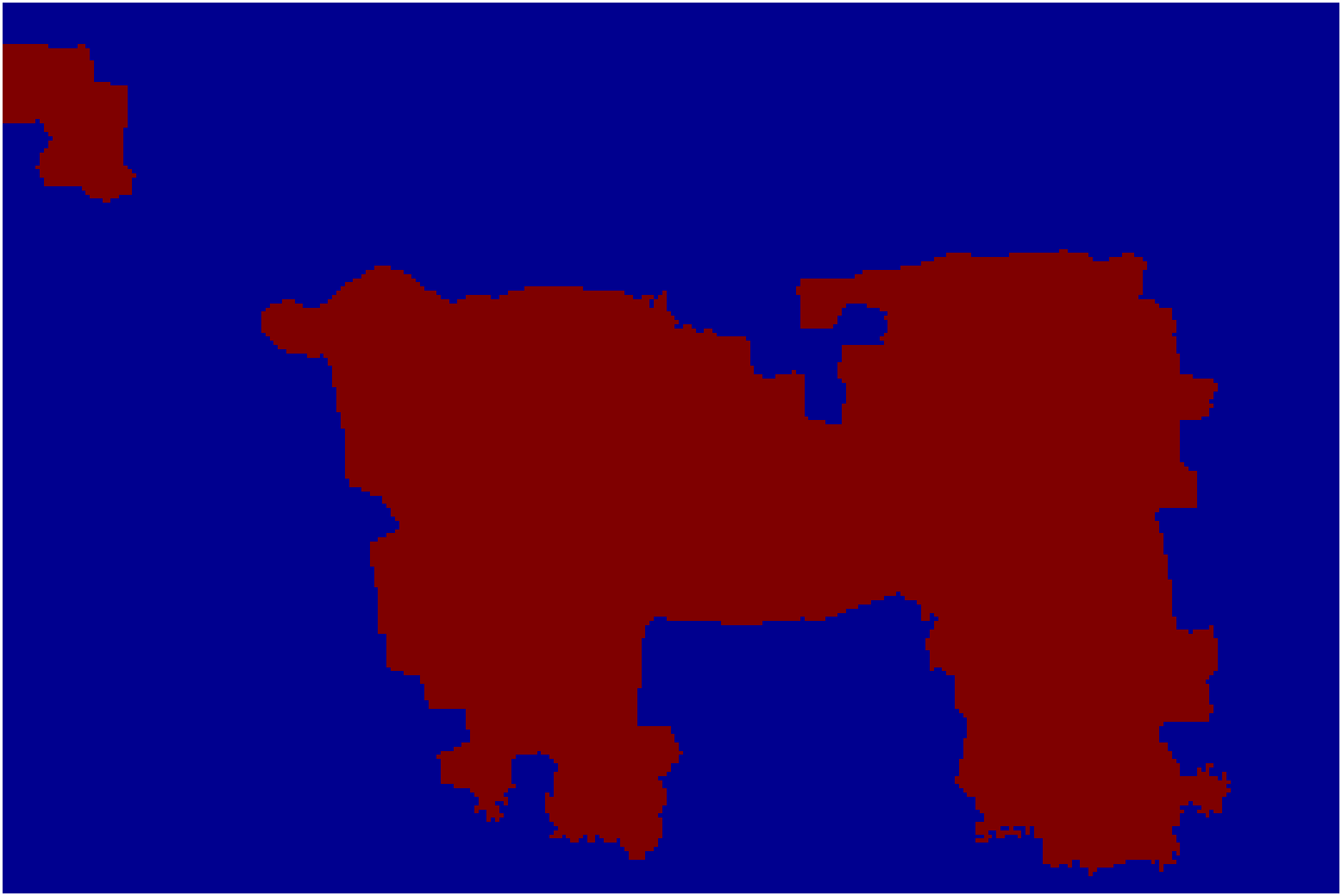}
\includegraphics[width=0.14\textwidth]{./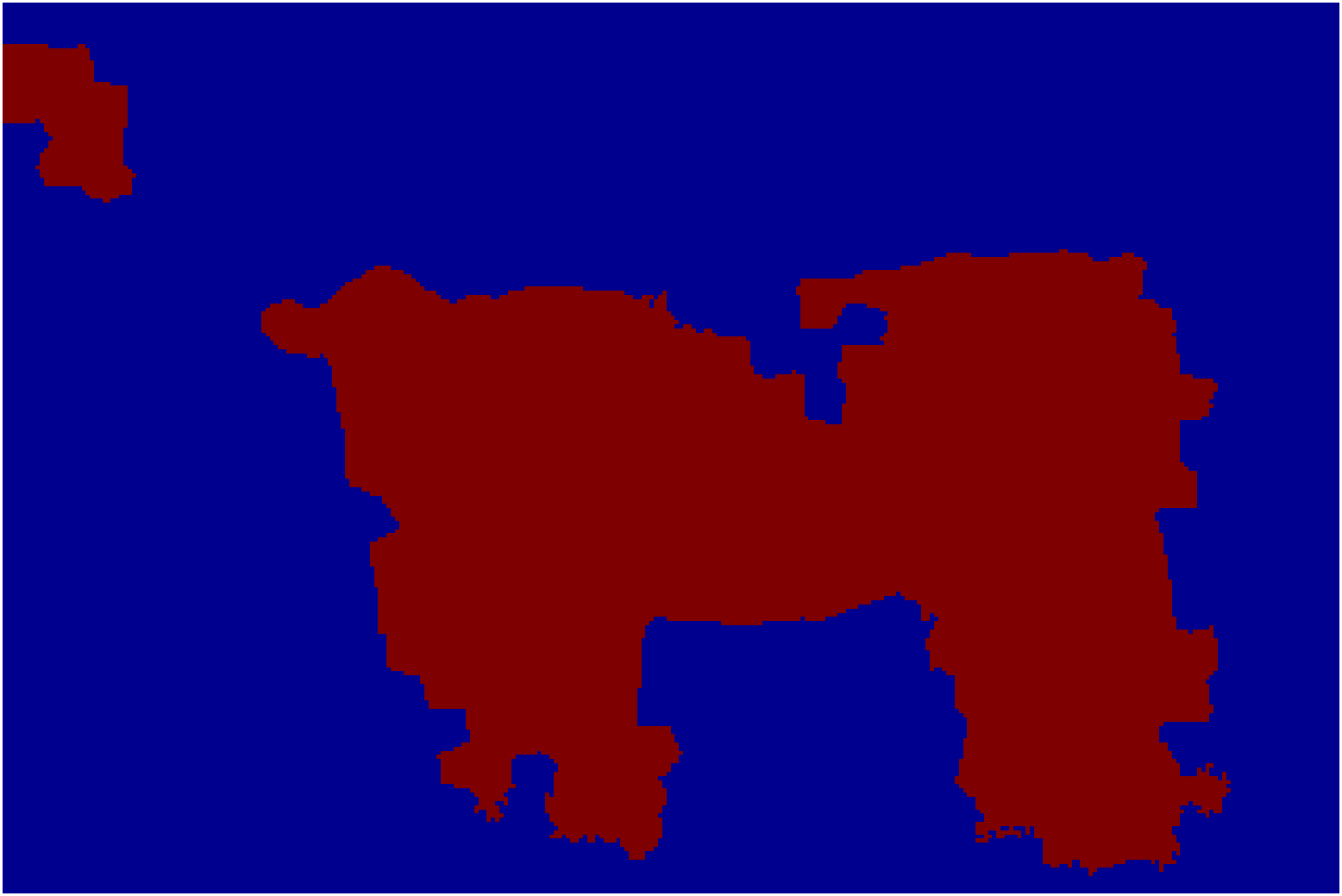}
\centering
}\\
\subfloat{
\includegraphics[width=0.14\textwidth]{./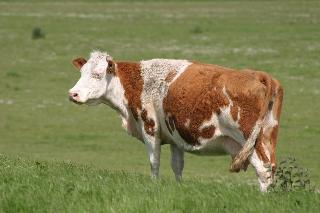}
\includegraphics[width=0.14\textwidth]{./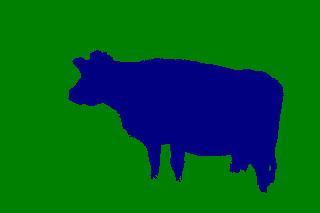}
\includegraphics[width=0.14\textwidth]{./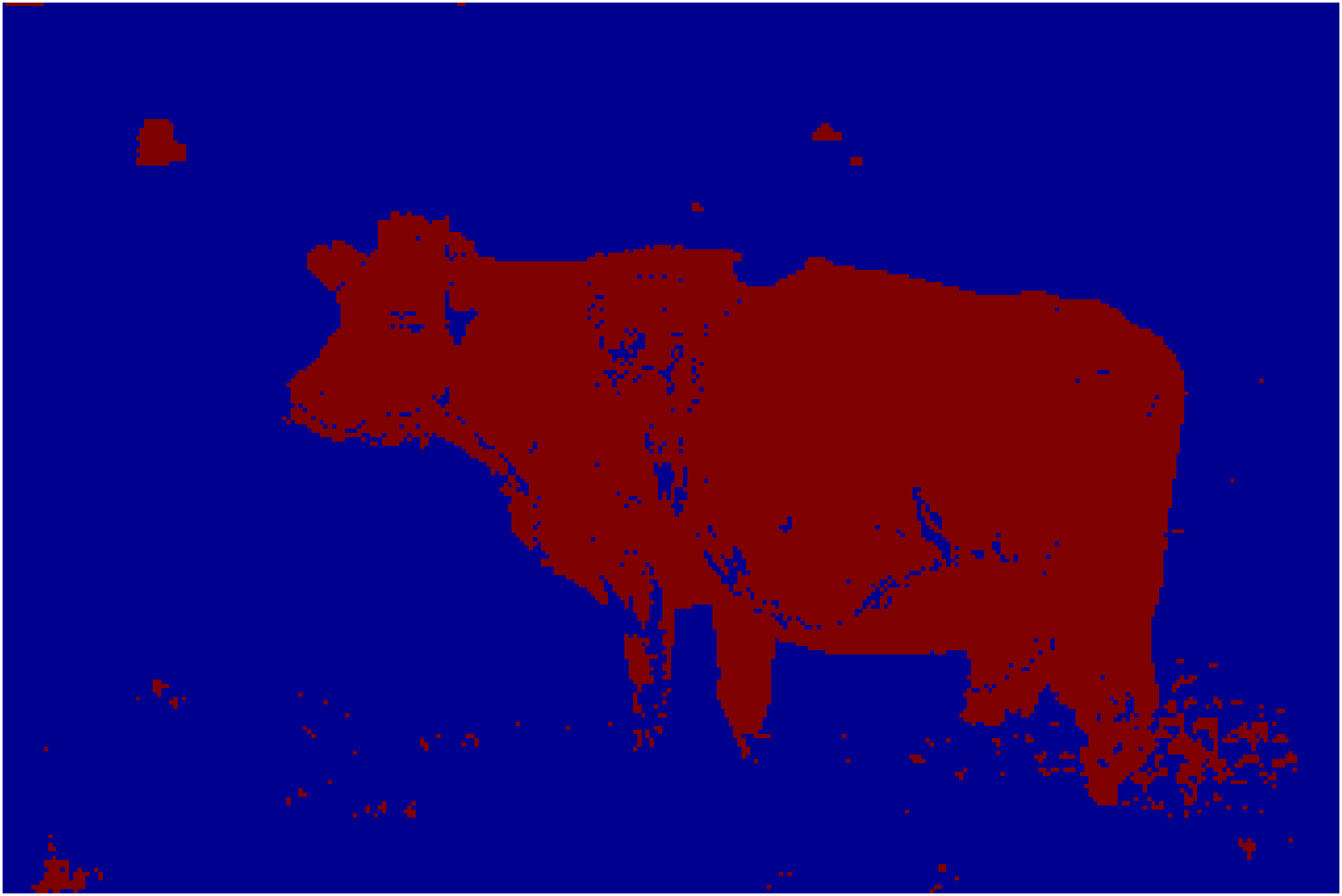}
\includegraphics[width=0.14\textwidth]{./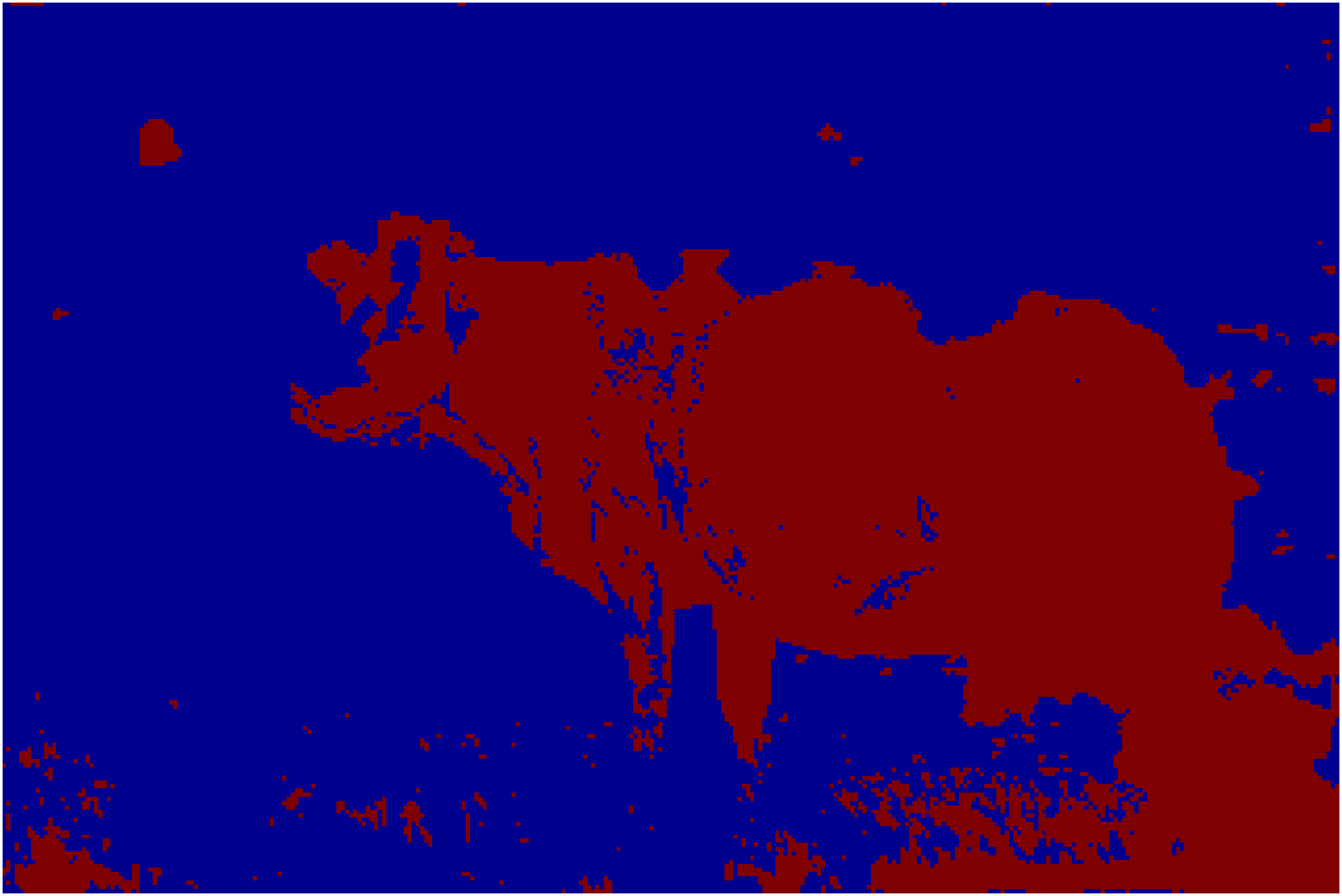}
\includegraphics[width=0.14\textwidth]{./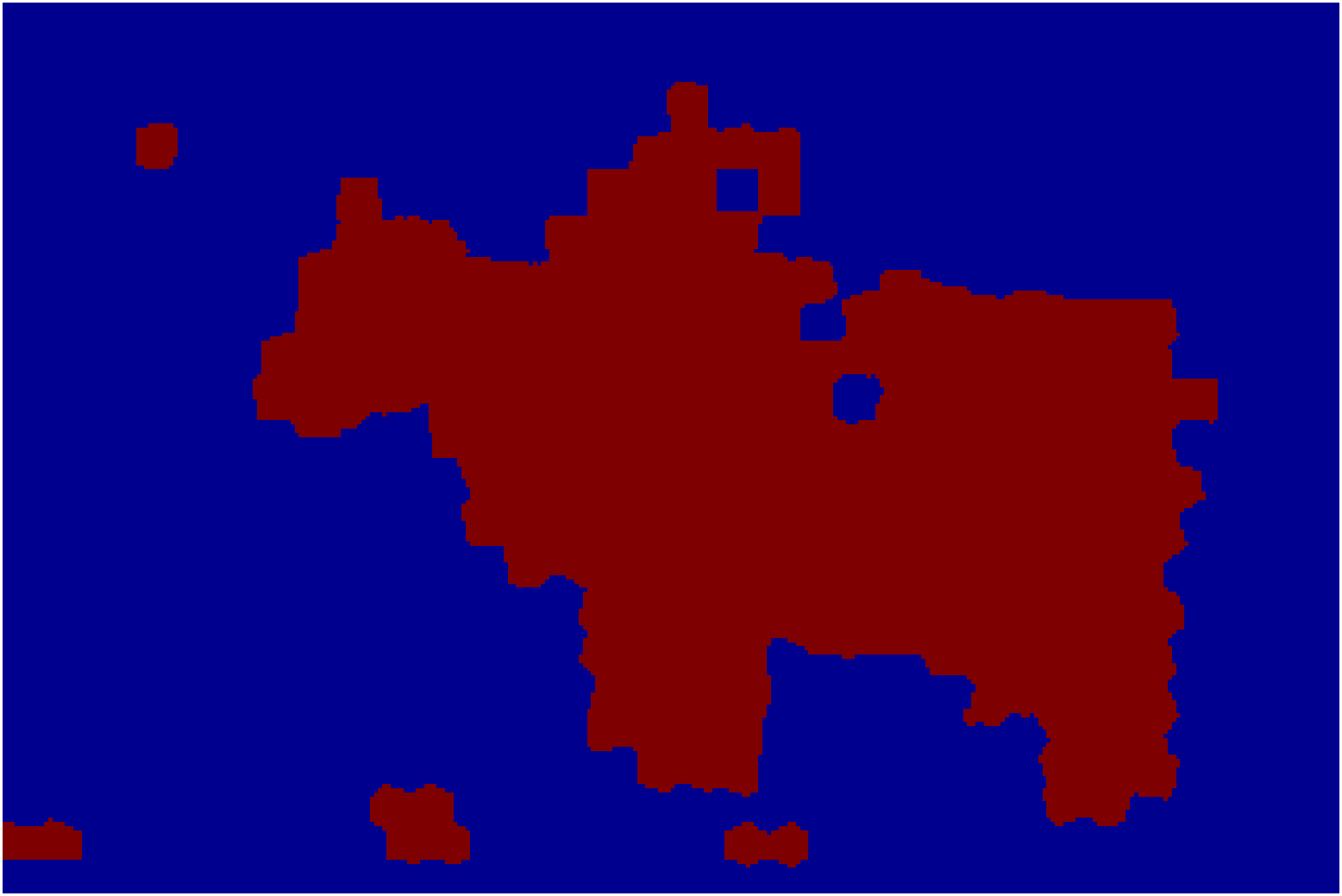}
\includegraphics[width=0.14\textwidth]{./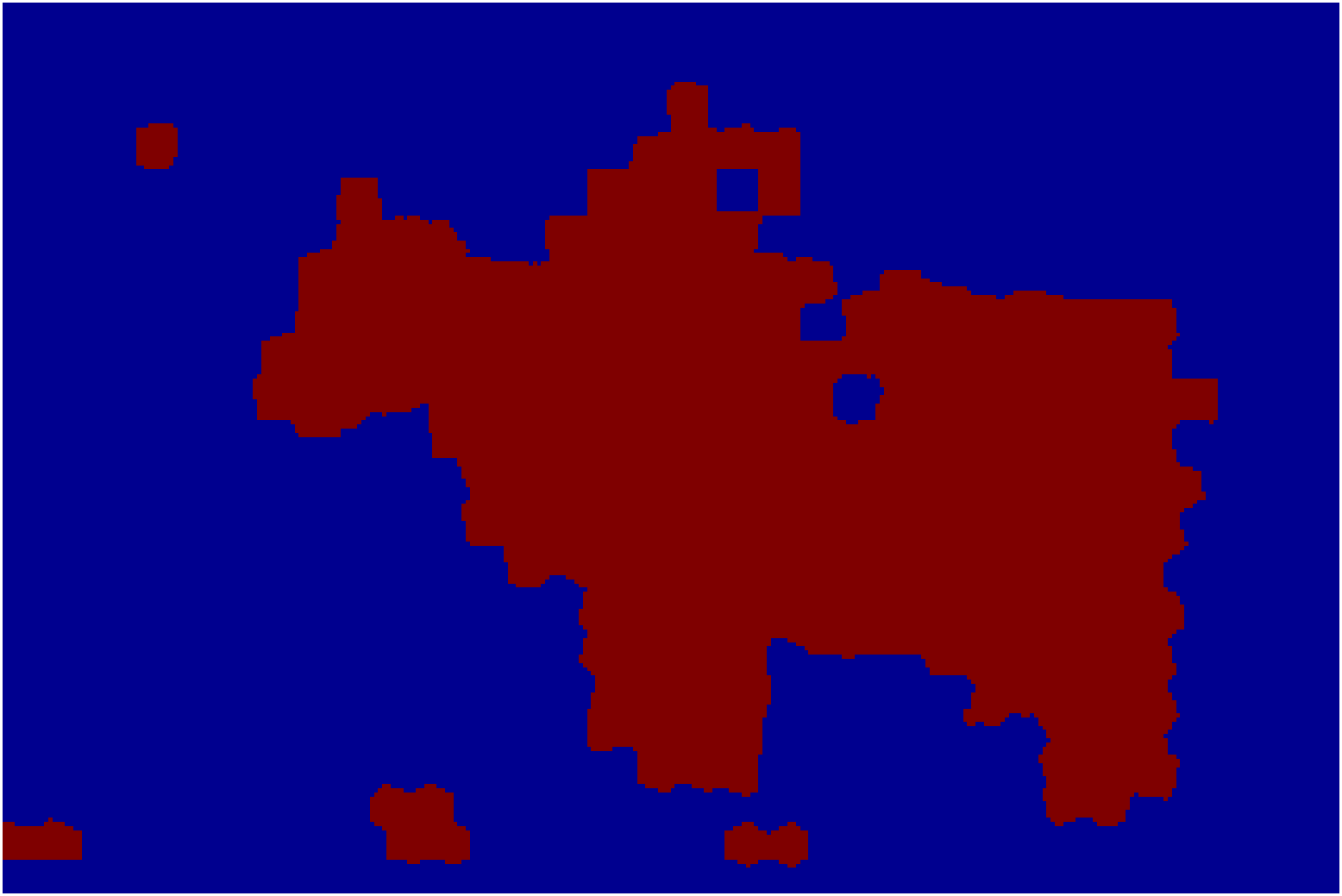}
\centering
}\\
\subfloat{
\includegraphics[width=0.14\textwidth]{./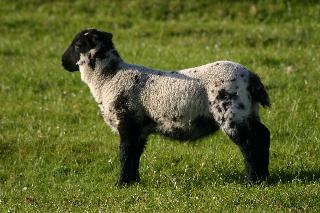}
\includegraphics[width=0.14\textwidth]{./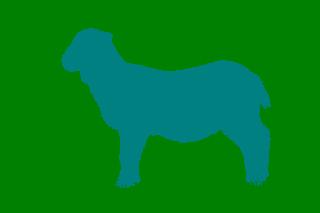}
\includegraphics[width=0.14\textwidth]{./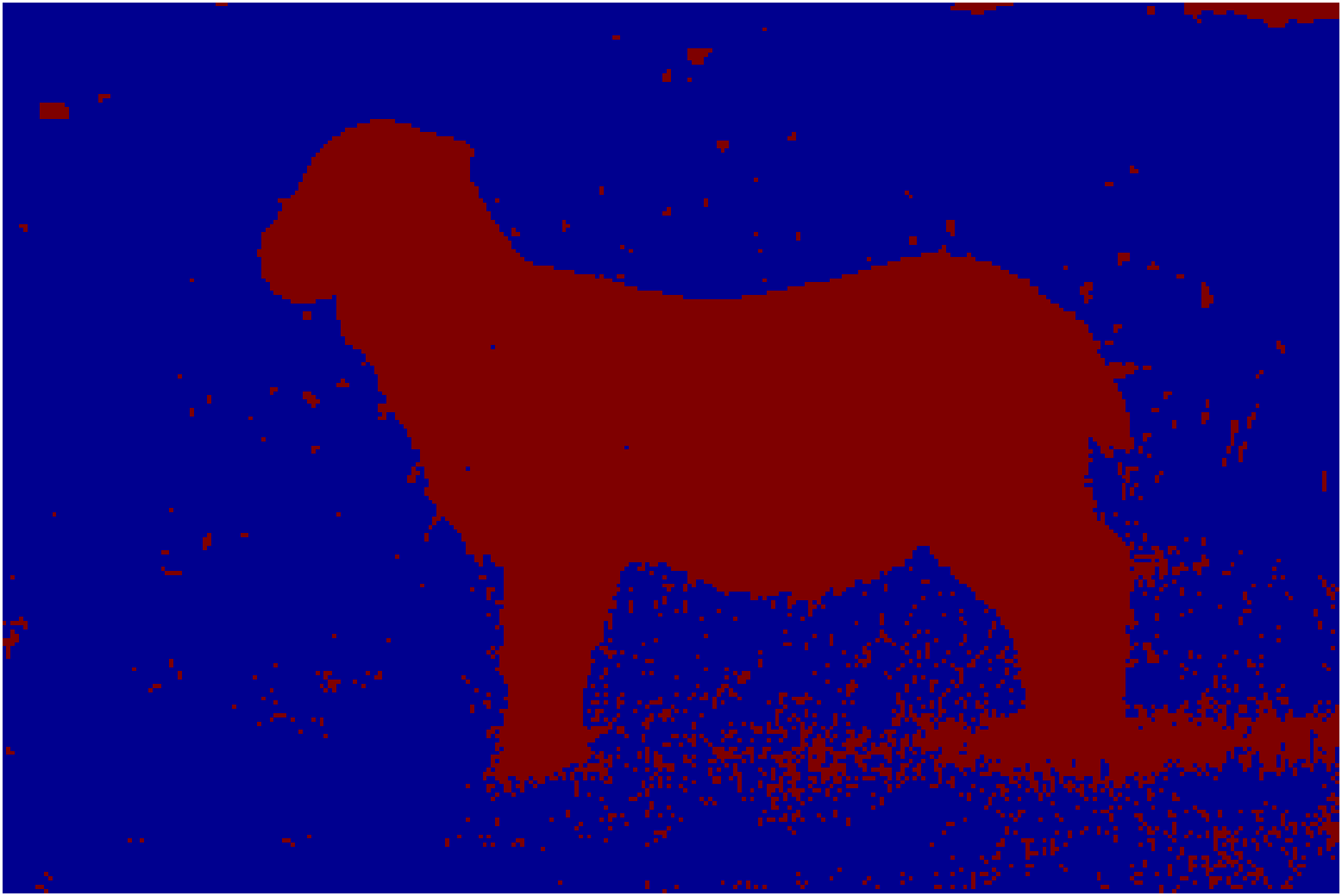}
\includegraphics[width=0.14\textwidth]{./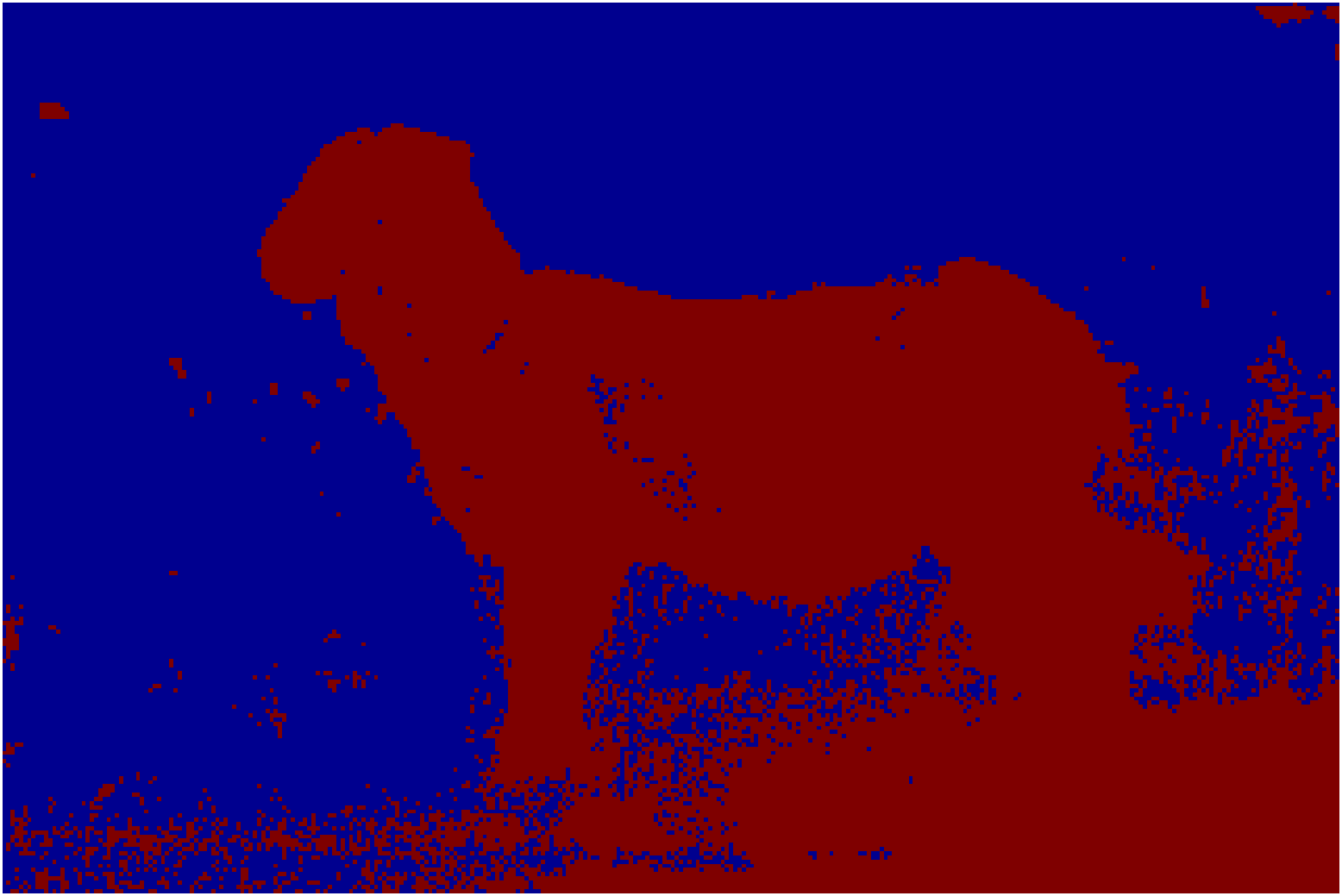}
\includegraphics[width=0.14\textwidth]{./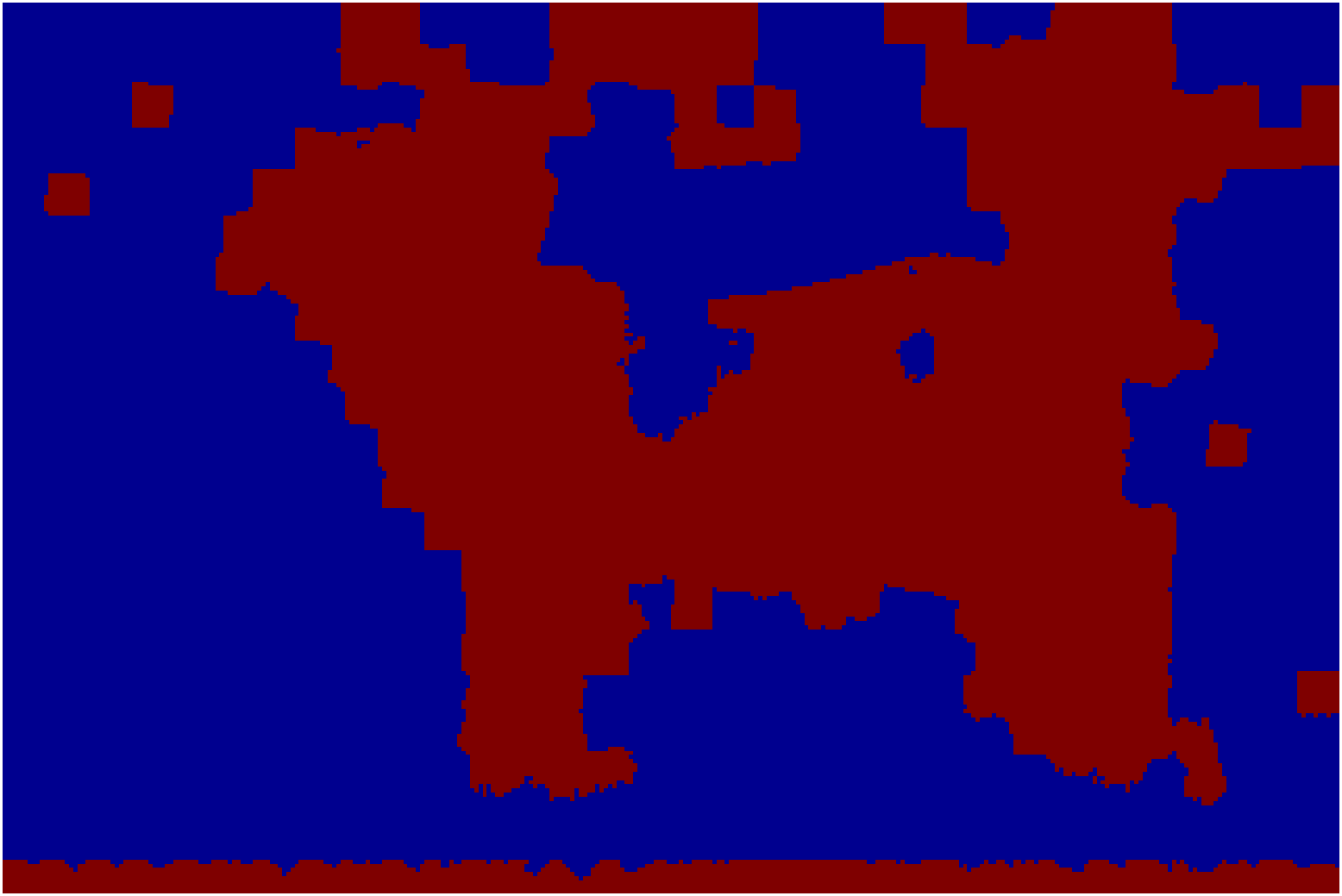}
\includegraphics[width=0.14\textwidth]{./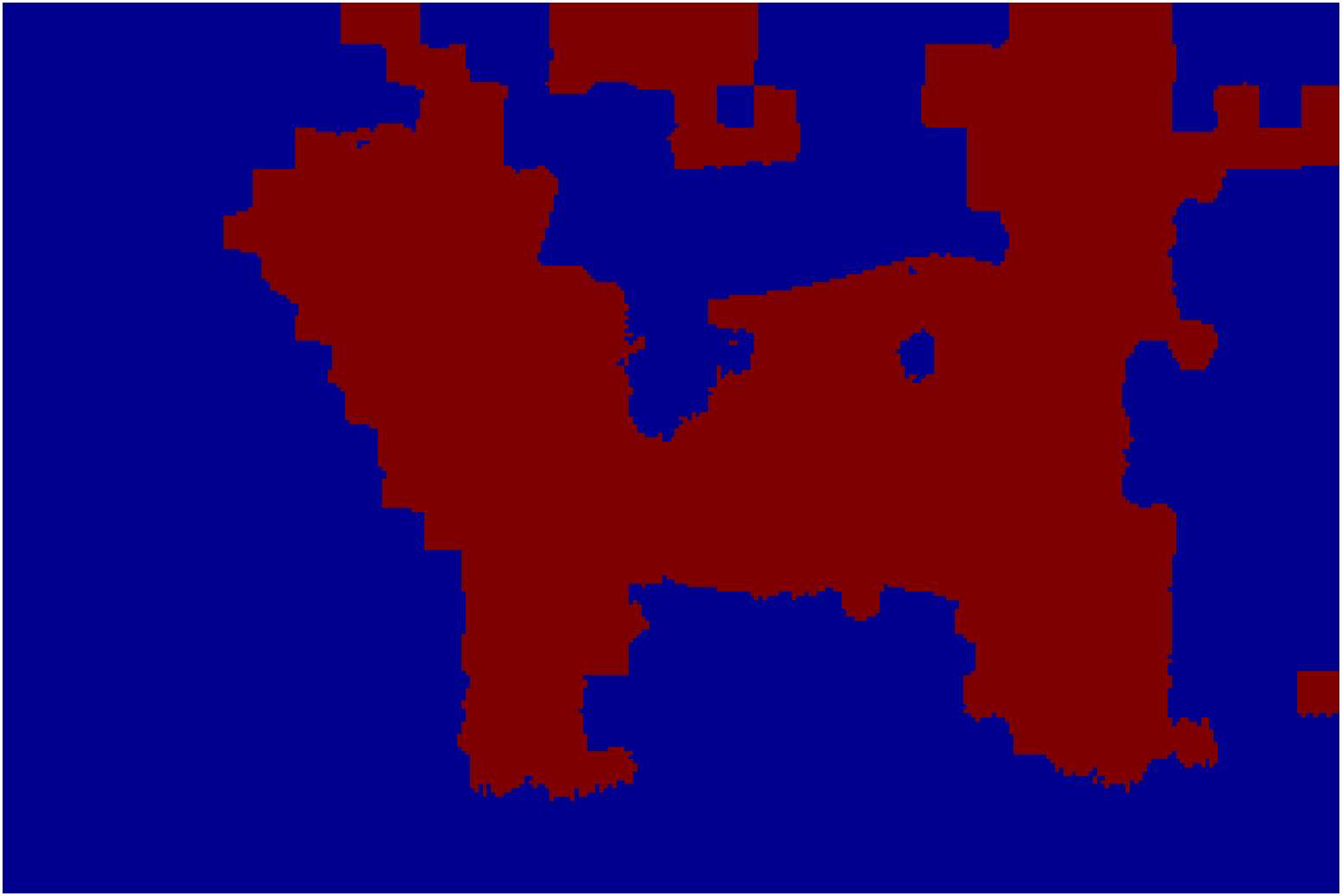}
\centering
}\\
\subfloat{
\includegraphics[width=0.14\textwidth]{./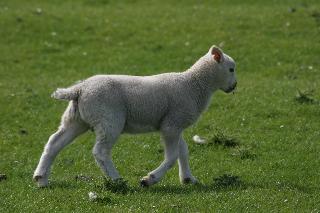}
\includegraphics[width=0.14\textwidth]{./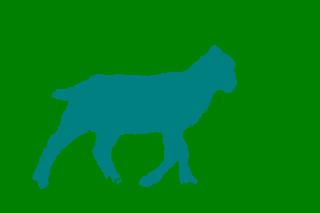}
\includegraphics[width=0.14\textwidth]{./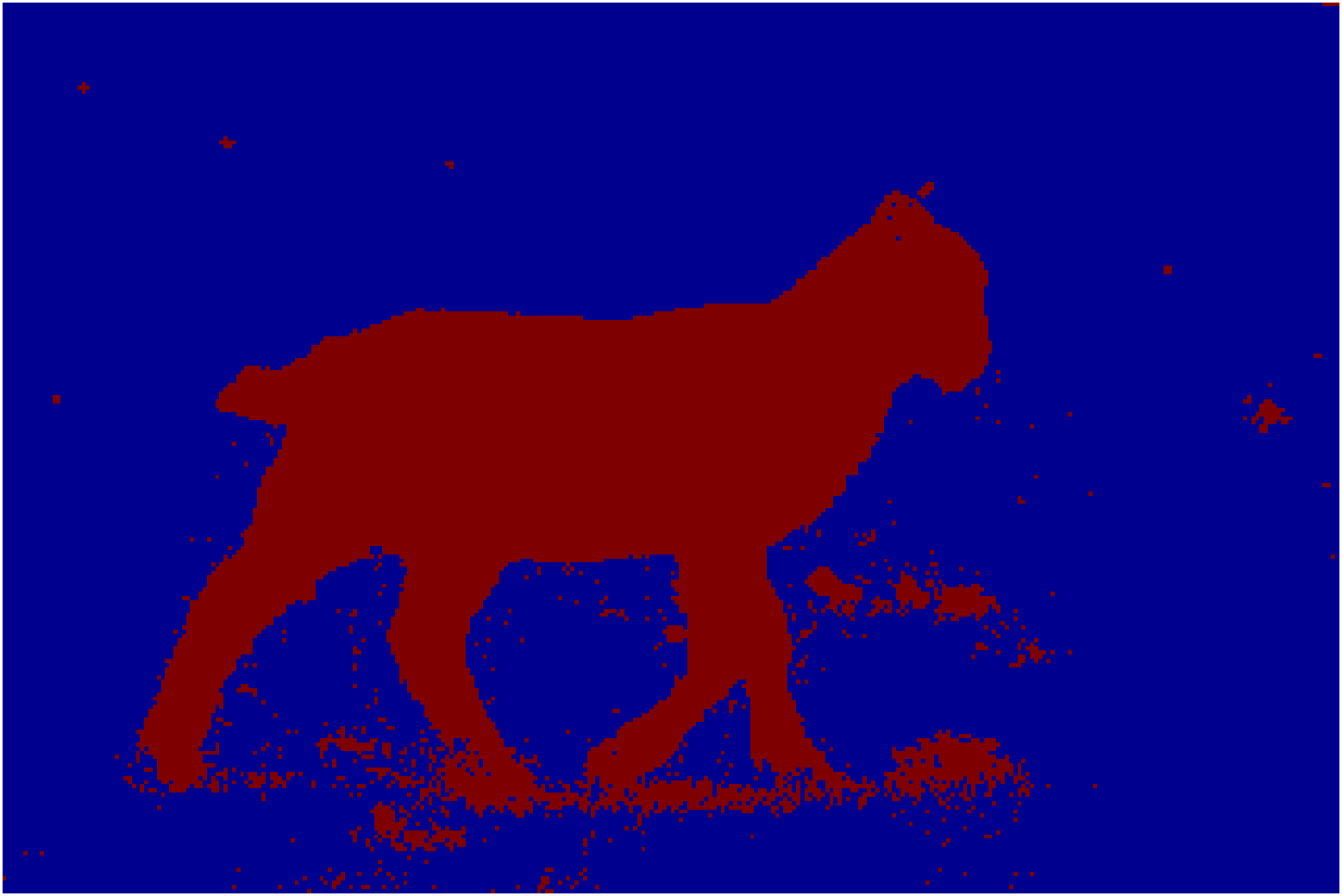}
\includegraphics[width=0.14\textwidth]{./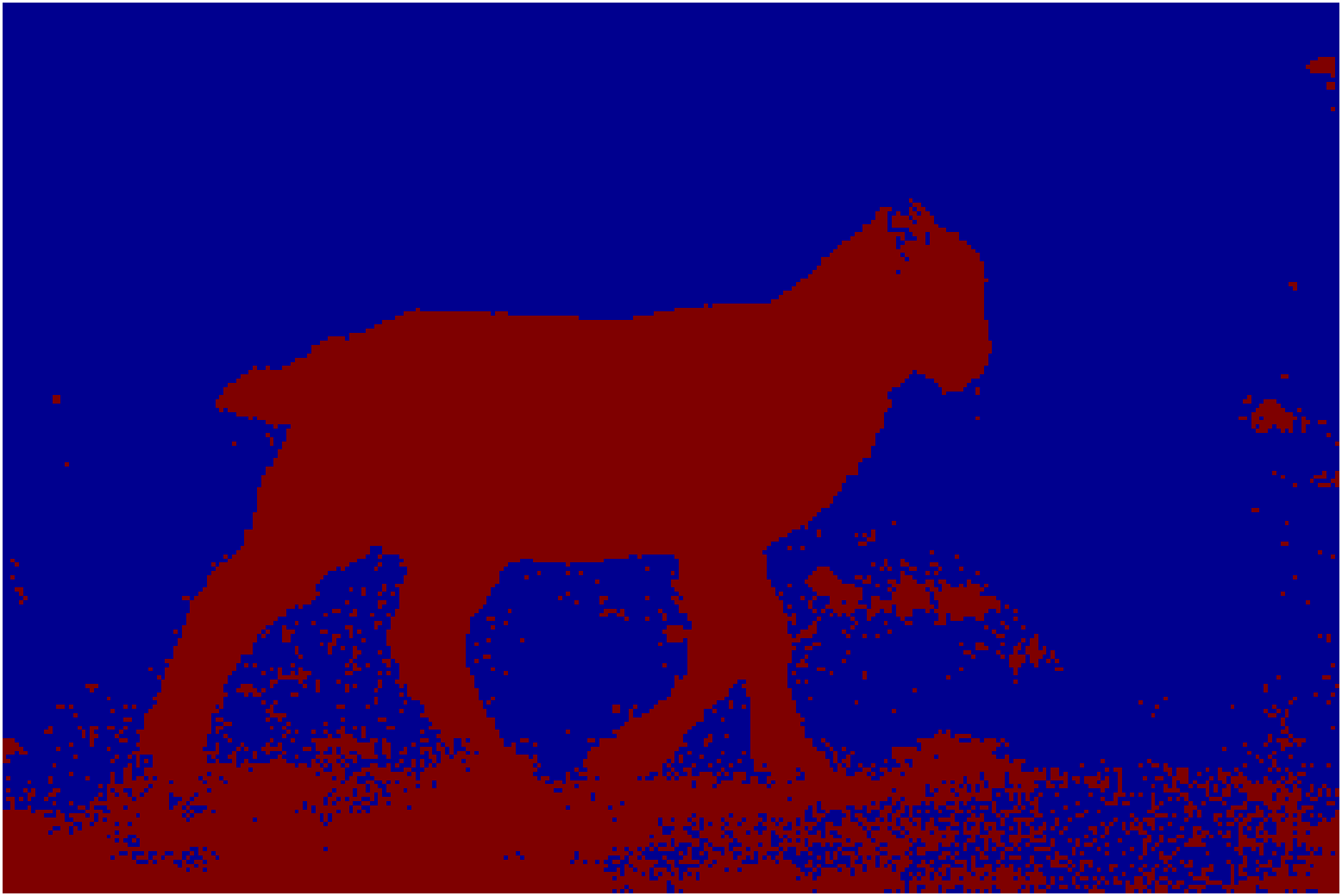}
\includegraphics[width=0.14\textwidth]{./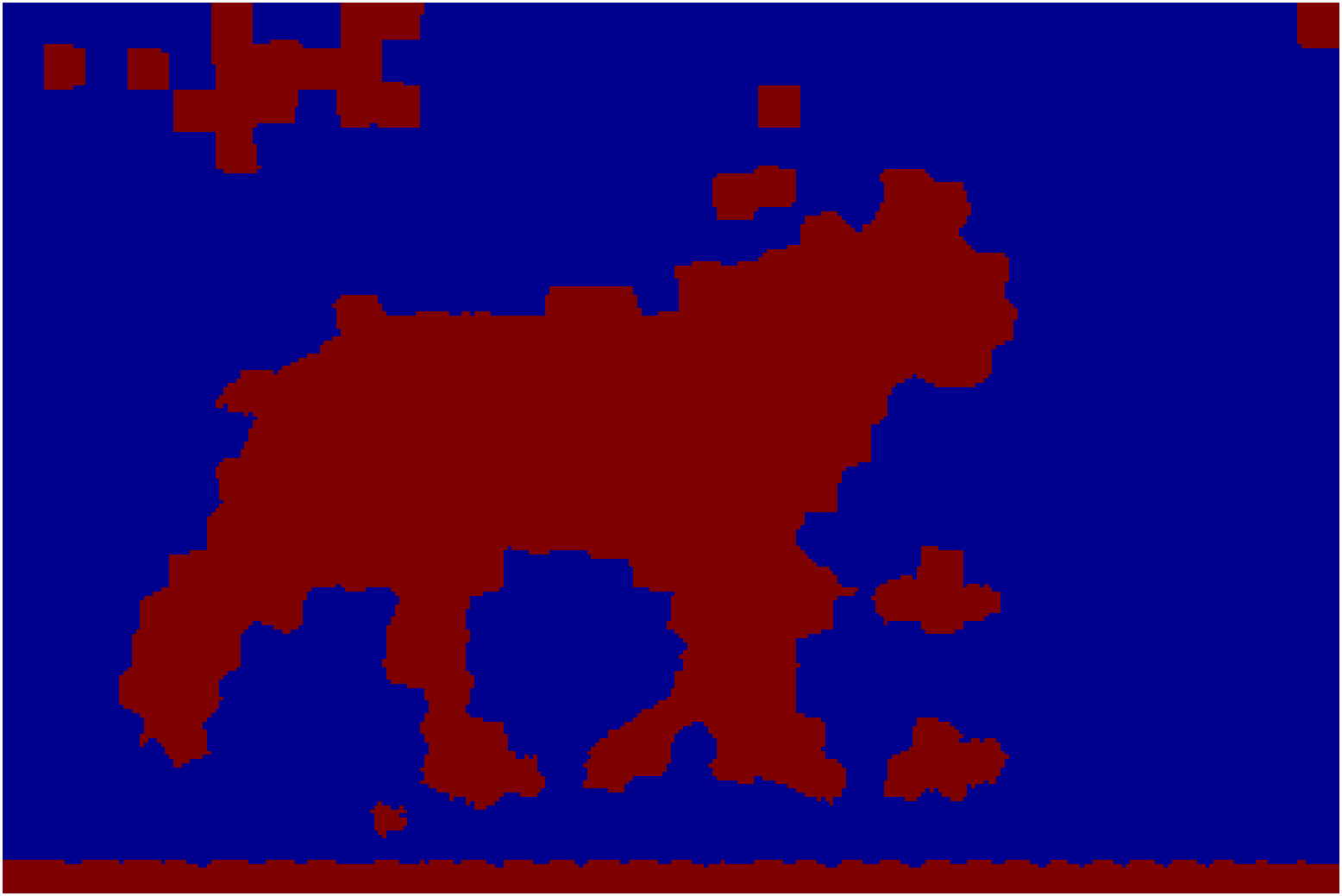}
\includegraphics[width=0.14\textwidth]{./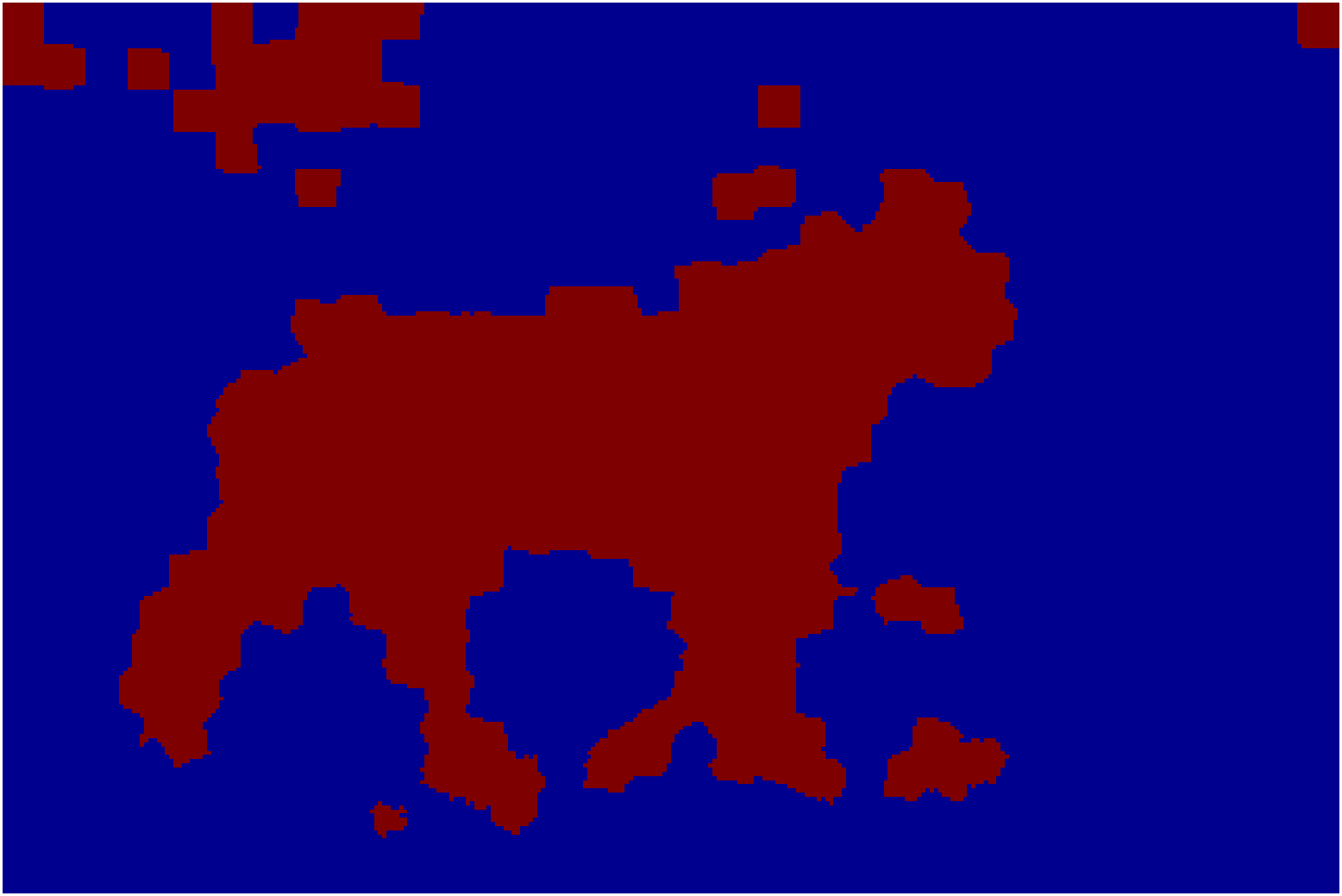}
\centering
}\\
\subfloat{
\includegraphics[width=0.14\textwidth]{./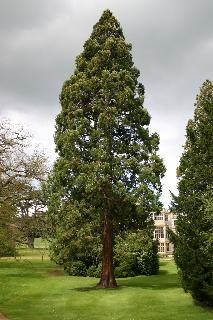}
\includegraphics[width=0.14\textwidth]{./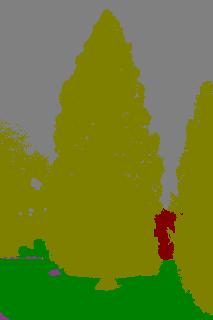}
\includegraphics[width=0.14\textwidth]{./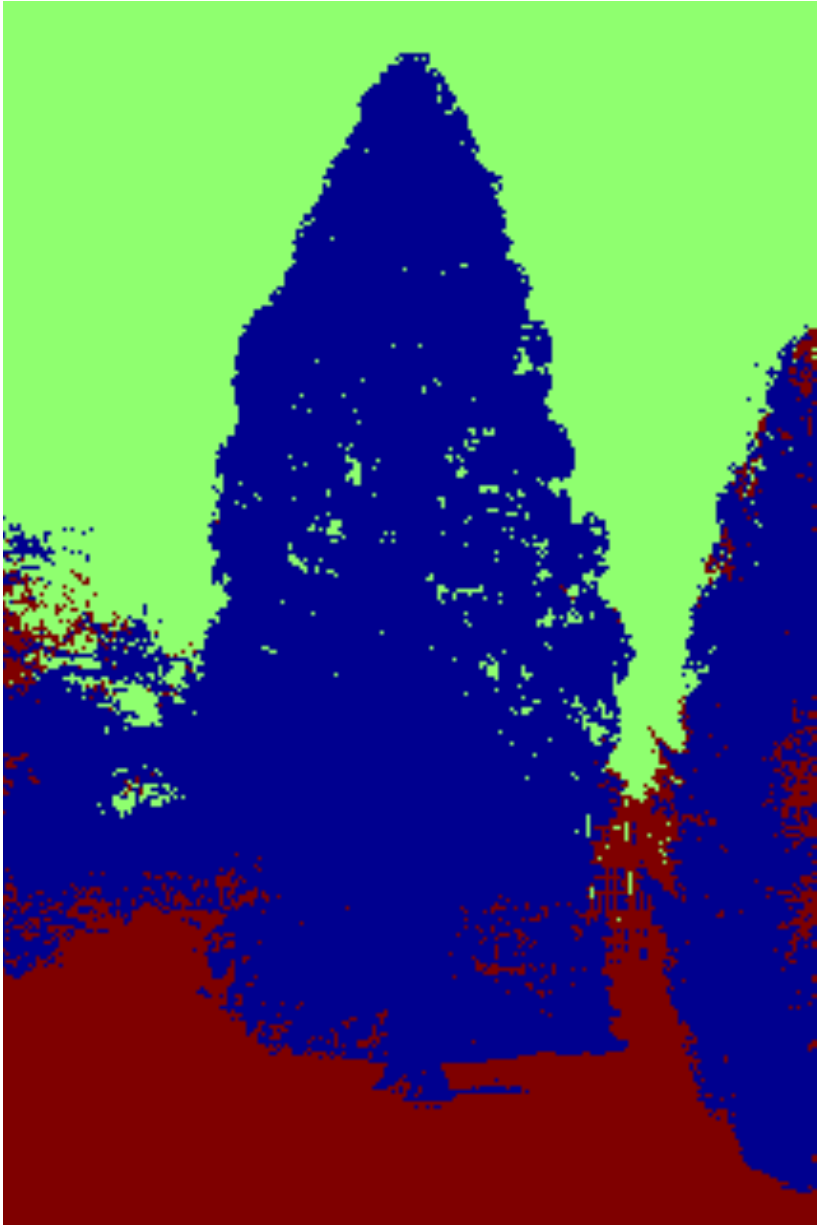}
\includegraphics[width=0.14\textwidth]{./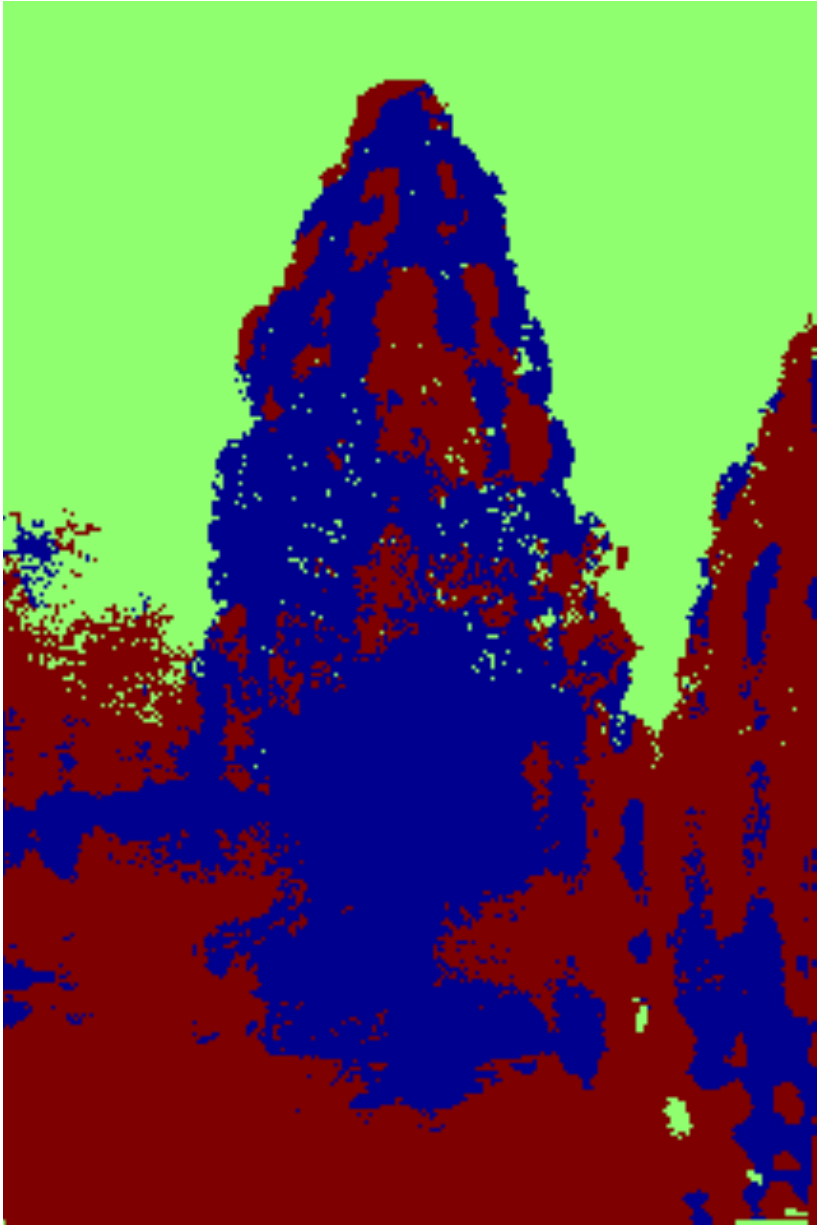}
\includegraphics[width=0.14\textwidth]{./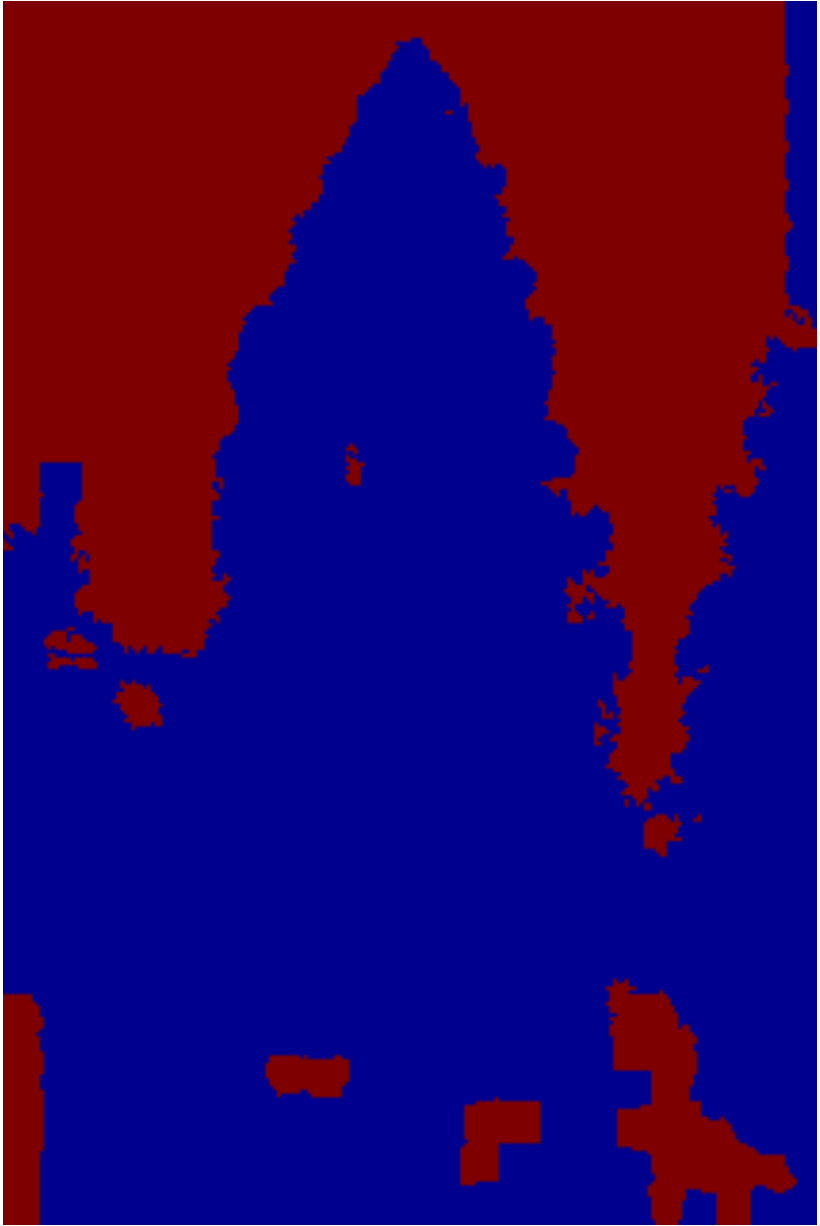}
\includegraphics[width=0.14\textwidth]{./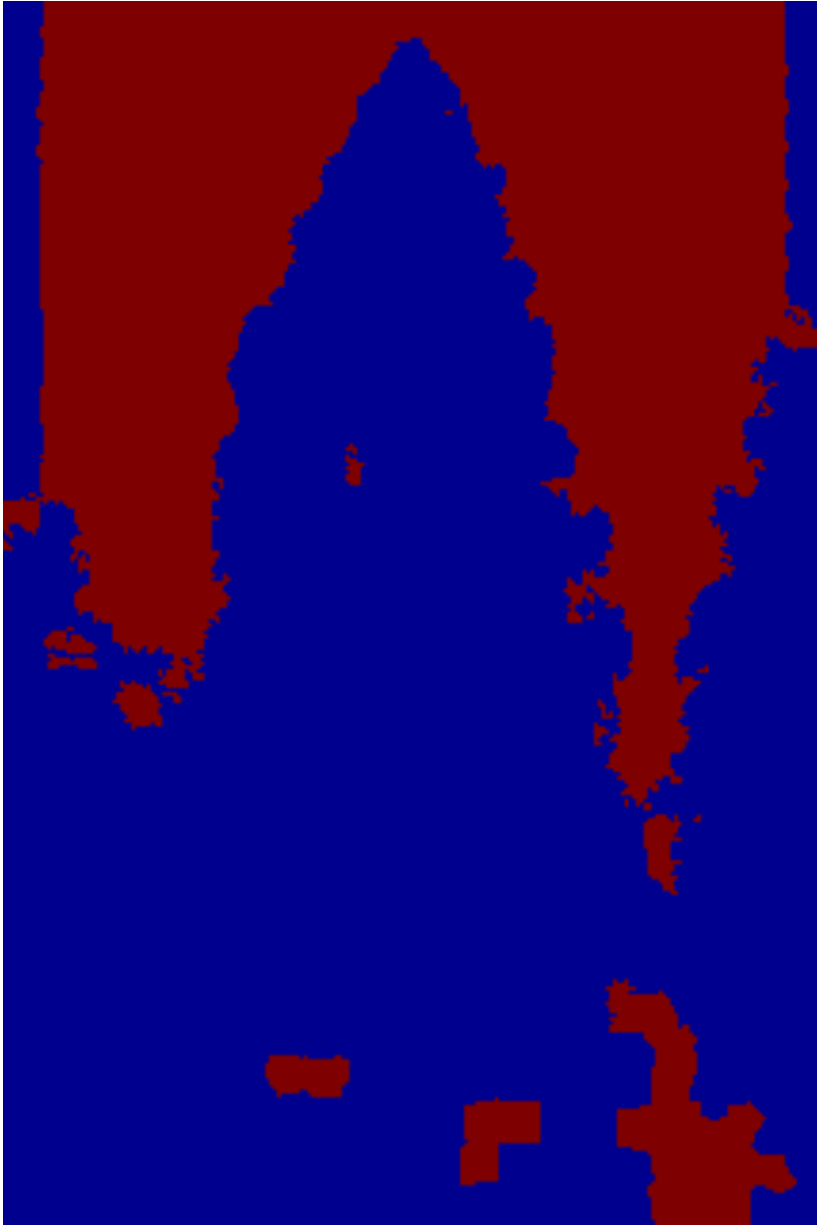}
\centering
}\\
\subfloat{
\includegraphics[width=0.14\textwidth]{./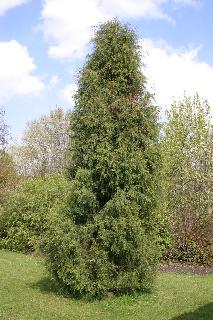}
\includegraphics[width=0.14\textwidth]{./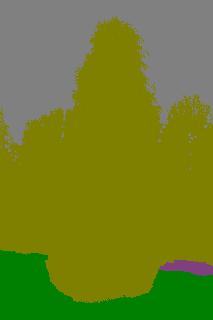}
\includegraphics[width=0.14\textwidth]{./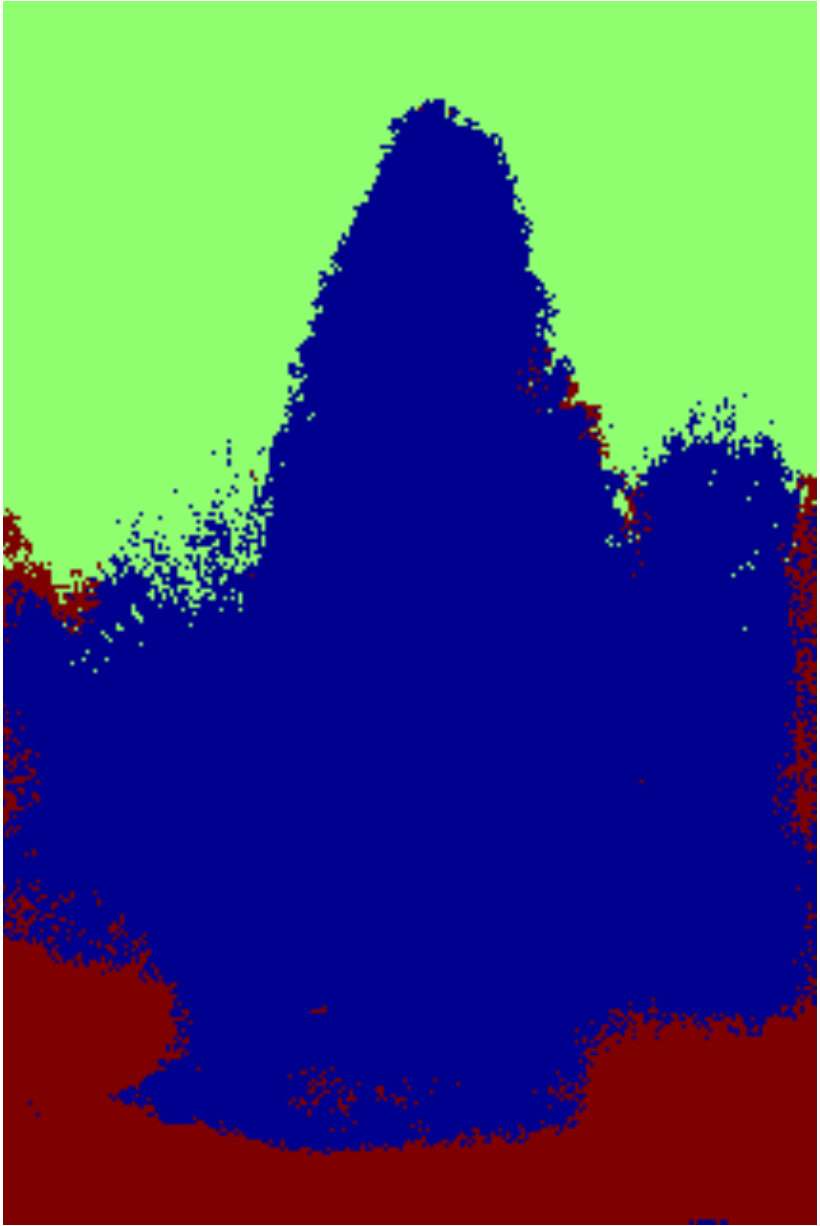}
\includegraphics[width=0.14\textwidth]{./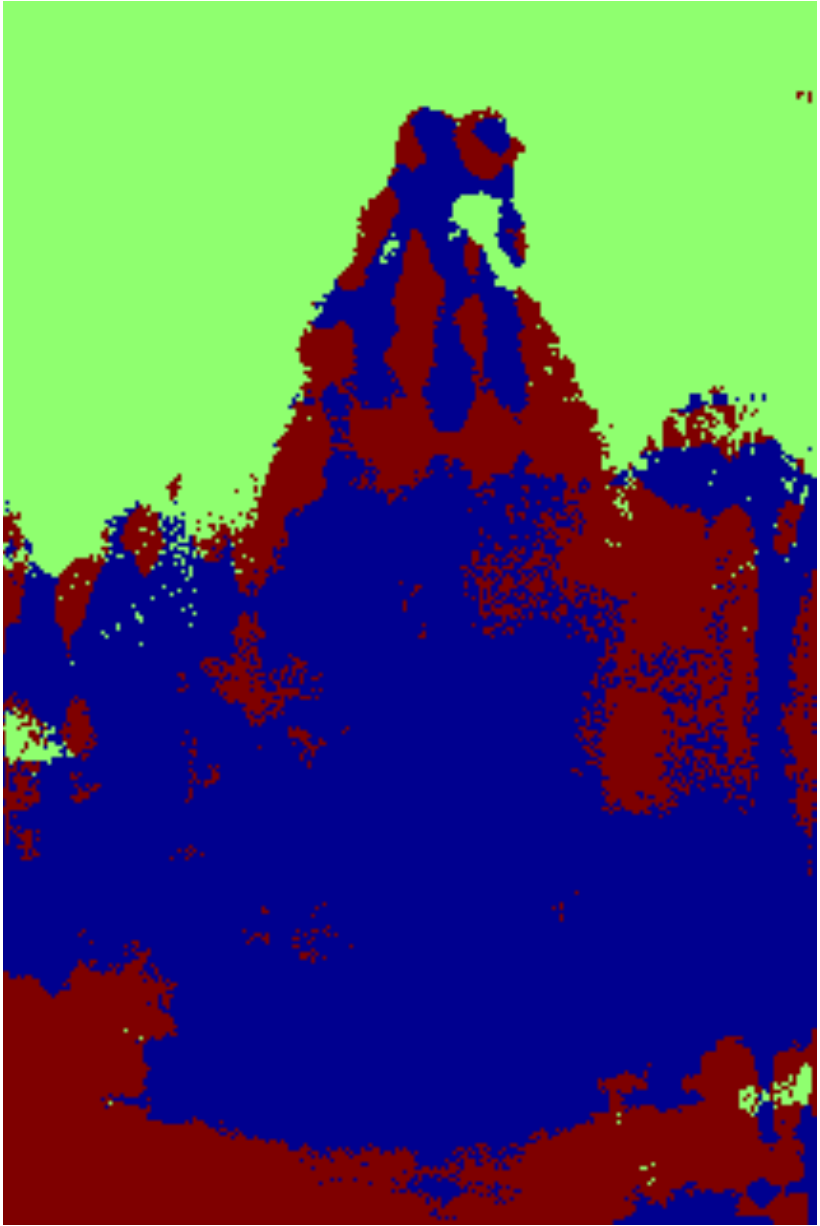}
\includegraphics[width=0.14\textwidth]{./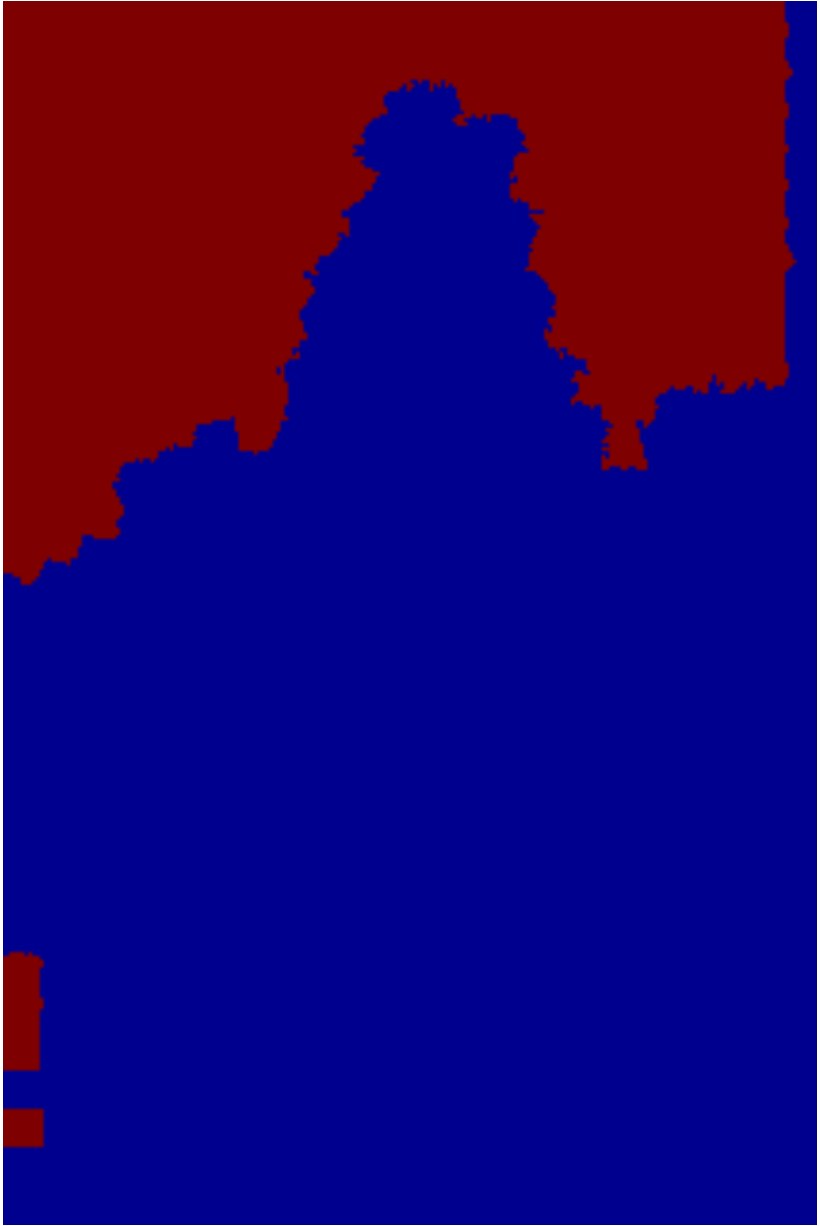}
\includegraphics[width=0.14\textwidth]{./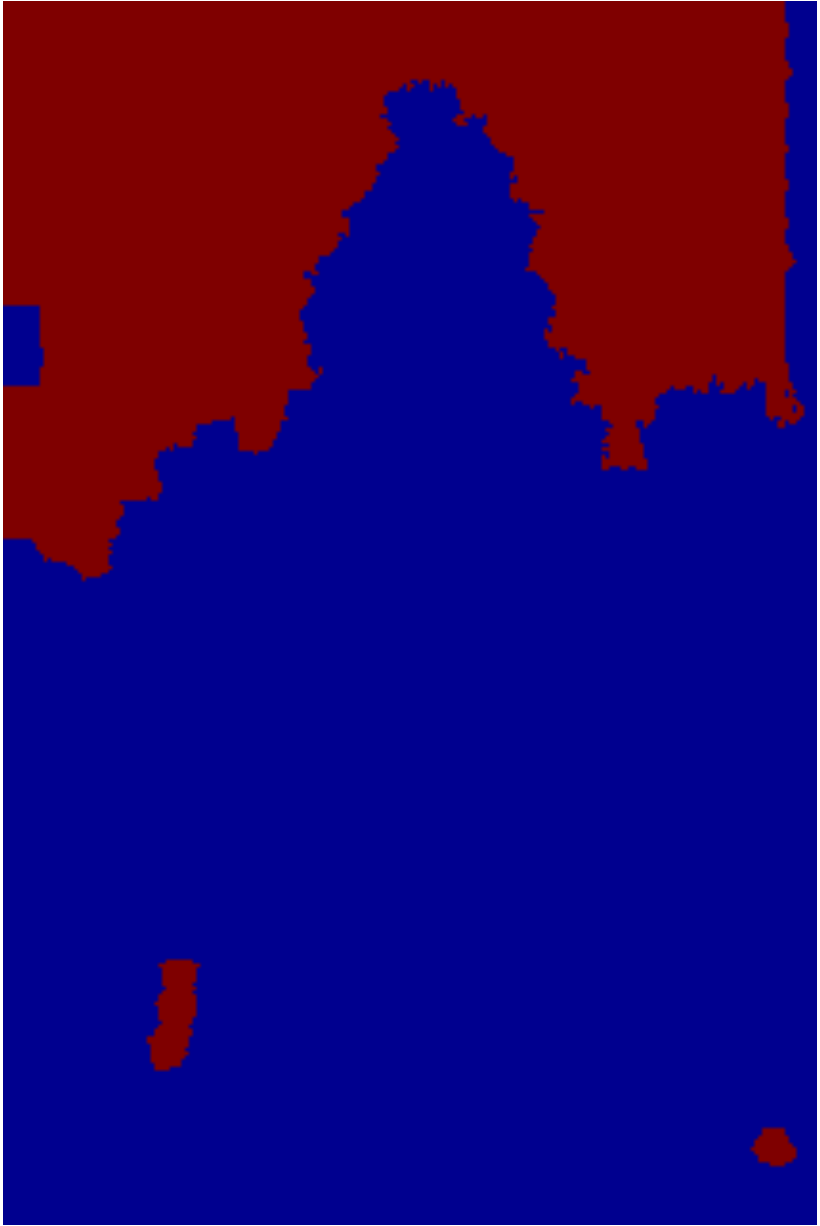}
\centering
}\\
\subfloat{
\centering
\begin{minipage}[c]{0.14\textwidth}
\centering
{\footnotesize Original images}
\end{minipage}
\begin{minipage}[c]{0.14\textwidth}
\centering
{\footnotesize Ground truth}
\end{minipage}
\begin{minipage}[c]{0.14\textwidth}
\centering
{\footnotesize \sdcutlr}
\end{minipage}
\begin{minipage}[c]{0.14\textwidth}
\centering
{\footnotesize MF+Nys.}
\end{minipage}
\begin{minipage}[c]{0.14\textwidth}
\centering
{\footnotesize SDLR}
\end{minipage}
\begin{minipage}[c]{0.14\textwidth}
\centering
{\footnotesize SDCut}
\end{minipage}
}
\end{centering}
\vspace{2mm}
\caption{Qualitative results for image co-segmentation.
Three classes of objects from MSRC datasets are used for the evaluation.
Our approach and Mean Field (MF+Nys.) are performed on the original pixel-level images.
Because SDLR~\cite{Joulin2010dis} and SDCut~\cite{Joulin2010dis} cannot scale up to pixel-level images,
they are evaluated on superpixels.  
Our method performs best visually. 
We randomly repeat mean field approximation $5$ times for each dataset and select the best result.
Mean field is not stable at this task 
and sometimes converges to an undesirable local optimal point (see ``tree'' for example).
SDLR and SDCut achieve worse results than our's, since some image details are lost due to the use of superpixels. 
}
\label{fig:cosegm}
\end{figure*}

\begin{table}[t]
  \centering
  \footnotesize
  \begin{tabular}{l@{\hspace{0.05cm}}c@{\hspace{0.05cm}}|@{\hspace{0.1cm}}c@{\hspace{0.1cm}}c@{\hspace{0.1cm}}@{\hspace{0.1cm}}c|@{\hspace{0.1cm}}c@{\hspace{0.1cm}}c@{\hspace{0.1cm}}c} %
  \hline
        Data  & $\#$pics & $N$   & \sdcutlr & MF+Nys. & $N$ & SDLR & SDCut \\
  \hline
  \hline
     Cow    & $10$ & $681600$  & $\mathbf{1415}$ & $1965$ & $6713$ & $9530$ & $307$ \\ %
     Sheep  & $8$  & $545280$  & $\mathbf{1066}$ & $2045$ & $5375$ & $6932$ & $583$ \\ %
     Tree   & $9$  & $613440$  & $\mathbf{1137}$ & $1490$ & $6026$ & $1090$ & $1316$\\ %
  \hline
  \end{tabular}
\vspace{0.1cm}
\caption{Running times for image co-segmentation. Our method is slightly faster than mean field.
The number of MRF variables $N$ for two groups of evaluated methods are shown in the third and sixth columns.
The problems solved  by our approach are much larger than those of SDLR and SDCut.}
\label{tab:cosegm1}
\end{table}

\begin{table}[t]
  \centering
  \footnotesize
  \begin{tabular}{l@{\hspace{0.1cm}}|c@{\hspace{0.1cm}}c@{\hspace{0.1cm}}c@{\hspace{0.1cm}}c} %
  \hline
        & \sdcutlr & MF+Nys. & SDLR & SDCut\\
  \hline
  \hline
     Cow     & $\mathbf{0.73}$($\mathbf{-1.59\cdot10^5}$) & $0.67$(${-1.58\cdot10^5}$) & $0.66$ & $0.69$ \\
     Sheep   & $\mathbf{0.74}$($\mathbf{-8.07\cdot10^4}$) & $0.49$(${-6.87\cdot10^4}$) & $0.57$ & $0.58$ \\
     Tree    & $\mathbf{0.83}$($\mathbf{-2.23\cdot10^5}$) & $0.65$(${-2.03\cdot10^5}$) & $0.66$ & $0.68$\\
  \hline
  \end{tabular}
\vspace{0.1cm}
\caption{\footnotesize Segmentation accuracy (energy) of image co-segmentation. 
Our method and Mean field work on original pixels, while SDLR and SDCut work on superpixels.
For all the three evaluated datasets, our method achieves the lowest energies and highest segmentation scores.
}
\label{tab:cosegm2}
\end{table}

\begin{figure}[t]
\centering
\includegraphics[width=0.4\textwidth]{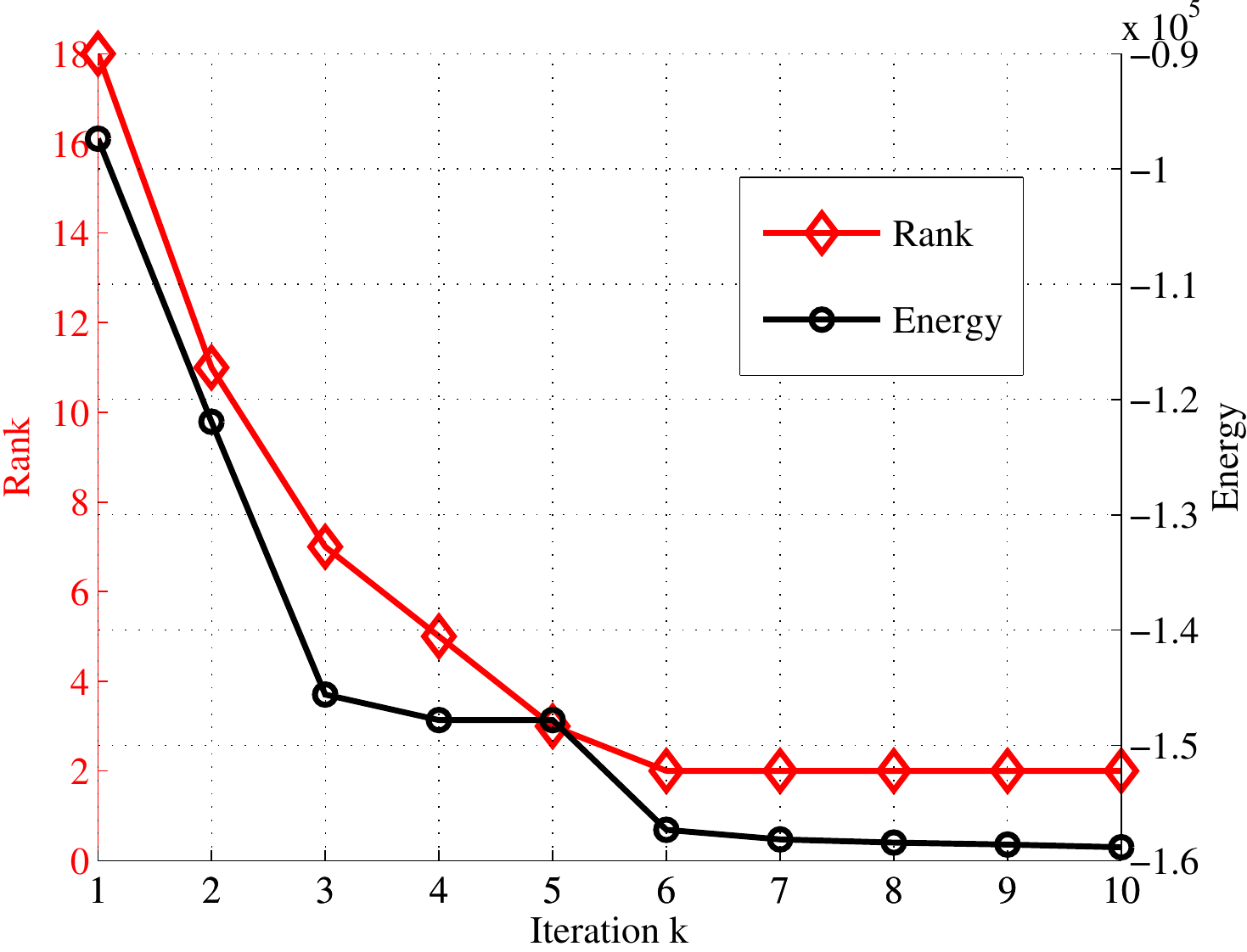}
\caption{Rank and Energy at each iteration for co-segmentation on the ``cow'' data set.
Both of the rank of $(\bC(\bu_{k}))_+$ and the energy of binary solution $\by_k$
decrease significantly in the first several $k$s. 
}
\label{fig:rank-energy}
\end{figure}

The image co-segmentation problem requires that the same object be segmented from multiple images.
There are two optimization criteria:
the color and spatial consistency within one image and
the separability between foreground and background over all images.
There is no unary potentials for image co-segmentation and 
the pairwise potentials are shown in the following:
\begin{subequations}
\begin{align}
K^{(1)}_{i,j} &= \varphi_{i,j} \exp \left(- \frac{|\bp_i - \bp_j|^2}{2 \theta^2_{\alpha}} - \frac{|\bc_i - \bc_j|^2}{2 \theta^2_{\beta}} \right), \\
\bK^{(2)}      &= \bOmega_N(\kappa N  \bI_N + \tilde{\bK}^{(2)})^{-1}\bOmega_N,
\end{align}
\end{subequations}
where $\varphi_{i,j} = 1$ if pixels $i$ and $j$ locate in the same image; $\varphi_{i,j} = 0$, otherwise.
$\kappa > 0$ is a regularization parameter.
$\bK^{(1)}$ is a block-diagonal matrix, and the matrix-vector product for $\bK^{(1)}$ can be computed using the method described in Section~\ref{sec:Application1}.
$\bK^{(2)}$ is  the inter-image discriminative clustering cost matrix (see \cite{Joulin2010dis} for details).
$\bOmega_N = \bI_N - \frac{1}{N}\mathbf{1}\mathbf{1}^{\T}$ is the centering projection matrix, and $\tilde{\bK}^{(2)}$
is the $\chi^2$ kernel matrix of sift features.
$\tilde{\bK}^{(2)}$ can be approximated by a low-rank decomposition: 
$\tilde{\bK}^{(2)} \approx \tilde{\bPhi} \tilde{\bPhi}^{\T}$, where $\tilde{\bPhi} \in \mathbb{R}^{N\times R_{K_2}}$.
Based on  the matrix inversion lemma, we have:
\begin{align}
\bK^{(2)}  &= \frac{1}{\kappa N } \bOmega_N \Big( \bI_N -
         \underbrace{\tilde{\bPhi} (\kappa N  \bI_D + \tilde{\bPhi}^{\T} \tilde{\bPhi})^{-1} \tilde{\bPhi}^{\T}}
         _{\text{decompose to } \bPhi \bPhi^{\T} \text{ and } \bPhi \bPhi^{\T} \mathbf{1} = 0 }
     \Big)\bOmega_N \notag \\
  &= \frac{1}{\kappa N } (\bOmega_N - \bPhi \bPhi^{\T} ).
\end{align}
Through the above equation, the matrix-vector product for $\bK^{(2)}$ can be computed
efficiently in $\mathcal{O}(NR_{K_2})$ time ($R_{K_2}$ is set to $640$ in the experiments).
The pairwise potentials are not necessarily submodular, because entries of $\bK^{(2)}$ may be negative.
Note that {\em the matrix-vector product for $\bK^{(2)}$ cannot be performed by the filter-based method of
   \cite{koltun2011efficient},
because $\bK^{(2)}$ may not be a Gaussian kernel}.

{\bf Experiments}
Three groups of images are selected from the MSRC dataset for image co-segmentation.
Besides our approach and mean field, the SDP-based algorithms
in \cite{Joulin2010dis} (denoted as SDLR) and \cite{peng2013cpvr} (denoted as SDCut)
are also evaluated.
Our method and mean field are evaluated at the original pixel level,
while SDLR and SDCut are evaluated only on superpixels. 

The code for SDLR and SDCut is provided by authors of the original papers,
where the default settings are used.
The iteration limit for mean field is set to $100$.
To prevent mean field from converging to undesirable local optima, we randomly run the method $5$ times.
All experiments are conducted on a single CPU with $20$GB memory.
The {\em intersection-over-union} accuracy is used to measure the segmentation performance.

From the results illustrated in Fig.~\ref{fig:cosegm}, 
we see that our approach achieves much more accurate co-segmentation results than both SDLR and SDCut.
The performance of mean field is also worse than ours.

Table~\ref{tab:cosegm1} demonstrates the number of variables and computational time for each method.
The number of variables for the problem solved by our method and mean field is around $100$ times larger than those for SDLR and SDCut.
Our approach is slightly faster than mean field, and significantly more scalable than SDLR and SDCut.

The quantitative performance is shown in Table~\ref{tab:cosegm2}.
Our approach achieves significantly better co-segmentation accuracy than all the other methods.
As for energy, our approach also produces lower energies than mean field.
Empirically, we found mean field is sensitive to initialization. 
Take ``tree'' as example, the difference is $5.3 \cdot 10^4$ between the best and
worst energy in the $5$ repeats of mean field with random initializations. 
If we repeat mean field $100$ times, the best energy improves
from ­$-2.03\cdot10^5$ to $-­2.08\cdot10^5$, but still worse than ours ($­-2.23\cdot10^5$).

Fig.~\ref{fig:rank-energy} shows the change of $\frank((\bC(\bu^{(k)}))_+)$ and $\tilde{\fE}(\by^{(k)})$
\wrt iteration $k$. Both of the rank and energy drops quickly in the first several iterations.
Simultaneously, the lower-bound of the optimal energy $\tilde{\fE}(\by)$ (\ie the dual objective value) 
increases from $-8.09\cdot 10^7$ to $-4.36 \cdot 10^5$.

\section{Conclusion}

In this paper, we have proposed an efficient, general method for the MAP estimation of fully-connected CRFs.
The proposed SDP approach is more stable and accurate than mean field approximation,
which is also more scalable than previous SDP methods.
The use of low-rank approximation of the kernel matrix to perform matrix-vector products
makes our approach even more efficient and applicable for any symmetric positive semidefinite kernel.
In contrast, previous filter-based methods assume pairwise potentials to be
based on a Gaussian or generalized RBF kernel.
The computational complexity of our approach is linear in the number of CRF variables.
The experiments on image co-segmentation validate that our approach can be applied on more general problems than previous methods.

As for future works, the proposed method can be parallelized to achieve even faster speed.
The core of our method is quasi-Newton (or gradient descent) and eigen-decomposition,
both of which can be parallelized on GPUs.
Matrix-vector products, the main computational cost, can be implemented using CUDA function ``cublasSgemm''.

\section{Appendix}

\subsection{SDP formulation for an arbitrary label compatibility function}
\label{sec:arbitrarymu}
For an arbitrary label compatibility function $\mu: \setL^2 \rightarrow [0,1]$, with the properties that
$\mu(l,l') = \mu(l',l), \forall l,l' \in \setL$ and $\mu(l,l) = 0, \forall l \in \setL$,
the objective function of \eqref{eq:min_energy}, $\mathrm{E}(\bx)$, can be re-written as follows:
\begin{subequations}
\label{eq:appdix2}
\begin{align}
\mathrm{E}(\bx) &= \sum_{i \in \setN} \psi_i (x_i)
    + \sum_{i,j \in \setN, i < j} \mu(x_i,x_j) K_{i,j}, \\
                &= \sum_{i \in \setN, l \in \setL} \psi_i(l) \delta(x_i = l)
    + \sum_{i,j \in \setN, i < j} \sum_{l,l' \in \setL} \mu(l,l') \delta(x_i = l) \delta(x_j = l') K_{i,j}, \\
                &= {\bh}^{\T} \by
    +  \frac{1}{2} \by^{\T} \left( (\bU+\mathbf{1} \mathbf{1}^{\T}) \otimes \bK \right) \by,  \\
                &= {\bh}^{\T} \by
    +  \frac{1}{2} \by^{\T} \left( \bU \otimes \bK \right) \by + \frac{1}{2} \mathbf{1}^{\T} \bK \mathbf{1},
\end{align}
\end{subequations}
where $\by \in \{0,1 \}^{NL}$, $\bh \in \mathbb{R}^{NL}$, $\bU \in \mathcal{S}^L$
are defined as
$y_{(i-1)L-l} = \delta(x_i = l)$,
${h}_{(i-1)L-l} =  \psi_i (l)$,
        $\forall i \in \setN, l \in \setL$
and $U_{l,l'} = \mu(l,l')-1, \ \forall l, l' \in \setN$.
Such that
the energy minimization problem \eqref{eq:min_energy} can be equivalently reformulated to the following binary quadratic problem:
\begin{subequations}
\label{eq:bqp1}
\begin{align}
\min_{\by \in \{ 0,1\}^{N\!L}}
&\quad \hat{\fE}(\by) := {\bh}^{\T} \by + \frac{1}{2} \by^{\T} \left(\bU \otimes \bK \right) \by, \\
\sst \quad &\quad {\textstyle \sum_{l =1}^L} y_{(i-1)L+l} = 1, \,\, \forall i \in \setN, \label{eq:bqp1_cons}
\end{align}
\end{subequations}
Note that
there is also a one-to-one correspondence between the set of $\bx \in \setL^N$
and the set of $\by \in \{ 0,1\}^{NL}$ satisfying \eqref{eq:bqp1_cons},
and $\mathrm{E}(\bx) = \hat{\mathrm{E}}(\by) + \frac{1}{2} \mathbf{1}^{\T} \bK \mathbf{1}$
for equivalent $\bx$ and $\by$.

By defining $\bY := \by \by^{\T}$, the SDP relaxation to \eqref{eq:bqp1} can be expressed as:
\begin{subequations}
\label{eq:sdp1}
\begin{align}
\min_{\bY \in \setS^{N\!L}_+} &\quad \langle \bY, \mathrm{Diag}(\bh) + \frac{1}{2} \bU \otimes \bK \rangle, \\
\sst  \quad     &\quad \textstyle{\sum_{l = 1}^L} (Y_{{\tiny \begin{array}{l} (i-1)L+l,  \\ (i-1)L+l \end{array}}}) = 1, \, \forall i \in \setN, \label{eq:sdp1_cons1}\\
           &\quad \frac{1}{2} (Y_{{\tiny \begin{array}{l} (i-1)L+l,  \\ (i-1)L+l' \end{array}}} +
                               Y_{{\tiny \begin{array}{l} (i-1)L+l', \\ (i-1)L+l \end{array}}}) = 0,
            \,\,\forall l\neq l',   l,l' \in \setL,  i \in \setN, \label{eq:sdp1_cons2}
\end{align}
\end{subequations}
and we have $\mathrm{trace}(\bY) = N$ due to constraints~\eqref{eq:sdp1_cons1}.
The non-convex constraint $\mathrm{rank}(\bY)=1$ is dropped by the above SDP relaxation.
There are $1$ constraint \eqref{eq:sdp1_cons1} and $L(L-1)/2$ constraints \eqref{eq:sdp1_cons2} for each $i \in \setN$.
The problem \eqref{eq:sdp1} can also be expressed in the form of \eqref{eq:backgd_sdp1}, and solved by SDCut algorithm.
In this case, $n = NL$, $\eta = N$, $q = N+NL(L-1)/2$, $\bA = \frac{1}{2} \bU \otimes \bK$,
and
\begin{align}
\sum_{i=1}^{q} u_i \bB_i = \mathrm{Diag}(\bu_1) \otimes \bI_L + \frac{1}{2}
{\scriptsize \left[  \begin{array}{ccc}   \mathrm{{LTri}}(\bu_{2,1}) & \cdots & \mathbf{0}\\
                                       \vdots & \ddots & \vdots \\
                                       \mathbf{0} & \cdots& \mathrm{{LTri}}(\bu_{2,N})
           \end{array} \right]},
\end{align}
where $\bu_1 \in \mathbb{R}^N$ denotes the dual variables \wrt the constraints \eqref{eq:sdp1_cons1},
and $\bu_{2,i} \in \mathbb{R}^{L(L-1)/2}$ corresponds to the constraints \eqref{eq:sdp1_cons2} for each $i \in \setN$.
We also have $\bu = [\bu_1^\T, \bu_{2,1}^\T, \cdots, \bu_{2,N}^\T]^\T$.

Then the matrix-vector product $\bC(\bu) \bd$, $\forall \bd \in \mathbb{R}^{N\!L}$, can be computed as:
\begin{align}
\bC(\bu) \bd = &- \bigg(\underbrace{ {\bh} \circ \bd + \frac{1}{2}\mathcal{T} \Big( \bK [\bd_1, \cdots, \bd_N]^\T \bU \Big) }
                _{\bA \bd:\, \mathcal{O}(NLR_K+NL^2) } \bigg)  \notag \\ %
               &- \bigg( \underbrace{ (\bu_1 \otimes \mathbf{1}) \circ \bd
                + \frac{1}{2} \left[ \bd_1^{\T} \mathrm{LTri}(\bu_{2,1}),
                                     \dots,
                                     \bd_N^{\T} \mathrm{LTri}(\bu_{2,N}) \right]^{\T}
                }
                _{(\sum_{i=1}^q u_i \bB_i)\bd:\, \mathcal{O}(NL^2)} \bigg),
\label{eq:mvprod_1}
\end{align}
where $\bd$ is decomposed as $\bd := [\bd_1^{\T}, \bd_2^{\T}, \dots, \bd_N^{\T}]^{\T}, \, \bd_1, \cdots, \bd_N \in \mathbb{R}^{L}$,
and $\mathcal{T}: \mathbb{R}^{N \times L} \rightarrow \mathbb{R}^{NL}$
is defined as $\mathcal{T}(\bX) = [X_{1,1}, \cdots, X_{1,L}, X_{2,1}, \cdots, X_{2,L}, \cdots, X_{N,L}]^\T$.
Given that $\bK$ has an $R_K$-rank approximation and $(\bC(\bu))_+$ has the rank of $R_Y$,
the overall computational complexity of solving \eqref{eq:sdp1}
using SDCut at each iteration is
\begin{align}
 \underbrace{\mathcal{O}\Big(NLR_Y^2 + \underbrace{(NLR_K+NL^2)}_{\mbox{matrix-vector product \eqref{eq:mvprod_1}}}R_Y\Big)}_{\mbox{Lanczos factorization}}
 \times \mbox{ \#Lanczos-Iters}.
\end{align}
The corresponding memory requirement is
$\mathcal{O}(NLR_Y+L^2+NR_K)$.
Note that the above formulation still need to be further validated by experiments.

{
\bibliographystyle{IEEEtran}
\bibliography{IEEEabrv,densecrf}

\begin{thebibliography}{10}
\providecommand{\url}[1]{#1}
\csname url@samestyle\endcsname
\providecommand{\newblock}{\relax}
\providecommand{\bibinfo}[2]{#2}
\providecommand{\BIBentrySTDinterwordspacing}{\spaceskip=0pt\relax}
\providecommand{\BIBentryALTinterwordstretchfactor}{4}
\providecommand{\BIBentryALTinterwordspacing}{\spaceskip=\fontdimen2\font plus
\BIBentryALTinterwordstretchfactor\fontdimen3\font minus
  \fontdimen4\font\relax}
\providecommand{\BIBforeignlanguage}[2]{{%
\expandafter\ifx\csname l@#1\endcsname\relax
\typeout{** WARNING: IEEEtran.bst: No hyphenation pattern has been}%
\typeout{** loaded for the language `#1'. Using the pattern for}%
\typeout{** the default language instead.}%
\else
\language=\csname l@#1\endcsname
\fi
#2}}
\providecommand{\BIBdecl}{\relax}
\BIBdecl

\bibitem{kappes2013comparative}
J.~H. Kappes, B.~Andres, F.~A. Hamprecht, C.~Schnorr, S.~Nowozin, D.~Batra,
  S.~Kim, B.~X. Kausler, J.~Lellmann, N.~Komodakis \emph{et~al.}, ``A
  comparative study of modern inference techniques for discrete energy
  minimization problems,'' in \emph{Proc. IEEE Conf. Comp. Vis. Patt. Recogn.},
  2013.

\bibitem{szeliski2008comparative}
R.~Szeliski, R.~Zabih, D.~Scharstein, O.~Veksler, V.~Kolmogorov, A.~Agarwala,
  M.~Tappen, and C.~Rother, ``A comparative study of energy minimization
  methods for {M}arkov random fields with smoothness-based priors,''
  \emph{{IEEE} Trans. Pattern Anal. Mach. Intell.}, vol.~30, no.~6, pp.
  1068--1080, 2008.

\bibitem{kumar2009analysis}
M.~P. Kumar, V.~Kolmogorov, and P.~H. Torr, ``An analysis of convex relaxations
  for {MAP} estimation of discrete {MRF}s,'' \emph{J. Mach. Learn. Res.},
  vol.~10, pp. 71--106, 2009.

\bibitem{kolmogorov2006convergent}
V.~Kolmogorov, ``Convergent tree-reweighted message passing for energy
  minimization,'' \emph{{IEEE} Trans. Pattern Anal. Mach. Intell.}, vol.~28,
  no.~10, pp. 1568--1583, 2006.

\bibitem{kappes2012bundle}
J.~H. Kappes, B.~Savchynskyy, and C.~Schnorr, ``A bundle approach to efficient
  map-inference by lagrangian relaxation,'' in \emph{Proc. IEEE Conf. Comp.
  Vis. Patt. Recogn.}, 2012.

\bibitem{rother2007optimizing}
C.~Rother, V.~Kolmogorov, V.~Lempitsky, and M.~Szummer, ``Optimizing binary
  {MRF}s via extended roof duality,'' in \emph{Proc. IEEE Conf. Comp. Vis.
  Patt. Recogn.}, 2007.

\bibitem{boykov2001fast}
Y.~Boykov, O.~Veksler, and R.~Zabih, ``Fast approximate energy minimization via
  graph cuts,'' \emph{{IEEE} Trans. Pattern Anal. Mach. Intell.}, vol.~23,
  no.~11, pp. 1222--1239, 2001.

\bibitem{kolmogorov2004energy}
V.~Kolmogorov and R.~Zabin, ``What energy functions can be minimized via graph
  cuts?'' \emph{{IEEE} Trans. Pattern Anal. Mach. Intell.}, vol.~26, no.~2, pp.
  147--159, 2004.

\bibitem{felzenszwalb2006efficient}
P.~F. Felzenszwalb and D.~P. Huttenlocher, ``Efficient belief propagation for
  early vision,'' \emph{Int. J. Comp. Vis.}, vol.~70, no.~1, pp. 41--54, 2006.

\bibitem{wainwright2005map}
M.~J. Wainwright, T.~S. Jaakkola, and A.~S. Willsky, ``{MAP} estimation via
  agreement on trees: message-passing and linear programming,''
  \emph{Information Theory, IEEE Transactions on}, vol.~51, no.~11, pp.
  3697--3717, 2005.

\bibitem{koltun2011efficient}
P.~Kr{\"a}henb{\"u}hl and V.~Koltun, ``Efficient inference in fully connected
  {CRF}s with gaussian edge potentials,'' in \emph{Proc. Adv. Neural Inf.
  Process. Syst.}, 2011.

\bibitem{krahenbuhl2013parameter}
------, ``Parameter learning and convergent inference for dense random
  fields,'' in \emph{Proc. Int. Conf. Mach. Learn.}, 2013.

\bibitem{zhang2012efficient}
Y.~Zhang and T.~Chen, ``Efficient inference for fully-connected {CRF}s with
  stationarity,'' in \emph{Proc. IEEE Conf. Comp. Vis. Patt. Recogn.}, 2012.

\bibitem{campbellfully}
N.~D. Campbell, K.~Subr, and J.~Kautz, ``Fully-connected {CRF}s with
  non-parametric pairwise potentials,'' in \emph{Proc. IEEE Conf. Comp. Vis.
  Patt. Recogn.}, 2013.

\bibitem{guibasscalable}
Q.~Huang, Y.~Chen, and L.~Guibas, ``Scalable semidefinite relaxation for
  maximum a posterior estimation,'' in \emph{Proc. Int. Conf. Mach. Learn.},
  2014.

\bibitem{peng2013cpvr}
P.~Wang, C.~Shen, and A.~Hengel, ``A fast semidefinite approach to solving
  binary quadratic problems,'' in \emph{Proc. IEEE Conf. Comp. Vis. Patt.
  Recogn.}, 2013.

\bibitem{wang2013relaxations}
S.~I. Wang, R.~Frostig, P.~Liang, and C.~D. Manning, ``Relaxations for
  inference in restricted boltzmann machines,'' in \emph{International
  Conference on Learning Representations}, 2014.

\bibitem{frostig2014simple}
R.~Frostig, S.~I. Wang, P.~S. Liang, and C.~D. Manning, ``Simple map inference
  via low-rank relaxations,'' in \emph{Proc. Adv. Neural Inf. Process. Syst.},
  2014.

\bibitem{adams2010fast}
A.~Adams, J.~Baek, and M.~A. Davis, ``Fast high-dimensional filtering using the
  permutohedral lattice,'' in \emph{EUROGRAPHICS}, 2010.

\bibitem{paris2006fast}
S.~Paris and F.~Durand, ``A fast approximation of the bilateral filter using a
  signal processing approach,'' in \emph{Proc. Eur. Conf. Comp. Vis.}, 2006.

\bibitem{Sturm98usingsedumi}
J.~F. Sturm, ``Using {SeDuMi} 1.02, a {MATLAB} toolbox for optimization over
  symmetric cones,'' \emph{Optimizat. Methods \& Softw.}, vol.~11, pp.
  625--653, 1999.

\bibitem{Toh99sdpt3}
K.~C. Toh, M.~Todd, and R.~H. Tütüncü, ``Sdpt3 -- a matlab software package
  for semidefinite programming,'' \emph{Opti. Meth. Soft.}, vol.~11, pp.
  545--581, 1999.

\bibitem{mosek}
\emph{The MOSEK optimization toolbox for MATLAB manual. Version 7.0 (Revision
  139)}, MOSEK ApS, Denmark.

\bibitem{reade1983eigenvalues}
J.~Reade, ``Eigenvalues of positive definite kernels,'' \emph{SIAM Journal on
  Mathematical Analysis}, vol.~14, no.~1, pp. 152--157, 1983.

\bibitem{reade1992eigenvalues}
J.~READE, ``Eigenvalues of smooth positive definite kernels,''
  \emph{Proceedings of the Edinburgh Mathematical Society}, vol.~35, pp.
  41--45, 1992.

\bibitem{chang1999eigenvalues}
C.-H. Chang and C.-W. Ha, ``On eigenvalues of differentiable positive definite
  kernels,'' \emph{Integral Equations and Operator Theory}, vol.~33, no.~1, pp.
  1--7, 1999.

\bibitem{buescu2007eigenvalue}
J.~Buescu and A.~Paixao, ``Eigenvalue distribution of mercer-like kernels,''
  \emph{Mathematische Nachrichten}, vol. 280, no. 9-10, pp. 984--995, 2007.

\bibitem{buescu2007eigenvalue2}
------, ``Eigenvalue distribution of positive definite kernels on unbounded
  domains,'' \emph{Integral Equations and Operator Theory}, vol.~57, no.~1, pp.
  19--41, 2007.

\bibitem{wathen2013spectral}
A.~J. Wathen and S.~Zhu, ``On the spectral distribution of kernel matrices
  related to radial basis functions,'' 2013.

\bibitem{williams2000effect}
C.~Williams and M.~Seeger, ``The effect of the input density distribution on
  kernel-based classifiers,'' in \emph{Proc. Int. Conf. Mach. Learn.}, 2000.

\bibitem{williams2001using}
------, ``Using the {N}ystr{\"o}m method to speed up kernel machines,'' in
  \emph{Proc. Adv. Neural Inf. Process. Syst.}, 2001.

\bibitem{drineas2005nystrom}
P.~Drineas and M.~W. Mahoney, ``On the {N}ystr{\"o}m method for approximating a
  gram matrix for improved kernel-based learning,'' \emph{J. Mach. Learn.
  Res.}, vol.~6, pp. 2153--2175, 2005.

\bibitem{fine2002efficient}
S.~Fine and K.~Scheinberg, ``Efficient {SVM} training using low-rank kernel
  representations,'' \emph{J. Mach. Learn. Res.}, vol.~2, pp. 243--264, 2002.

\bibitem{bach2003kernel}
F.~R. Bach and M.~I. Jordan, ``Kernel independent component analysis,''
  \emph{J. Mach. Learn. Res.}, vol.~3, pp. 1--48, 2003.

\bibitem{rahimi2007random}
A.~Rahimi and B.~Recht, ``Random features for large-scale kernel machines,'' in
  \emph{Proc. Adv. Neural Inf. Process. Syst.}, 2007.

\bibitem{rahimi2008weighted}
------, ``Weighted sums of random kitchen sinks: Replacing minimization with
  randomization in learning,'' in \emph{Proc. Adv. Neural Inf. Process. Syst.},
  2008.

\bibitem{vedaldi2012efficient}
A.~Vedaldi and A.~Zisserman, ``Efficient additive kernels via explicit feature
  maps,'' \emph{{IEEE} Trans. Pattern Anal. Mach. Intell.}, vol.~34, no.~3, pp.
  480--492, 2012.

\bibitem{Sharpanalysis}
F.~Bach, ``Sharp analysis of low-rank kernel matrix approximations,'' \emph{J.
  Mach. Learn. Res.}, vol.~30, pp. 185--209, 2013.

\bibitem{gittens2013revisiting}
A.~Gittens and M.~W. Mahoney, ``Revisiting the {N}ystr{\"o}m method for
  improved large-scale machine learning,'' in \emph{Proc. Int. Conf. Mach.
  Learn.}, 2013.

\bibitem{yang2012nystr}
T.~Yang, Y.-F. Li, M.~Mahdavi, R.~Jin, and Z.-H. Zhou, ``{N}ystr{\"o}m method
  vs random {F}ourier features: A theoretical and empirical comparison,'' in
  \emph{Proc. Adv. Neural Inf. Process. Syst.}, 2012.

\bibitem{zhang2008improved}
K.~Zhang, I.~W. Tsang, and J.~T. Kwok, ``Improved {N}ystr{\"o}m low-rank
  approximation and error analysis,'' in \emph{Proc. Int. Conf. Mach. Learn.},
  2008.

\bibitem{sorensen1997implicitly}
D.~C. Sorensen, \emph{Implicitly restarted {A}rnoldi/{L}anczos methods for
  large scale eigenvalue calculations}.\hskip 1em plus 0.5em minus 0.4em\relax
  Springer, 1997.

\bibitem{briet2010positive}
J.~Bri{\"e}t, F.~M. de~Oliveira~Filho, and F.~Vallentin, ``The positive
  semidefinite grothendieck problem with rank constraint,'' in \emph{Automata,
  Languages and Programming}.\hskip 1em plus 0.5em minus 0.4em\relax Springer,
  2010, pp. 31--42.

\bibitem{Joulin2010dis}
A.~Joulin, F.~Bach, and J.~Ponce, ``Discriminative clustering for image
  co-segmentation,'' in \emph{Proc. IEEE Conf. Comp. Vis. Patt. Recogn.}, 2010.

\end{thebibliography}
}

\end{document}